\PassOptionsToPackage{table}{xcolor}

\documentclass[nohyperref]{article}

\usepackage[accepted]{icml2023}

\usepackage{scalefnt,letltxmacro}
\LetLtxMacro{\oldtextsc}{\textsc}
\renewcommand{\textsc}[1]{\oldtextsc{\scalefont{1.10}#1}}
\usepackage[acronym,smallcaps,nowarn]{glossaries}[=v4.46]
\glsdisablehyper
\makeglossaries
\usepackage{xspace}
\usepackage{float}
\usepackage{physics}
\usepackage{lipsum}

\usepackage{enumitem}
\usepackage{amssymb}
\usepackage{mathtools}
\usepackage{amsfonts}
\usepackage{amsmath}
\usepackage{amsthm}
\usepackage{booktabs}
\usepackage[]{microtype}

\usepackage{pifont}
\newcommand{\cmark}{\textcolor{green!60!black}{\ding{51}}\xspace}
\newcommand{\xmark}{\textcolor{red!60!black}{\ding{55}}\xspace}

\usepackage[vlined,linesnumbered,ruled]{algorithm2e}

\SetCommentSty{mycommfont}
\SetKwInput{KwInput}{Input} 
\SetKwInput{KwOutput}{Output}

\usepackage{xcolor}
\definecolor{mylightgray}{gray}{0.94}

\definecolor{color_vae}{HTML}{91058A}
\definecolor{color_cvae}{HTML}{DA70D6}
\definecolor{color_gppvae}{HTML}{32CD32}
\definecolor{color_gpvae}{HTML}{039796}
\definecolor{color_gp_vae}{HTML}{00FFFF}
\definecolor{color_sgp_vae}{HTML}{99CB32}
\definecolor{color_svgp_vae}{HTML}{19198c}
\definecolor{color_bae}{HTML}{AB6345}
\definecolor{color_gpbae}{HTML}{FF9B38}
\definecolor{color_bsgpae}{HTML}{DF2B4F}
\definecolor{color_blue_trajectory}{HTML}{DF2B4F}
\definecolor{color_blue_trajectory}{HTML}{3E9EFF}
\definecolor{color_orange_trajectory}{HTML}{DF5E33}

\usepackage[colorlinks,linktoc=all]{hyperref}
\usepackage[all]{hypcap}
\definecolor{frenchblue}{rgb}{0.01171875, 0.0078125, 0.4375}
\hypersetup{citecolor=frenchblue}
\hypersetup{linkcolor=frenchblue}
\hypersetup{urlcolor =frenchblue}

\usepackage[capitalize,nameinlink]{cleveref}
\crefname{section}{\S}{\S\S}
\Crefname{section}{\S}{\S\S}
\creflabelformat{equation}{#2\textup{#1}#3}

\makeatletter
\@ifundefined{c@rownum}{%
  \let\c@rownum\rownum
}{}
\@ifundefined{therownum}{%
  \def\therownum{\@arabic\rownum}%
}{}
\makeatother

\makeatletter
\newcommand*{\addFileDependency}[1]{%
	\typeout{(#1)}
	\@addtofilelist{#1}
	\IfFileExists{#1}{}{\typeout{No file #1.}}
}
\makeatother

\usepackage[font=small,labelfont=bf,tableposition=top]{caption}
\usepackage{tikz}
\usepackage{graphicx}
\usepackage[]{subcaption}   %

\usepackage{pgfplots}
\pgfplotsset{compat=1.6}
\usepgfplotslibrary{groupplots}
\tikzstyle{every picture}+=[font=\sffamily]
\tikzstyle{optimized} = [circle,fill=white,draw=black, dashed,inner sep=1pt, minimum size=20pt, font=\fontsize{10}{10}\selectfont, node distance=1]
\pgfkeys{/pgf/number format/.cd,1000 sep={}}
\pgfplotsset{
	tick label style = {font=\sffamily},
	every axis label/.append style={font=\sffamily},
	typeset ticklabels with strut,
}
\pgfplotsset{every axis/.append style={
			every x tick label/.append style={font=\fontsize{6pt}{6pt}\sffamily, yshift=.5ex,},
			every y tick label/.append style={font=\fontsize{6pt}{6pt}\sffamily, xshift=.5ex},
			every y label/.append style={xshift=10ex, font=\sffamily},
			every x label/.append style={yshift=3ex, font=\sffamily},
			every title/.append style={font=\sffamily}
		},
}
\pgfplotsset{
  xticklabel={$\mathsf{\pgfmathprintnumber{\tick}}$},
  yticklabel={$\mathsf{\pgfmathprintnumber{\tick}}$},
}
\pgfplotsset{every axis title/.append style={yshift=-1ex}}
\newlength\figureheight
\newlength\figurewidth
\usepgfplotslibrary{external}
\tikzexternalize[prefix=tikz/, figure name=output-figure]

\usepackage[colorinlistoftodos,
	textsize=scriptsize,
	linecolor=red!30,
	bordercolor=red!30,
	backgroundcolor=red!10]{todonotes}
\renewcommand{\todo}[2][]{\tikzexternaldisable\@todo[#1]{#2}\tikzexternalenable}

\usepackage[most]{tcolorbox}
{\begin{tcolorbox}[toprule=2mm,left=4pt,right=4pt,top=0pt,bottom=1pt,boxsep=2pt, before skip = 2ex, after skip =2ex]}{\end{tcolorbox}}
\tcbset{boxsep=4pt,left=2pt,right=2pt,top=-0pt,bottom=0pt}

\usepackage{url}

\usepackage{xr}
\externaldocument{supplementary}

\usepackage{subcaption}

\usepackage{tikz}
\usepackage{graphicx}
\usepackage{xcolor}
\usepackage{amsmath}
\usepackage{amssymb}
\usepackage{bm}

\usepackage{adjustbox}
\usepackage{multirow}
\usepackage{mathtools}

\usepackage{xspace}

\newacronym{MAP}{map}{maximum-a-posteriori}
\newacronym{MLE}{mle}{maximum likelihood estimation}
\newacronym{MNLL}{mnll}{mean negative loglikelihood}
\newacronym{NLL}{nll}{negative loglikelihood}
\newacronym{LL}{ll}{log-likelihood}
\newacronym{RMSE}{rmse}{root mean squared error}
\newacronym{ECE}{ece}{expected calibration error}
\newacronym{FID}{fid}{Fr\'echet Inception Distance}
\newacronym{MSE}{mse}{mean squared error}

\newacronym{PCA}{pca}{principal component analysis}
\newacronym{AE}{ae}{autoencoder}
\newacronym{WAE}{wae}{Wasserstein Autoencoder}
\newacronym{VAE}{vae}{Variational Autoencoder}
\newacronym{BAE}{bae}{Bayesian autoencoder}
\newacronym{CDF}{cdf}{cumulative density function}
\newacronym{GAN}{gan}{Generative Adversarial Network}

\newacronym{MC}{mc}{Monte Carlo}
\newacronym{MCMC}{mcmc}{Markov chain Monte Carlo}
\newacronym{HMC}{hmc}{Hamiltonian Monte Carlo}
\newacronym{MH}{mh}{Metropolis-Hastings}
\newacronym{NUTS}{nuts}{no-u-turn sampler}
\newacronym{SGHMC}{sghmc}{stochastic gradient Hamiltonian Monte Carlo}

\newacronym{DGP}{dgp}{deep Gaussian process} %
\newacronym{GPLVM}{gplvm}{Gaussian process latent variable model}
\newacronym{DPMM}{dpmm}{Dirichlet Process Mixture Model}

\newacronym{VFE}{vfe}{variational free energy}

\newacronym[firstplural=Gaussian Processes]{GP}{gp}{Gaussian Process}

\newacronym{VI}{vi}{variational inference}

\newacronym{ELBO}{elbo}{evidence lower bound}
\newacronym{NELBO}{nelbo}{negative evidence lower bound}
\newacronym{ELL}{ell}{expected log likelihood}
\newacronym{KL}{kl}{Kullback-Leibler divergence}
\newacronym{AUC}{auc}{area under the curve}

\newacronym[firstplural=Bayesian neural networks]{BNN}{bnn}{Bayesian neural network}
\newacronym[firstplural=deep neural networks]{DNN}{dnn}{deep neural network}
\newacronym[]{CNN}{cnn}{convolutional neural network}
\newacronym{MLP}{mlp}{multilayer perceptron}
\newacronym{NN}{nn}{neural network}
\newacronym{RELU}{ReLU}{rectified linear unit}

\newacronym{NF}{nf}{normalizing flow}

\newacronym{RBF}{rbf}{radial basis function}
\newacronym{ARD}{ard}{automatic relevance determination}

\newacronym{RKHS}{rkhs}{reproducing kernel Hilbert space}

\newacronym{OT}{ot}{optimal transport}
\newacronym{WD}{wd}{Wasserstein distance}
\newacronym{SWD}{swd}{sliced-Wasserstein distance}
\newacronym{DSWD}{dswd}{distributional sliced-Wasserstein distance}
\newacronym{BSGPAE}{bsgpae}{Bayesian Sparse Gaussian Process Autoencoder}
\newacronym{GPBAE}{{gp}-{bae}}{Gaussian Process Bayesian Autoencoder}
\newacronym{CVAE}{cvae}{Conditional Variational Autoencoder}
\newacronym{SGPBAE}{{sgp}-{bae}}{Sparse Gaussian Process Bayesian Autoencoder}

\newcommand{\adrflvm}{\textsl{advised}\textsc{rflvm}\xspace}

\makeatletter

\makeatother

\DeclarePairedDelimiterX{\infdivx}[2]{[}{]}{%
  #1\;\delimsize\|\;#2%
}

\newcommand{\cN}{\mathcal{N}}

\def\x{{\mathbf x}}
\def\y{{\mathbf y}}

\def\u{{\mathbf u}}

\def\w{{\mathbf w}}

\def\vx{{\mathbf X}}
\def\va{{\mathbf A}}
\def\vy{{\mathbf Y}}
\def\vw{{\mathbf W}}

\def\vk{{\mathbf K}}
\def\vu{{\mathbf U}}
\def\vl{{\mathbf L}}
\def\vv{{\mathbf V}}

\def\cN{{\cal N}}

\def\bzeta{{\bm{\zeta}}}
\def\btheta{{{\bm{\theta}}}}

\newcommand{\cred}[1]{{\color{red}#1}}
\newcommand{\cblue}[1]{{\color{blue}#1}}

\theoremstyle{plain}
\newtheorem{theorem}{Theorem}[section]
\newtheorem{proposition}[theorem]{Proposition}
\newtheorem{lemma}[theorem]{Lemma}
\newtheorem{corollary}[theorem]{Corollary}
\theoremstyle{definition}
\newtheorem{definition}[theorem]{Definition}

\theoremstyle{remark}
\newtheorem{remark}[theorem]{Remark}

\icmltitlerunning{{Preventing Model Collapse in Gaussian Process Latent Variable Models}\hfill\thepage}

\begin{document}

\twocolumn[
\icmltitle{Preventing Model Collapse in Gaussian Process Latent Variable Models}

\icmlsetsymbol{equal}{*}
\begin{icmlauthorlist}
\icmlauthor{Ying Li}{equal,hku}
\icmlauthor{Zhidi Lin}{equal,cuhksz}
\icmlauthor{Feng Yin}{cuhksz}
\icmlauthor{Michael Minyi Zhang}{hku}
\end{icmlauthorlist}

\icmlaffiliation{hku}{Department of Statistics \& Actuarial Science, The University of Hong Kong, Hong Kong, China}
\icmlaffiliation{cuhksz}{School of Science \& Engineering, The Chinese University of Hong Kong, Shenzhen,  China}

\icmlcorrespondingauthor{Feng Yin}{yinfeng@cuhk.edu.cn}
\icmlcorrespondingauthor{Michael Minyi Zhang}{mzhang18@hku.hk}

\icmlkeywords{Machine Learning, ICML}

\vskip 0.3in
]

\printAffiliationsAndNotice{\icmlEqualContribution}  %

\begin{abstract} 
    Gaussian process latent variable models (GPLVMs) are a versatile family of unsupervised learning models commonly used for dimensionality reduction. 
    However, common challenges in modeling data with GPLVMs include inadequate kernel flexibility and improper selection of the projection noise, leading to a type of model collapse characterized by vague latent representations that do not reflect the underlying data structure.
    This paper addresses these issues by, first, theoretically examining the impact of projection variance on model collapse through the lens of a linear GPLVM.  
    Second, we tackle model collapse due to inadequate kernel flexibility by integrating the spectral mixture (SM) kernel and a differentiable random Fourier feature (RFF) kernel approximation, which ensures computational scalability and efficiency through off-the-shelf automatic differentiation tools for learning the kernel hyperparameters, projection variance, and latent representations within the variational inference framework.
    The proposed GPLVM, named \adrflvm, is evaluated across diverse datasets and consistently outperforms various salient competing models, including state-of-the-art variational autoencoders (VAEs) and other GPLVM variants, in terms of informative latent representations and missing data imputation.
\end{abstract}\vspace{-.02in}

\addtocontents{toc}{\protect\setcounter{tocdepth}{0}}

\section{Introduction} 
\label{sec:introduction} 
A latent variable model (LVM) 
represents each observed datum $\y_i \!\in\! \mathbb{R}^{M}$ using a low-dimensional latent variable $\x_i \!\in\! \mathbb{R}^Q$, where $ Q\! \ll \! M$. 
As a classic tool in statistical analysis, LVMs unveil hidden structures within the data, providing valuable insights into intricate systems across various domains \cite{bishop:2006:PRML}, such as signal processing \cite{zarzoso2010latent} and economics \cite{aigner1984latent}. 

One of the critical aspects of LVM is the choice of mapping function from the latent variables to the observed variables. A series of early works assumed that the mapping is linear, as seen in factor analysis \cite{kim1978factor}, principal component analysis (PCA) \cite{pearson1901liii,tipping1999probabilistic}, and canonical correlation analysis (CCA) \cite{hotelling1936relations}, among others. However, the linearity assumption limits the capacity of these models to capture complex, nonlinear patterns in the data, rendering them incapable of providing an optimal latent representation for complex data sets. To tackle this issue, more advanced methods like the variational autoencoder (VAE) \cite{kingma2019introduction,kingma2013auto} utilizes neural networks, while the Gaussian process latent variable model (GPLVM) \cite{lawrence2005probabilistic,titsias2010bayesian} employs the 
Gaussian process (GP) \cite{williams2006gaussian}, as the nonlinear mapping modules in LVM, providing enhanced capacity in capturing nonlinear relationships. 

GPLVMs benefit from the incorporation of the GP, which offers enhanced interpretability through explicit 
uncertainty calibration and the interpretable kernel functions \cite{theodoridis2020machine,cheng2022rethinking}. Additionally, the implicit regularization imposed by the GP prior prevents GPLVMs from severe overfitting \cite{lotfi2022bayesian, wilson2020bayesian}. 
Consequently, GPLVMs often achieve superior performance in practice, even with small sample sizes. Due to these favorable and unique properties, GPLVM has been applied to various applications, 
such as intrusion detection \cite{abolhasanzadeh2015gaussian}, image recognition \cite{eleftheriadis2013shared,li2017shared}, human pose estimation \cite{ek2008gaussian}, and image-text retrieval \cite{song2015similarity}.

Despite the popularity of GPLVM and the recent efforts dedicated to enhancing its learning and inference capabilities \cite{titsias2010bayesian,gundersen2021latent,ramchandran2021latent,de2021learning,lalchand2022generalised,zhang2023bayesian}, the existing work still lacks an in-depth understanding of how to optimally learn a compact and informative latent representation using the GPLVM. This ambiguity hinders our ability to overcome ``model collapse'' (see Definition~\ref{definition_model_collapse}), which is characterized by learning vague latent representations with practical implementations. This paper elucidates the two key factors that lead to model collapse--the improper selection of model projection noise and inadequate kernel flexibility. To this end, we propose a new GPLVM that is immune to model collapse. Our contributions are: \vspace{-.1in} 
\begin{itemize}
    \item  We provide a theoretical investigation of the impact that projection variance has on encouraging model collapse through the lens of linear GPLVMs. Our empirical validation further demonstrates the relevance of these analyses to general GPLVMs. These findings collectively emphasize the importance of learning the model projection variance.
    \vspace{-.05in}
    \item  We propose a novel GPLVM that integrates a spectral mixture (SM) kernel \cite{wilson2013gaussian}, capable of approximating arbitrary stationary kernels, to overcome model collapse arising from inadequate kernel flexibility. To reduce computational complexity and avoid introducing additional parameters like those in inducing point-based sparse GP methods \cite{titsias2009variational, hensman2013gaussian}, we leverage a differentiable random Fourier feature (RFF) approximation for the SM kernel \cite{jung2022efficient, lopez2014randomized}. This deliberate introduction of differentiability in the RFF approximation allows us to readily use modern off-the-shelf automatic differentiation tools \cite{paszke2019pytorch} to efficiently and scalably learn the kernel hyperparameters, projection variance, and latent representations of the proposed GPLVM within a variational inference framework \cite{bishop:2006:PRML}.
    %
    %
    \vspace{-.05in}
    \item Our proposed GPLVM is subjected to rigorous evaluation across diverse datasets, consistently outperforming various models, including the state-of-the-art (SOTA) VAEs and some representative GPLVM variants. Specifically, it excels in learning compact and informative latent representations, addressing the issues of model collapse in existing GPLVMs. 
\end{itemize}

\section{Preliminaries}
\label{sec:background}

\textbf{Gaussian Process.}
The GP is a generalization of the Gaussian distribution defined across infinite index sets \cite{williams2006gaussian}, thereby enabling the specification of distribution over functions $f: \mathbb{R}^Q \!\mapsto\! \mathbb{R}$. A GP is fully characterized by its mean function $\mu({\x})$, frequently set as zero, and its covariance function, a.k.a. kernel function, $k({\x}, {\x}^\prime; \bm{\theta}_{gp})$, where $\bm{\theta}_{gp}$ is a set of hyperparameters that needs to be tuned for model selection. 
According to the definition of GP, the function values $\mathbf{f} \!=\! \{f(\x_i)\}_{i=1}^N$ at any finite set of points $\vx \!=\! \{\x_i\}_{i=1}^N$ follow a joint Gaussian distribution, i.e.,
\begin{equation}
\setlength{\abovedisplayskip}{4.5pt}
\setlength{\belowdisplayskip}{4.5pt}
   \mathbf{f} \mid \vx = \mathcal{N}(\mathbf{f} \mid \bm{0}, \mathbf{K}), 
\end{equation}
where $\mathbf{K}$ denotes the covariance matrix evaluated on the finite input $\vx$ with $[\mathbf{K}]_{i,j} \!=\! k({{\x}}_i, {\x}_j)$. 
Given the observed function values $\mathbf{f}$ at the input $\vx$, the GP prediction distribution,  $p(f(\bm{x}_*) \vert {\x}_*, \mathbf{f}, \vx)$, at any new input ${\x}_*$, is Gaussian, fully characterized by the posterior mean $\xi$ and the posterior variance $\Xi$.  Concretely, 
\begin{subequations}
\label{eq:GP_posterior}
\setlength{\abovedisplayskip}{4.5pt}
\setlength{\belowdisplayskip}{4.5pt}
    \begin{align}
        & \xi(\mathbf{x}_*) = \mathbf{K}_{\mathbf{x}_*, \mathbf{X}} \mathbf{K}^{-1} { \mathbf{f}}, 
        \label{eq:post_mean}\\
        & \Xi(\mathbf{x}_*) = k(\mathbf{x}_*, \mathbf{x}_*)  - \mathbf{K}_{\mathbf{x}_*, \mathbf{X}} \mathbf{K}^{-1} \mathbf{K}_{\mathbf{x}_*, \mathbf{X}}^\top,
        \label{eq:post_cov}
    \end{align}
\end{subequations}
where $\mathbf{K}_{\mathbf{x}_*, \mathbf{X}}$ is the cross covariance matrix evaluated on the new input ${\x}_*$ and the observed input ${\vx}$.

\textbf{Spectral Mixture Kernel.} The behavior of a GP-distributed function is generally defined by the choice of the kernel function. 
However, subjectively selecting an appropriate kernel for complex applications is considerably challenging. By resorting to the fact that, according to Bochner's theorem, any stationary kernel and its spectral density are Fourier duals, we know that one type of popular kernel learning methods is to approximate the spectral density of the underlying stationary kernel \cite{bochner1934theorem}. In the spectral mixture (SM) kernel \cite{wilson2013gaussian}, the underlying spectral density is approximated using a Gaussian mixture:
\begin{equation}
\label{eq:SM_density}
\setlength{\abovedisplayskip}{6pt}
\setlength{\belowdisplayskip}{4.5pt}
\begin{aligned}
    &\!\!\!  s_i(\w) \!=\! \frac{\mathcal{N}(\mathbf{w} \vert \bm{\mu}_i, \operatorname{diag}(\bm{\sigma}_i^2)) \!+\! \mathcal{N}(-\mathbf{w} \vert \bm{\mu}_i, \operatorname{diag}(\bm{\sigma}_i^2))}{2},\\
    &\!\!\! p_{\mathrm{sm}}(\mathbf{w})= \sum_{i=1}^m \alpha_i s_i(\w),  
\end{aligned}
\end{equation}
where $\alpha_i$ is the mixture weight, $\bm{\mu}_i \!\in\! \mathbb{R}^Q$ and $\bm{\sigma}_i^2\!\in\! \mathbb{R}^Q$ are the mean and variance of the $i$-th Gaussian density, $m$ is the number of mixture components. Taking the inverse Fourier transform, we readily get the SM kernel, $ k_{\mathrm{sm}}(\x, \x^{\prime})=$ 
\begin{equation}
\setlength{\abovedisplayskip}{4.5pt}
\setlength{\belowdisplayskip}{4.5pt}
 \sum_{i=1}^m \alpha_i \exp \left(- 2 \pi^2 \|\boldsymbol{\sigma}_i^\top (\x-\x^{\prime}) \|^2 \right) \cos \left( 2\pi \boldsymbol \mu_i^{\top}\left(\x-\x^{\prime}\right)\right),
 \nonumber
\end{equation}
where $\bm{\theta}_{\mathrm{sm}} \!=\! \{\alpha_i, \bm{\mu}_i, \bm{\sigma^2_i}\}_{i=1}^m$ is the set of hyperparameters. Given that Gaussian mixture is dense, the SM kernel is guaranteed to be able to approximate any stationary kernel arbitrarily well  \cite{wilson2013gaussian}.

\textbf{Gaussian Process Latent Variable Models.} 
The GPLVM is a generative model where each observed datum $\y_i \!\in\! \mathbb{R}^{M}$ is generated through a noisy Gaussian process from a latent variable $\x_i \!\in\! \mathbb{R}^{Q}$ \citep{lawrence2005probabilistic}:
\begin{equation}
    \label{eq:GPLVM_generative}
    \setlength{\abovedisplayskip}{4.5pt}
    \setlength{\belowdisplayskip}{4.5pt}
\y_i = f(\x_i) + \bm{v}_i,  \ \ \bm{v}_i \sim \cN(\bm{0}, \sigma^2 \mathbf{I}_M),
\end{equation} 
where $f(\cdot)$ follows a zero-mean GP prior, and $\sigma^2$ is the projection variance, which can be interpreted as information lost in dimensionality reduction. A standard normal density is conventionally assigned as the prior to the latent variable, $\x_i \!\sim\! \cN(\bm{0}, \mathbf{I}_Q)$. 
In the case of having $N$ observations $\vy \!\in\! \mathbb{R}^{N \!\times\! M}$ from the GPLVM, the marginal likelihood after integrating out the latent GP, is expressed as:
\begin{equation}
    \label{eq:GPLVM_marginal_likelihood}
    \setlength{\abovedisplayskip}{4.5pt}
    \setlength{\belowdisplayskip}{4.5pt}
     p(\vy \mid \vx) =  \prod_{j=1}^M \cN(\y_{:,j} \mid \bm{0}, \ \mathbf{K} + \sigma^2 \mathbf{I}_N )  
\end{equation} 
where 
$\y_{:,j} \!\in\! \mathbb{R}^N$ denotes the $j$-th column of $\vy$.  
Consequently, the maximum likelihood estimate (MLE) of the latent variables $\vx$ can be obtained by solving the following optimization problem, 
\begin{equation}
\label{eq:GPLVM_MLE}
\setlength{\abovedisplayskip}{4.5pt}
\setlength{\belowdisplayskip}{4.5pt}
 \hat{\vx} =  \max_{\vx} \ L(\vx) =\max_{\vx} \ \log  p(\vy \mid \vx), 
\end{equation}
using e.g. gradient-based methods \citep{kingma2015adam}.  

In the context of GPLVM, the primary objective is to obtain a compact and informative latent representation of the observed data. Unlike the general definition of model collapse in machine learning models, which is primarily characterized by a gradual shift toward homogeneous output and increased deviations from accurate predictions \cite{bau2019seeing},  model collapse in GPLVM is closely tied to the effectiveness of latent variable inference, as outlined below:
\begin{definition}[\textbf{Model Collapse}]
\label{definition_model_collapse}
When the latent variables in GPLVMs become more homogeneous and/or their crucial feature details are sacrificed or distorted, we identify this phenomenon as model collapse.
\end{definition}  
Definition \ref{definition_model_collapse} posits that two distinct manifestations of model collapse can be identified: distortion and homogeneity. Distortion occurs when the latent manifold, representing the underlying data structure, is warped or twisted, failing to accurately describe the underlying data structures. Homogeneity, on the other hand, manifests as a reduction in diversity among latent variables, resulting in a loss of crucial data features.

\begin{figure}[t!]
    \centering
    \includegraphics[width=0.96\linewidth]{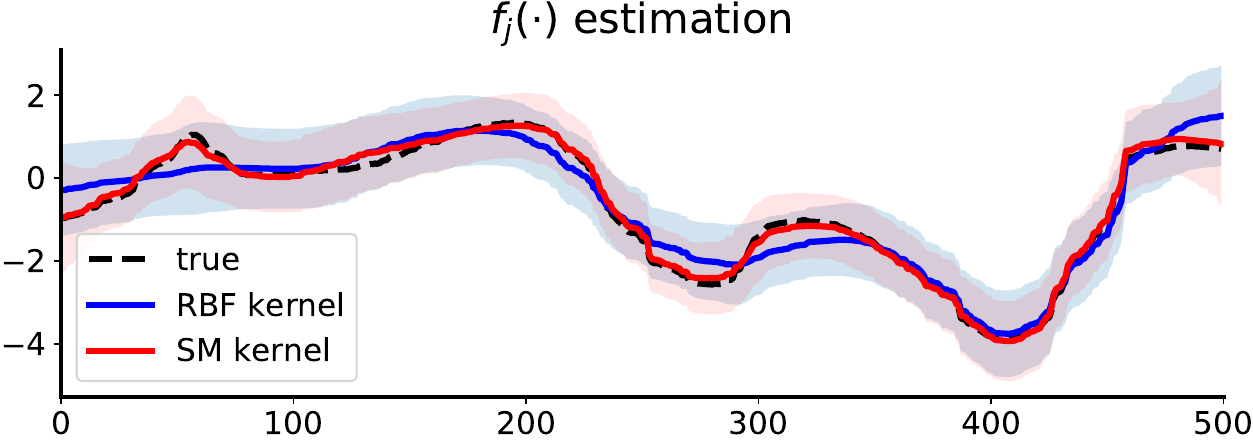} \vspace{.1in}
    
    \subfloat[Normal]{\includegraphics[width=0.32\linewidth]{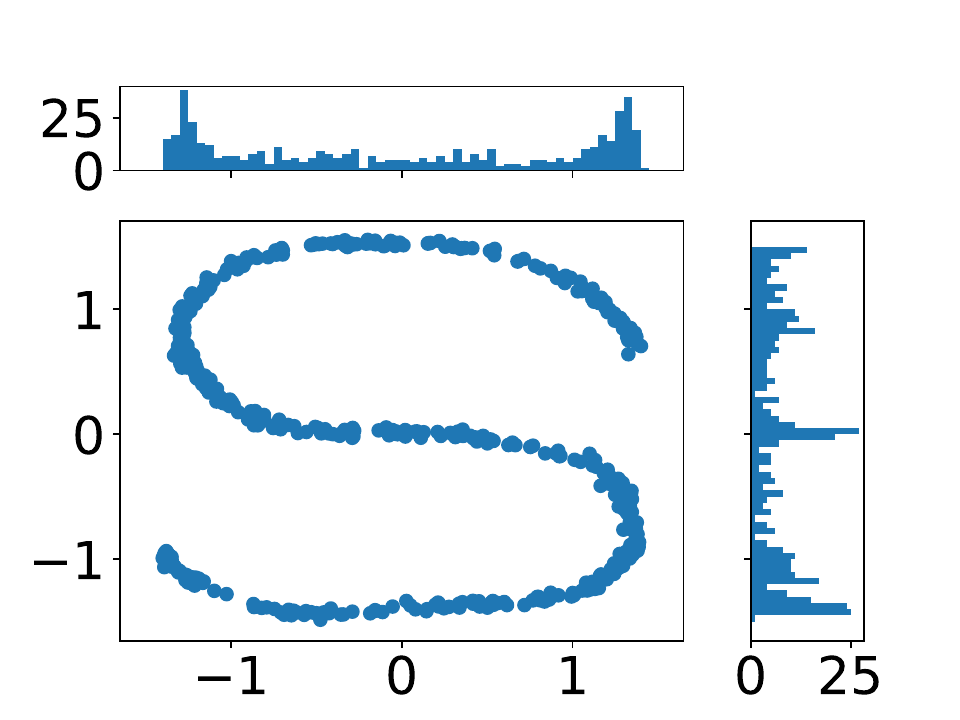}
    \label{subfig:correct_estimation}
    }
    \subfloat[Distortion]{\includegraphics[width=0.32\linewidth]{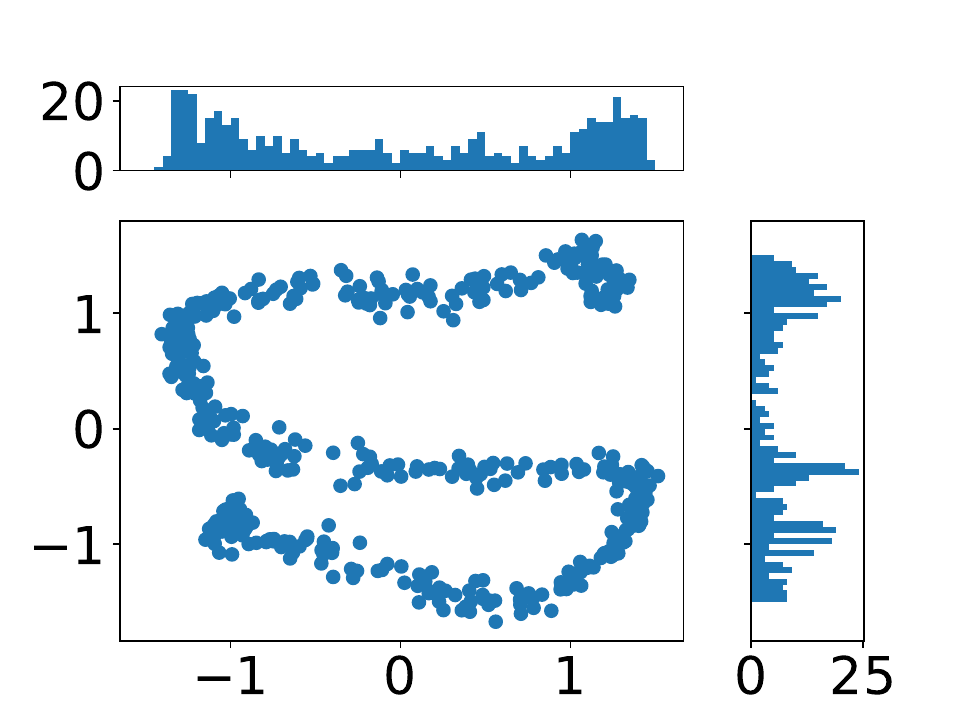}
    \label{subfig:distortion}
    } ~
    \subfloat[Homogeneity]{\includegraphics[width=0.32\linewidth]{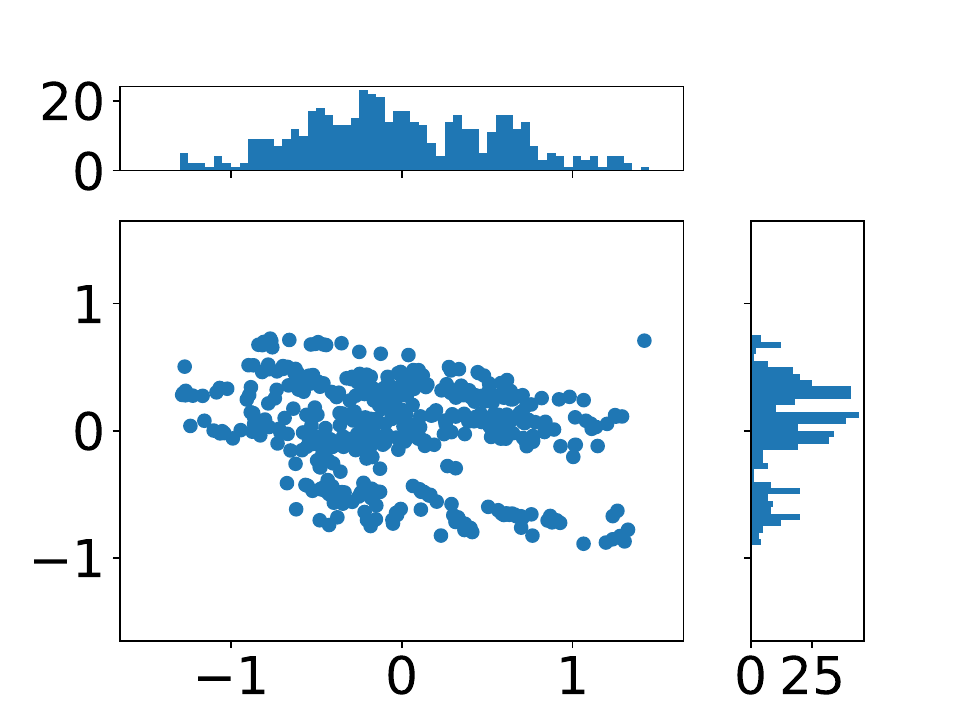}
    \label{subfig:collapse}
    } ~
    \vspace{-.07in}
    \caption{
    \textbf{Top}: Latent function estimation using GPLVM with preliminary (\cblue{\textbf{---}}) or advanced/flexible kernels (\cred{\textbf{---}}). \textbf{Bottom:} (\textbf{\subref{subfig:correct_estimation}}): 2-D S-shape latent manifold learned by the proposed \adrflvm.  (\textbf{\subref{subfig:distortion}}): 2-D S-shape latent manifold learned by using a preliminary (RBF) kernel. 
    (\textbf{\subref{subfig:collapse}}): 2-D S-shape latent manifold learned without optimizing projection variance. (\textbf{\subref{subfig:correct_estimation}})--(\textbf{\subref{subfig:collapse}}) also show histograms in different dimensions of the learned latent manifold.
    }
    \label{fig:different_kernel_illustration}
\end{figure}

\section{Causes of Model Collapse} 
In this section, we will elucidate that the distortion and homogeneity in the latent manifold are attributed to two crucial factors: improper selection of projection variance and inadequate kernel function flexibility. To further illustrate these concepts, Figs.~\ref{subfig:distortion} and \ref{subfig:collapse} depict examples where the learned latent manifolds are distorted and homogeneous, respectively.
\label{sec:SM-GPLVM}  

\subsection{Projection Variance Matters} 
\label{subsec:connection_dpcs} 
This subsection investigates the impact of projection variance on encouraging model collapse. To achieve this, we scrutinize the stationary points with respect to the latent variables $\vx$, and establish their connection to the projection variance. However, the computation of the stationary points is intractable due to the non-convex and nonlinear nature of GPLVMs in general. In light of this, we alternatively seek the lens of the linear GPLVM by assuming that the kernel function used in the GPLVM is the inner product kernel, i.e., $k(\x, \x^\prime) \!=\! \x^\top\x^\prime$. This simplified GPLVM is also known as the dual probabilistic principal component analysis (DPPCA) model \cite{lawrence2005probabilistic}. See more details in App.~\ref{app:relation_dpca_gplvm}. The main analyses are outlined below.
\begin{theorem}
    \label{theo:dpca_stationary_main_text}
    Given the maximization problem in Eq.~\eqref{eq:GPLVM_MLE}, the stationary points, $\hat{\vx}$, in the case of the linear GPLVM is:
        \begin{align}
            \label{eq:ppca_stationary_X}
            \setlength{\abovedisplayskip}{4pt}
            \setlength{\belowdisplayskip}{4pt}
            \hat{\vx} &= \vu_{Q} \left( \boldsymbol{\Lambda}_{Q} - {\sigma}^2 \mathbf{I}_{Q} \right)^{1/2} \mathbf{R}, 
        \end{align}
    where $\vu_{Q} \!\triangleq\! \left[ \u_1, \ldots, \u_Q \right] \!\in\! \mathbb{R}^{N \times Q} $ represents arbitrary eigenvectors of $\frac{1}{M} \vy \vy^{\top}$, $\mathbf{R} \in \mathbb{R}^{Q \times Q}$ is an arbitrary orthogonal matrix, and $\boldsymbol{\Lambda}_{Q} \!\in\! \mathbb{R}^{Q \times Q}$ is a diagonal matrix with:
    \begin{equation}
        \setlength{\abovedisplayskip}{4.5pt}
        \setlength{\belowdisplayskip}{4.5pt}
       \!\!\! [\boldsymbol{\Lambda}_{Q}]_{i, i} \!=\!\! \left\{
            \begin{aligned}
                & \lambda_i,        \operatorname {~the~corresponding~eigenvalue~to~} \mathbf{u}_i, \textbf{~or}\\
                & {\sigma}^2.
            \end{aligned}
        \right.
        \nonumber
    \end{equation}
\end{theorem}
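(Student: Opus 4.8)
The plan is to reduce the claim to the classical Tipping--Bishop stationary-point analysis of probabilistic PCA, recast in the \emph{dual} parameterization where the latent coordinates $\mathbf{X}$, rather than a loading matrix, are the free variables (the identification of the linear GPLVM with DPPCA is recalled in App.~\ref{app:relation_dpca_gplvm}). First I would write the objective explicitly: since $k(\mathbf{x},\mathbf{x}')=\mathbf{x}^\top\mathbf{x}'$ gives $\mathbf{K}=\mathbf{X}\mathbf{X}^\top$, set $\mathbf{C}\defeq\mathbf{X}\mathbf{X}^\top+\sigma^2\mathbf{I}_N$, so that from Eq.~\eqref{eq:GPLVM_marginal_likelihood},
\begin{equation}
    L(\mathbf{X}) = -\tfrac{M}{2}\log|\mathbf{C}| - \tfrac{1}{2}\operatorname{tr}\!\big(\mathbf{C}^{-1}\mathbf{Y}\mathbf{Y}^\top\big) + \text{const}.
    \nonumber
\end{equation}
Two structural remarks drive the whole argument: $L$ depends on $\mathbf{X}$ only through $\mathbf{X}\mathbf{X}^\top$, so $L(\mathbf{X}\mathbf{R})=L(\mathbf{X})$ for every orthogonal $\mathbf{R}\in\mathbb{R}^{Q\times Q}$ (this invariance is exactly what produces the free orthogonal matrix $\mathbf{R}$ in the statement), and $\mathbf{C}$ shares its eigenvectors with $\mathbf{X}\mathbf{X}^\top$.

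Next I would compute the gradient. Using $\d{\log|\mathbf{C}|}=\operatorname{tr}(\mathbf{C}^{-1}\d{\mathbf{C}})$, $\d{\mathbf{C}^{-1}}=-\mathbf{C}^{-1}(\d{\mathbf{C}})\mathbf{C}^{-1}$, and $\d{\mathbf{C}}=(\d{\mathbf{X}})\mathbf{X}^\top+\mathbf{X}(\d{\mathbf{X}})^\top$, a short manipulation gives $\nabla_{\mathbf{X}}L = \mathbf{C}^{-1}\big(\tfrac{1}{M}\mathbf{Y}\mathbf{Y}^\top\mathbf{C}^{-1}-\mathbf{I}_N\big)\mathbf{X}\cdot M$ up to a harmless positive constant, so since $\mathbf{C}^{-1}$ is invertible the stationarity condition $\nabla_{\mathbf{X}}L=0$ is equivalent to
\begin{equation}
    \tfrac{1}{M}\,\mathbf{Y}\mathbf{Y}^\top\,\mathbf{C}^{-1}\mathbf{X} = \mathbf{X}.
    \nonumber
\end{equation}
Then I would substitute the thin SVD $\mathbf{X}=\mathbf{U}\mathbf{L}\mathbf{V}^\top$ with $\mathbf{U}\in\mathbb{R}^{N\times Q}$ column-orthonormal, $\mathbf{L}=\operatorname{diag}(\ell_1,\dots,\ell_Q)$ with $\ell_i\ge 0$, and $\mathbf{V}$ orthogonal. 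Diagonalizing $\mathbf{C}$ in the basis of $\mathbf{U}$ together with its orthogonal complement yields $\mathbf{C}^{-1}\mathbf{X}=\mathbf{U}(\mathbf{L}^2+\sigma^2\mathbf{I}_Q)^{-1}\mathbf{L}\mathbf{V}^\top$; plugging this into the stationarity condition and right-multiplying by $\mathbf{V}$ decouples the columns. For each index $i$ with $\ell_i>0$ one obtains $\tfrac{1}{M}\mathbf{Y}\mathbf{Y}^\top\mathbf{u}_i=(\ell_i^2+\sigma^2)\mathbf{u}_i$, i.e. $\mathbf{u}_i$ is a unit eigenvector of $\tfrac{1}{M}\mathbf{Y}\mathbf{Y}^\top$ with eigenvalue $\lambda_i=\ell_i^2+\sigma^2\ge\sigma^2$, so $\ell_i=(\lambda_i-\sigma^2)^{1/2}$; for each $i$ with $\ell_i=0$ the $i$-th column of $\mathbf{X}$ is identically zero and the direction $\mathbf{u}_i$ is immaterial, which I would record by setting $[\mathbf{\Lambda}_Q]_{i,i}=\sigma^2$ so that the corresponding singular value is $(\sigma^2-\sigma^2)^{1/2}=0$. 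Collecting the columns gives $\hat{\mathbf{X}}=\mathbf{U}_Q(\mathbf{\Lambda}_Q-\sigma^2\mathbf{I}_Q)^{1/2}\mathbf{R}$ with $\mathbf{R}=\mathbf{V}^\top$ arbitrary orthogonal, which is the claimed form; conversely, substituting this $\hat{\mathbf{X}}$ back into the stationarity equation verifies it is indeed stationary, so the characterization is exact.

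The main obstacle I anticipate is not the calculus but the bookkeeping around rank deficiency: making precise the sense in which the eigenvectors attached to zero singular values are ``arbitrary'' (the objective and its gradient genuinely do not see them), and ensuring the two-branch description of $[\mathbf{\Lambda}_Q]_{i,i}$ covers every stationary $\mathbf{X}$ without double counting the orthogonal-rotation degree of freedom. A secondary point to handle with care is that the relevant eigendecomposition is of the $N\times N$ Gram matrix $\tfrac{1}{M}\mathbf{Y}\mathbf{Y}^\top$ (rank at most $\min(N,M)$), rather than the $M\times M$ covariance used in primal PPCA; I would note that its nonzero spectrum coincides with that of the primal covariance, so the ordering ``$\lambda_i$ versus $\sigma^2$'' is well posed, and that boundary cases ($\lambda_i=\sigma^2$, or fewer than $Q$ eigenvalues exceeding $\sigma^2$) are automatically absorbed by the second branch.
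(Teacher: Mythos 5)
Your proposal is correct and follows essentially the same route as the paper's proof in App.~\ref{app:dpca_stationary_x}: the same gradient computation leading to the stationarity condition $\tfrac{1}{M}\mathbf{Y}\mathbf{Y}^\top\mathbf{C}^{-1}\mathbf{X}=\mathbf{X}$, the same push-through identity $\mathbf{C}^{-1}\mathbf{X}=\mathbf{X}(\mathbf{X}^\top\mathbf{X}+\sigma^2\mathbf{I}_Q)^{-1}$ (which you obtain by diagonalizing in the left-singular basis rather than citing Woodbury), and the same SVD-based case split on zero versus nonzero singular values. Your added remarks on rank deficiency and on the Gram-versus-covariance spectrum are sensible refinements but do not change the argument.
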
  
\vspace{-.2in}
\begin{proof}
 See App.~\ref{app:dpca_stationary_x} or App.~A in  \cite{lawrence2005probabilistic}.
 \vspace{-.08in}
\end{proof}

Theorem~\ref{theo:dpca_stationary_main_text} reveals that the stationary point, $\hat{\vx}$, depends on the projection variance $\sigma^2$ and eigenvalues of $\frac{1}{M}\vy\vy^\top$. However, it remains unclear which specific value of $\sigma^2$ may trigger the model collapse. Our findings, succinctly summarized in the following propositions, provide additional insight into the impact of the $\sigma^2$ on the type of the stationary point and the cause of the model collapse.




\begin{proposition}
    \label{theo:sigma}
    In the case that $\sigma^2$ equals to its MLE estimator, $\hat{\sigma}^2$: 
    \begin{equation}
        \setlength{\abovedisplayskip}{4pt}
        \setlength{\belowdisplayskip}{4pt}
        \sigma^2 = \hat{\sigma}^2 =  \frac{1}{N-Q^{\prime}} \sum_{j=Q^{\prime}+1}^{N} \lambda_j,
    \end{equation}
    where $Q^\prime$ is the number of eigenvalues retained in $\bm{\Lambda}_Q$ from $\frac{1}{M}\vy\vy^\top$, then the only stable maximum\footnote{In this case, the stationary points comprise only saddle points and the global optimum; no local optimum exists.} is the global optimum. 
\end{proposition}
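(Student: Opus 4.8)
The plan is to leverage the explicit characterization of stationary points in Theorem~\ref{theo:dpca_stationary_main_text} together with the classical probabilistic PCA analysis of \citet{tipping1999probabilistic}, which identifies exactly which choices of retained eigenvectors give stable maxima. First I would recall that every stationary point $\hat{\vx}$ is determined by a choice of $Q$ (not necessarily top) eigenvectors $\u_1,\dots,\u_Q$ of $\frac1M\vy\vy^\top$ together with a binary decision, per coordinate $i$, of whether $[\boldsymbol{\Lambda}_Q]_{i,i}=\lambda_i$ (the associated eigenvalue, "active" direction) or $[\boldsymbol{\Lambda}_Q]_{i,i}=\sigma^2$ (the "discarded" direction, which forces that column of $\hat{\vx}$ to be zero since $(\boldsymbol{\Lambda}_Q-\sigma^2\mathbf I_Q)^{1/2}$ vanishes there). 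So if $Q'$ coordinates are active, the stationary point retains $Q'$ genuine latent directions and $Q-Q'$ null directions; equivalently it is a rank-$Q'$ solution determined by which $Q'$ of the $N$ eigenvalues are kept.

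The key step is a second-order (Hessian) analysis of the marginal log-likelihood $L(\vx)$ at each such stationary point, carried over from the dual formulation to match the primal pPCA calculation. In primal pPCA one shows: (i) if $\sigma^2$ is free and set to its MLE $\hat\sigma^2=\frac{1}{N-Q'}\sum_{j=Q'+1}^N\lambda_j$ (the average of the discarded eigenvalues), then a stationary point is a local — indeed the global — maximum if and only if the retained eigenvalues are the $Q'$ \emph{largest}; (ii) any stationary point retaining a non-top set of eigenvalues has an unstable direction: one can rotate a retained small-eigenvalue direction toward a discarded larger-eigenvalue direction and strictly increase $L$, so it is a saddle; (iii) there are no other local maxima. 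I would transcribe this to the GPLVM/DPPCA setting: the map $\vx\mapsto\vx\vx^\top$ and the identity $p(\vy\mid\vx)=\prod_j\cN(\y_{:,j}\mid\mbzero,\vx\vx^\top+\sigma^2\mathbf I_N)$ make the objective a function of $\vx\vx^\top$ only, so the stationary/stability structure is in exact correspondence with pPCA's (with the roles of $\frac1M\vy\vy^\top$ and $\frac1N\vy^\top\vy$ swapped, which share nonzero spectrum). Then set $\sigma^2=\hat\sigma^2$: the unique stable configuration is "retain the $Q'$ largest eigenvalues," every other stationary point is a saddle, and hence the only stable maximum is the global optimum. I would also note the orthogonal matrix $\mathbf R$ only reparametrizes within the retained subspace and does not affect stability, so "the" global optimum is meant up to this rotation.

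The main obstacle I anticipate is making the Hessian/stability argument clean in the dual coordinates rather than just citing \citet{tipping1999probabilistic}: one must check that perturbations of $\vx$ that are \emph{not} of the form $\vx\mapsto\vx\vx^\top$-preserving rotations still correspond to pPCA-type perturbations of $\vx\vx^\top$, and that the degeneracy directions (the $\mathbf R$ rotations, and rotations within the discarded block which move nothing) are flat rather than ascending — so "stable maximum" has to be understood modulo these exact symmetries. I would handle this by working with the Gram matrix $\mbS=\vx\vx^\top$, showing $L$ depends on $\vx$ only through $\mbS$, classifying stationary $\mbS$ via the spectral argument above, and then observing that the fiber over each stationary $\mbS$ is a single orbit of the symmetry group, so non-degenerate stability in $\mbS$ is exactly the claimed statement in $\vx$. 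The bulk of the computation — differentiating $\log\det(\mbS+\sigma^2\mathbf I)+\tr((\mbS+\sigma^2\mathbf I)^{-1}\frac1M\vy\vy^\top)$ twice and evaluating at the eigen-configurations — is routine and I would defer it to the appendix, as the paper already does by pointing to App.~A of \citet{lawrence2005probabilistic}.
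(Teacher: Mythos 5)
Your plan is sound and rests on the same spectral mechanism as the paper's argument, but the execution is genuinely different. The paper (App.~\ref{app:sigma}) does not reduce to the primal pPCA Hessian classification; it works directly in the dual coordinates: it perturbs the $i$-th column of the stationary point along a discarded eigenvector, $[\hat{\vx}^{\epsilon}]_{:,i}=\hat{\x}_i+\epsilon\u_j$, computes the sign of the inner product of $\u_j$ with the gradient at the perturbed point, and obtains the closed form $D_{ij}=\epsilon M\bigl(\lambda_j/[\boldsymbol{\Lambda}_Q]_{i,i}-1\bigr)\u_j^\top\mathbf{K}^{-1}\u_j$. The classification into maxima, minima, and saddles then falls out of the sign of $\lambda_j/[\boldsymbol{\Lambda}_Q]_{i,i}-1$, and the final step is the observation that $\hat\sigma^2$ is the \emph{average} of the discarded eigenvalues, so any stationary point with a zero column (a diagonal entry equal to $\hat\sigma^2$) necessarily sits below some discarded $\lambda_j$ and is therefore a saddle; combined with Corollary~\ref{corollary:them_stationary_point_dppca}, which identifies the top-$Q$ configuration as the global optimum, this leaves the global optimum as the only stable maximum. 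Your route --- passing to the Gram matrix $\vx\vx^\top$, invoking the formal identity of the DPPCA and pPCA likelihoods, and importing the Tipping--Bishop stability classification --- is legitimate and buys a shorter argument, at the price of having to verify (as you anticipate) that the rank-deficient boundary of the PSD cone and the $\mathbf{R}$-orbit degeneracies do not break the correspondence between stability in $\vx\vx^\top$ and stability in $\vx$; the paper's explicit column-perturbation computation sidesteps both issues, since perturbing a zero column of $\vx$ by $\epsilon\u_j$ is automatically a feasible direction and the rotational flat directions never enter. If you pursue your route, make the averaging argument explicit for the $Q'<Q$ configurations: it is precisely because $\hat\sigma^2=\frac{1}{N-Q'}\sum_{j>Q'}\lambda_j$ lies weakly below the largest discarded eigenvalue that every zero-column stationary point acquires an ascent direction rather than being a spurious local maximum.
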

\begin{proof}
See App. \ref{app:sigma}. 
\vspace{-.08in}
\end{proof}

Proposition~\ref{theo:sigma} suggests that adhering to the principle $\sigma^2 \!=\! \hat{\sigma}^2$ during the optimization of the log marginal likelihood (see Eq.~\eqref{eq:GPLVM_MLE} or Eq.~\eqref{eq:dppca_log_marg}), it is expected to yield the global optimum, thereby mitigating the risk of the model collapse.

\begin{proposition}
\label{lemma:sigma_big}
    If $\sigma^2 \!\in\!  (\lambda^o_{Q\!-\!q\!+\!1}, \lambda^o_{Q\!-\!q}), q\!=\!1,\ldots, Q\!-\!1$, where $\lambda^o_q$ denotes the $q$-th largest eigenvalues of $\frac{1}{M}\vy \vy^{\top}$, then the only stable maximum is the local optimum, with the maximizer $\hat{\vx}$ having $q$ zero columns. 
    In addition, when $q\!=\!Q$, $\sigma^2 > \lambda_{1}^{o}$, the only stable maximum occurs when $\hat{\vx} = \mathbf{0}$ (i.e., homogeneity). 

    If $\sigma^2 \!<\! \lambda^o_N$, the stationary points comprise a cluster of local minimum points, accompanied by the emergence of zero columns in the $\hat{\vx}$.
\end{proposition}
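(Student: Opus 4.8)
The plan is to leverage the explicit enumeration of stationary points already provided by Theorem~\ref{theo:dpca_stationary_main_text}. Every stationary point $\hat{\vx}$ of the linear-GPLVM objective~\eqref{eq:GPLVM_MLE} is obtained by assigning to each of the $Q$ latent coordinates one ``slot'': either a genuine eigenpair $(\lambda_i,\u_i)$ of $\tfrac1M\vy\vy^\top$ — admissible only when $\lambda_i \ge \sigma^2$, since otherwise $(\lambda_i-\sigma^2)^{1/2}$ is not real — or the value $\sigma^2$ itself, in which case that column of $\hat{\vx}$ is identically zero. Thus the combinatorial shape of the stationary set is completely determined by how many eigenvalues of $\tfrac1M\vy\vy^\top$ exceed $\sigma^2$; ordering them $\lambda^o_1 > \cdots > \lambda^o_N$, this count changes exactly as $\sigma^2$ crosses each $\lambda^o_k$. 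What remains is to classify, for each placement of $\sigma^2$, which stationary point is a stable local maximum of the marginal log-likelihood and which are saddles or local minima.

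I would first substitute the stationary form $\hat{\vx} = \vu_Q(\boldsymbol{\Lambda}_Q - \sigma^2\mbI_Q)^{1/2}\mathbf{R}$ into the DPPCA log marginal likelihood, i.e.\ $\sum_{j=1}^M \log\cN\!\big(\y_{:,j}\mid\mbzero,\ \vx\vx^\top + \sigma^2\mbI_N\big)$, to obtain a closed form for $L(\hat{\vx})$ depending only on the multiset of retained eigenvalues together with a count of the $\sigma^2$-slots, exactly as in the PPCA analysis of \cite{tipping1999probabilistic} and App.~A of \cite{lawrence2005probabilistic}. From this expression $L$ is strictly increasing in each retained eigenvalue, so replacing a retained $\lambda_i$ by a strictly larger \emph{admissible} $\lambda_j$ strictly increases the objective; making this rigorous via a one-parameter rotation of $\u_i$ toward $\u_j$ shows the intermediate second derivative carries the sign of $\lambda_j-\lambda_i$, hence every stationary point that omits an admissible eigenvalue larger than one it retains is a saddle. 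This pins the unique stable maximum down to the configuration retaining the largest admissible eigenvalues and filling any leftover slots with $\sigma^2$.

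Reading off the three regimes is then mechanical. (i) If $\sigma^2 \in (\lambda^o_{Q-q+1},\lambda^o_{Q-q})$ with $1\le q\le Q-1$, exactly $Q-q$ eigenvalues exceed $\sigma^2$, so the stable maximum retains $\lambda^o_1,\dots,\lambda^o_{Q-q}$ and must carry $\sigma^2$ in the remaining $q$ slots, i.e.\ $\hat{\vx}$ has exactly $q$ zero columns; it is a local, not global, optimum — cf.\ Proposition~\ref{theo:sigma}. (ii) In the endpoint case $q=Q$, i.e.\ $\sigma^2 > \lambda^o_1$, no eigenvalue is admissible, so $\boldsymbol{\Lambda}_Q = \sigma^2\mbI_Q$, the only stationary point is $\hat{\vx}=\mbzero$, and it is therefore the only stable maximum — the homogeneity collapse. (iii) If $\sigma^2 < \lambda^o_N$, all eigenvalues are admissible, so besides the genuine global maximum (retaining the top $Q$) there remain the stationary points that keep $\sigma^2$ in one or more slots; a second-order expansion of $L$ about any such point — at $\hat{\vx}=\mbzero$ it reads $L(\vx)-L(\mbzero) \propto \mathrm{tr}\!\big[\vx^\top(\tfrac1M\vy\vy^\top - \sigma^2\mbI_N)\vx\big] + O(\|\vx\|^4)$ — has strictly positive curvature along every ``collapse'' direction precisely because $\tfrac1M\vy\vy^\top - \sigma^2\mbI_N \succ 0$ in this regime, so these points form a cluster of local minima with zero columns, as claimed.

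The delicate part is the Hessian bookkeeping in the degenerate directions: the factor $\mathbf{R}$ makes $L$ flat along orthogonal rotations, the zero columns inject further flat directions, and the square root $(\lambda_i-\sigma^2)^{1/2}$ is non-smooth where $\sigma^2$ meets an eigenvalue (which is exactly why the intervals are taken open), so ``stable maximum'' must be understood modulo these symmetries, and the genuinely flat directions must be separated cleanly from those with definite curvature. The cleanest route — and the one I would take — is to analyze $L$ as a smooth function of the PSD matrix $\mbK = \vx\vx^\top$, where the rotation degeneracy disappears, classify the critical points of that reduced problem using the curvature criterion above, and only then lift the conclusions back to $\hat{\vx}$.
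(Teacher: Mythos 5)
Your proposal is correct and follows essentially the same route as the paper: enumerate the stationary points via Theorem~\ref{theo:dpca_stationary_main_text} (with the admissibility constraint $\lambda_i \geq \sigma^2$ implicit in $\lambda_i = \sigma^2 + l_i^2$), classify each one by comparing retained against discarded eigenvalues through a local perturbation/curvature test, and then read off the three regimes of $\sigma^2$. The only difference is cosmetic: the paper implements the classification by checking the sign of the dot product $D_{ij}$ of the gradient with a perturbation along a discarded eigenvector (App.~\ref{app:sigma}), whereas you propose an equivalent second-order expansion (e.g.\ $L(\vx)-L(\mbzero)\propto \operatorname{tr}\bigl[\vx^\top(\tfrac1M\vy\vy^\top-\sigma^2\mbI_N)\vx\bigr]$ at the origin) together with an explicit treatment of the rotational flat directions that the paper leaves implicit.
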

\vspace{-.15in}
\begin{proof}
See App. \ref{app:sigma_big}. 
\vspace{-.05in}
\end{proof}

Proposition~\ref{lemma:sigma_big} implies that an improper choice of $\sigma^2$ can hinder the optimization process, preventing it from reaching the optimum and leading to a loss of information (homogeneity) in $\hat{\vx}$, i.e., the undesirable model collapse.

The aforementioned findings in the linear GPLVM underscore the importance of learning the projection variance $\sigma^2$ and demonstrate how this learning can help mitigate the risk of model collapse. While it is challenging to generalize these results to the broader GPLVM framework due to the model's non-convexity and nonlinearity, they still offer valuable insights into the role of projection variance in preventing model collapse within general GPLVMs (see \S~\ref{subsec:dpca_connection_experiments}). 

\subsection{Kernel Function Flexibility Matters}  
\label{subsec:GPLVM_SM_RFF} 

The occurrence of model collapse is closely linked to the choice of kernel function as well, as the kernel plays a key role in learning the underlying mapping $f(\x)$ in GPLVMs. In particular, if the learned mapping function characterized by the GP posterior diverges from the underlying one, there is a significant possibility that the estimated latent manifold will become distorted or lose crucial feature details, resulting in the model collapse.
 
This phenomenon is depicted in Fig.~\ref{fig:different_kernel_illustration}, where it is evident that the limited flexibility of the preliminary kernels prevents them from adequately exploring the corresponding reproducing kernel Hilbert space (RKHS) to capture the structure of the underlying function $f(\x)$ \cite{theodoridis2020machine}. Consequently, using the preliminary (RBF) kernel can only roughly fit the underlying function, leading to learning a distorted latent manifold--refer to the top of Fig.~\ref{fig:different_kernel_illustration} and the associated latent manifold estimation in Fig.~\ref{subfig:distortion}, where we can see the struggle to fit the model that exhibits 
short-term irregularities.

Conversely, employing a flexible kernel capable of approximating arbitrary kernels allows for thorough exploration of the kernel space, enabling the automatic discovery of the most suitable kernel to capture hidden and possibly complex data patterns and structures, such as periodicity and long tails \cite{wilson2013gaussian, duvenaud2014automatic}. This enhances the capacity to effectively learn the underlying mapping functions and estimate an accurate latent manifold, as evidenced by the learned function using a flexible (SM) kernel in Fig.~\ref{fig:different_kernel_illustration} (top sub-figure) and the latent manifold estimate in Fig.~\ref{subfig:correct_estimation}.

In summary, Fig.~\ref{fig:different_kernel_illustration} demonstrates the importance of kernel flexibility in GPLVMs for mitigating model collapse (distortion) in practice. In this paper, we will employ a kernel capable of approximating arbitrary stationary kernels, namely the SM kernel \cite{wilson2013gaussian}. In the next section, we detail our proposed GPLVM that incorporates the SM kernel while learning projection variance to prevent model collapse.




\section{Preventing Model Collapse}
\label{sec:VI_algorithm}  

Integrating general GPLVM with the SM kernel poses two distinct challenges: 1) high computational costs and 2) intractable model learning \cite{de2021learning,jung2022efficient,chang2023memory}.  Specifically, the computational complexity of training the GPLVM with the SM kernel scales as $\mathcal{O}(N^3)$ with $N$ data points \cite{williams2006gaussian}, rendering it prohibitive in the context of big data.  To tackle the scalability issue of GPLVM, one representative variational method presented by \citet{titsias2010bayesian} involves utilizing sparse GPs based on inducing points \cite{titsias2009variational}. However, this variational method is computationally tractable only for limited preliminary kernel functions, such as the RBF kernel. Recent work has tried to enhance the scalability and flexibility of the GPLVM by using the stochastic variational inference approach proposed by \citet{hensman2013gaussian} \citep{lalchand2022generalised,de2021learning,ramchandran2021latent} . Despite these endeavors, the need to optimize additional inducing points still leads to increased computational burden and the risk of getting stuck in suboptimal solutions. 
Thus, despite the enhanced model capability, these models often face challenges in achieving their theoretical potential to address model collapse (see \S~\ref{sec:experiments}).  

To address the aforementioned issues, we resort to the variational inference technique \cite{jordan1999introduction} and a random Fourier features (RFF) approximation \cite{jung2022efficient, rahimi2007random}, which will enable us to efficiently and scalably learn the SM kernel-embedded GPLVM without introducing extra parameters (inducing points) as required in sparse GP-based methods \cite{titsias2010bayesian, lalchand2022generalised}.
The vanilla RFF approximates any stationary kernel $k(\x,\x^\prime)$ using Monte Carlo integration \cite{rahimi2007random}, i.e.,
\begin{align}
\setlength{\abovedisplayskip}{4.5pt}
\setlength{\belowdisplayskip}{4.5pt}
    & k(\x,\x^\prime) \approx \varphi(\mathbf{x})^\top \varphi(\mathbf{x}^\prime), \   \varphi(\x) \triangleq \sqrt{\frac{2}{L}} 
    \left[ \sin(2 \pi \w_1^{\top} \x),   \right.\\
    &   
    \left.   \cos(2\pi \w_1^{\top} \x), \ldots, \sin(2\pi\w_{\frac{L}{2}}^{\top} \x), \cos(2\pi\w_{\frac{L}{2}}^{\top} \x) \right]  
    \nonumber
\end{align}
where $\{\mathbf{w}_l\}_{l=1}^{L/2}$ are ${L}/{2}$  
i.i.d. spectral points drawn from the density function $p(\mathbf{w})$ of the associated kernel function $k(\x, \x^\prime)$, where $L$ is an positive, even, integer. 

Leveraging the RFF approximation, we can obtain the following SM kernel-embedded GPLVM:
\begin{subequations}
\label{eq:GPLVM_RFF_SM}
\setlength{\abovedisplayskip}{5pt}
\setlength{\belowdisplayskip}{5pt}
    \begin{align}
        & \!\!\! \y_{:,j} \sim \cN(\bm{0}, \ \varphi(\vx)\varphi(\vx)^\top + \sigma^2 \mathbf{I}_N), \  j\!=\!1,\ldots, M,\\
        & \!\!\! \w_l \sim p_{\mathrm{sm}}(\w), \quad l=1,\ldots, {L}/{2}, \label{subeq:sampling_from_Gaussian_mixture}\\
        & \!\!\! \mathbf{x}_i \sim \mathcal{N}(\mathbf{0}, \mathbf{I}_Q), \ \ i=1,\ldots, N,
    \end{align}
\end{subequations}
ensuring both computational scalability and modeling flexibility\footnote{Similar to \citet{gundersen2021latent}, we consider $\vw$ as part of the data-generating process. We then constrain its prior $p(\vw)$ to be Gaussian mixtures, thereby defining SM kernels. For a detailed interpretation of $\vw$, see App.~\ref{appendix:interpretation_eq10_11_12}.}.  The following subsections will further detail our proposed variational inference algorithm to manage the learning tractability and efficacy in addressing the model collapse.

\vspace{-.03in}
\subsection{Approximate Bayesian Inference} 
\label{subsec:scalable_VI} \vspace{-.02in}
Given the SM kernel-embedded GPLVM defined in Eq.~\eqref{eq:GPLVM_RFF_SM},  we utilize the variational inference technique \cite{theodoridis2020machine} to learn the model hyperparameters $\boldsymbol{\theta} \!=\! [\boldsymbol{\theta}_{\text{sm}}, \sigma^2]$, aiming to mitigate the risk of model collapse. 
Specifically, we can immediately obtain the joint distribution of the GPLVM in Eq.~\eqref{eq:GPLVM_RFF_SM} as 
\begin{equation}
\setlength{\abovedisplayskip}{4.5pt}
\setlength{\belowdisplayskip}{4.5pt}
    \begin{aligned}
        p(\vy, \vx, \mathbf{W}) & =  p(\vx) p(\mathbf{W}) p(\vy \vert \vx, \mathbf{W}) \\
        & = p(\mathbf{W})   \prod_{i=1}^{N} p(\x_i) \prod_{j=1}^M  p(\y_{:,j} \vert \vx, \mathbf{W}),
    \end{aligned}
    \label{eq:model_joint}
\end{equation}
where $p(\vw) \!=\! \prod_{l=1}^{{L}/{2}} p_{\mathrm{sm}}(\w)$ is the joint distribution of the spectral points. 
The variational inference method involves constructing a variational lower bound $\mathcal{L}$ of the log marginal likelihood that has the Kullback–Leibler (KL) divergence from approximating the underlying posterior as its slack: $\log p(\vy) \!-\! \mathcal{L} \!=\! \operatorname{KL}[q(\vx, \vw)\| p(\vx, \vw|\vy)]$. By maximizing $\mathcal{L}$ w.r.t. $q(\cdot)$, we improve the quality of the approximation \cite{cao2023variational,cheng2022rethinking}.

For this purpose, we introduce the following variational distribution to approximate the posterior over all the latent variables, $\{ \mathbf{W}, \vx\}$:
\begin{equation}
\setlength{\abovedisplayskip}{4pt}
\setlength{\belowdisplayskip}{4pt}
    q(\vx, \mathbf{W}) \triangleq  p(\mathbf{W}) q(\vx) = p(\mathbf{W}) \prod_{i = 1}^N q(\x_i), 
    \label{eq:variational_dis}
\end{equation}
where $q(\vx) = \prod_{i = 1}^N \mathcal{N}(\x_i \vert \bm{\mu}_i, \bm{S}_i)$, and $\bm{\mu}_i \in \mathbb{R}^Q, \bm{S}_i \in \mathbb{R}^{Q\times Q}$ are the associated free variational parameters.  
The variational distribution $q(\vw)$ is constrained to be the prior distribution, which is essentially equivalent to explicitly assuming that $q(\vw)$ is Gaussian mixtures. See App.~\ref{appendix:interpretation_eq10_11_12} for detailed discussions on this equivalence and other more complex variational distributions of $\vw$. 
Consequently, the variational lower bound for simultaneous learning and inference is ready to be derived and summarized in the following theorem.
\begin{theorem} \label{thm:ELBO}
With the model joint distribution in Eq.~\eqref{eq:model_joint} and the assumed variational distribution in Eq.~\eqref{eq:variational_dis}, the evidence lower bound (ELBO), $\mathcal{L} = \mathbb{E}_{q(\vx, \mathbf{W})} \left[\log {p(\vy, \vx, \vw)} - \log {q(\vx, \vw)} \right]$, for the joint learning and inference is 
\begin{align}
\setlength{\abovedisplayskip}{4.5pt}
\setlength{\belowdisplayskip}{4.5pt}
    \mathcal{L} &\!=\! \mathbb{E}_{q(\vx, \mathbf{W})} \left[\log \frac{p(\mathbf{W}) \prod_{i = 1}^N p(\x_i) \prod_{j=1}^{M} p(\y_{:,j} \vert \vx, \mathbf{W})}{p(\mathbf{W}) \prod_{i = 1}^N q(\x_i) } \right] \nonumber \\
    & \!\!=\! \underbrace{\sum_{j=1}^M \mathbb{E}_{q(\vx, \mathbf{W})} \left[ \log p(\y_{:,j} \vert \vx, \mathbf{W}) \right] }_{\text{Term 1: data reconstruction}} \!-\! \underbrace{\sum_{i=1}^N\operatorname{KL}(q(\x_i) \| p(\x_i))}_{\text{Term 2: regularization}}. 
    \nonumber
\end{align}
\end{theorem}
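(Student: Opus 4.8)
The plan is to derive the ELBO by a direct, mechanical application of Jensen's inequality to the log marginal likelihood, and then simplify the resulting expression by exploiting the specific factorized form of the variational distribution in Eq.~\eqref{eq:variational_dis}. First I would write $\log p(\vy) = \log \int p(\vy, \vx, \mathbf{W}) \, \d\vx \, \d\mathbf{W}$, multiply and divide the integrand by $q(\vx, \mathbf{W})$, and apply Jensen's inequality to pull the logarithm inside the expectation, yielding $\log p(\vy) \geq \mathbb{E}_{q(\vx, \mathbf{W})}[\log p(\vy, \vx, \mathbf{W}) - \log q(\vx, \mathbf{W})] =: \mathcal{L}$. The gap is exactly $\operatorname{KL}[q(\vx, \mathbf{W}) \| p(\vx, \mathbf{W} \mid \vy)]$, as asserted in the text preceding the theorem, so no separate argument for that identity is needed here.

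Next I would substitute the explicit factorizations: the model joint from Eq.~\eqref{eq:model_joint}, $p(\vy, \vx, \mathbf{W}) = p(\mathbf{W}) \prod_{i=1}^N p(\x_i) \prod_{j=1}^M p(\y_{:,j} \mid \vx, \mathbf{W})$, and the variational factorization $q(\vx, \mathbf{W}) = p(\mathbf{W}) \prod_{i=1}^N q(\x_i)$. Forming the log-ratio, the $\log p(\mathbf{W})$ term appears in both numerator and denominator and cancels identically — this is the key simplification enabled by the choice $q(\mathbf{W}) = p(\mathbf{W})$. What remains inside the expectation is $\sum_{j=1}^M \log p(\y_{:,j} \mid \vx, \mathbf{W}) + \sum_{i=1}^N \log p(\x_i) - \sum_{i=1}^N \log q(\x_i)$.

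Finally I would split the expectation by linearity. The first sum gives Term 1, $\sum_{j=1}^M \mathbb{E}_{q(\vx, \mathbf{W})}[\log p(\y_{:,j} \mid \vx, \mathbf{W})]$, the data reconstruction term. The remaining two sums combine into $\mathbb{E}_{q(\vx, \mathbf{W})}\left[\sum_{i=1}^N \log \frac{p(\x_i)}{q(\x_i)}\right]$; since the integrand depends only on $\vx$ (not $\mathbf{W}$) and $q(\vx) = \prod_i q(\x_i)$ factorizes, each summand reduces to $\mathbb{E}_{q(\x_i)}\left[\log \frac{p(\x_i)}{q(\x_i)}\right] = -\operatorname{KL}(q(\x_i) \| p(\x_i))$, which assembles into Term 2. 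This establishes the claimed decomposition.

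There is no genuine obstacle here — the result is a standard mean-field ELBO derivation and the proof is essentially bookkeeping. The only point requiring a moment's care is the cancellation of the $p(\mathbf{W})$ factors and the observation that the prior-ratio term does not involve $\mathbf{W}$, so that the $\mathbf{W}$-expectation in Term 2 is vacuous and the KL collapses to a sum of per-datapoint KL divergences over the $\x_i$. I would also note, for completeness, that the constituent expectations are well-defined (each $\log p(\y_{:,j} \mid \vx, \mathbf{W})$ is a Gaussian log-density and each $\operatorname{KL}(q(\x_i)\|p(\x_i))$ is a KL between Gaussians, hence finite), though this is routine.
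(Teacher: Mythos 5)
Your proposal is correct and follows essentially the same route as the paper's own derivation in App.~\ref{appendix:ELBO_deriviations}: substitute the factorizations of Eq.~\eqref{eq:model_joint} and Eq.~\eqref{eq:variational_dis} into the ELBO, cancel the common $p(\mathbf{W})$ factors, and split the remainder by linearity into the reconstruction term and the per-datapoint KL terms. The only addition you make is the explicit Jensen's-inequality justification that $\mathcal{L}$ lower-bounds $\log p(\vy)$, which the paper states in the text preceding the theorem rather than in the proof itself.
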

\vspace{-.1in}
Here, the first term corresponds to the data reconstruction error, which encourages any latent variables $\vx$ and $\vw$ sampled from the variational distribution, $q(\vx, \vw)$, to accurately reconstruct the observations/likelihood. The second term represents a regularization for $q(\vx)$, which discourages significant deviations of $q(\vx)$ from the prior $p(\vx)$.

For the evaluation of $\mathcal{L}$, the second term can be evaluated analytically due to the Gaussian nature of the distributions.
The first term needs to be handled numerically with Monte Carlo estimation, i.e.,
\begin{subequations} \label{eq:evaluation_term_1}
\setlength{\abovedisplayskip}{4.5pt}
\setlength{\belowdisplayskip}{4.4pt}
\begin{align}
   \text{Term 1} & = \sum_{j=1}^M \mathbb{E}_{q(\vx, \mathbf{W})} \left[ \log p(\y_{:,j} \vert \vx, \mathbf{W}) \right] \\
    & \approx  \sum_{j=1}^M  \frac{1}{{I}}\sum_{i=1}^{I} \log \mathcal{N}(\y_{:,j} \vert \bm{0}, \hat{{\vk}}_{\mathrm{sm}}^{(i)} + \sigma^2 \mathbf{I}_N), 
\end{align}
\end{subequations}
where ${I}$ denotes the number of Monte Carlo samples drawn from $q(\vx)$ and $p(\vw)$, and $\hat{{\vk}}_{\mathrm{sm}}$ is the SM kernel matrix approximation constructed by the feature map $\varphi(\cdot)$. See App.~\ref{appendix:ELBO_deriviations} for more computational details.

Note that in Eq.~\eqref{subeq:sampling_from_Gaussian_mixture}, we need to sample $\w_l$ from a Gaussian mixture, involving that first generates an index $i$ from the discrete probability distribution, $P(i) = {\alpha_i}/{\sum_{j=1}^m \alpha_j}, i = 1, \ldots, m$, and then draws sample $\mathbf{w}_l$ from $s_i(\w)$. However, due to the difficulty of reparameterizing the discrete distribution over mixture weights \cite{graves2016stochastic}, maximizing the ELBO w.r.t. the weights $\alpha_i$ using modern off-the-shelf automatic differentiation tools (e.g., PyTorch \cite{paszke2019pytorch}) becomes challenging. To this end, we similarly leverage a differentiable RFF feature map construction approach developed for GP regression models by \citet{jung2022efficient} to ensure inherent differentiability w.r.t. the mixture weights.

\subsection{Differentiable RFF Approximation for SM Kernel} \label{subsec:GPLVM_SM_RFF_autodifferentiable} 
Rather than directly sampling from the Gaussian mixture, we first apply the vanilla RFF to get the corresponding feature map $\varphi_i(\x) \!\triangleq\!  \sqrt{\alpha_i} \cdot \varphi(\x; \{\mathbf{w}_l^{(i)}\}_{l=1}^{{L}/{2}})$, $i \!=\! 1, \ldots, m$, for each mixture component, where the reparametrization trick \cite{kingma2019introduction} is employed to sample $\mathbf{w}_l^{(i)}$ from $s_i(\w)$. Subsequently, the stacking of $m$ feature maps yields the ultimate new RFF approximation for the SM kernel, denoted as $\phi(\x)$,  i.e.,
\begin{equation}
\label{eq:SM_approximations_RFF_new}
\setlength{\abovedisplayskip}{4.5pt}
\setlength{\belowdisplayskip}{4.5pt}
    \!\! \! \phi\left( \x \right) \!=\! \left[ \varphi_1(\x)^\top, \varphi_2(\x)^\top, \ldots, \varphi_m(\x)^\top \right]^\top \!\in\! \mathbb{R}^{mL \times 1}.
\end{equation}
It can be shown that $\phi\left( \x \right)^\top \! \phi\left( \x \right)$ is an unbiased estimator of the SM kernel characterized by the hyperparameters $\bm{\theta}_{\mathrm{sm}} \!=\! \{\alpha_i, \bm{\mu}_i, \bm{\sigma^2_i}\}_{i=1}^m$. The result is succinctly encapsulated in the following proposition \citep{lopez2014randomized}. 
%
\begin{proposition}
\label{prop:SM_RFF_approx}
Let $\vw = \{\mathbf{w}_{1}^{(i)}, \mathbf{w}_{2}^{(i)}, \ldots, \mathbf{w}_{{L}/{2}}^{(i)}\}_{i = 1}^m$ be the spectral points sampled from the distribution $p(\vw) = \prod_{i=1}^m \prod_{l=1}^{{L}/{2}} s_i(\w)$ using the reparameterization trick \cite{kingma2019introduction}. With the RFF feature map constructed in Eq.~\eqref{eq:SM_approximations_RFF_new}, given any inputs $\x$ and $\x^\prime$, $\phi\left( \x \right)^\top \! \phi\left( \x^\prime \right)$ is an unbiased estimator of $k_{\mathrm{sm}}\left(\x, \x^\prime\right)$ with the hyperparameters $\bm{\theta}_{\mathrm{sm}}$,
i.e., 
\begin{equation}
\setlength{\abovedisplayskip}{4.5pt}
\setlength{\belowdisplayskip}{4.5pt}
    \mathbb{E}_{p\left( \vw \right)} \left[\phi\left( \x \right)^\top \phi\left( \x^\prime \right)  \right] = k_{\operatorname{sm}}(\x, \x^\prime; \btheta_{\mathrm{sm}})
    \vspace{-.12in}
\end{equation}
\end{proposition}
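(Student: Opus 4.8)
The plan is to unfold the definition of $\phi$ from Eq.~\eqref{eq:SM_approximations_RFF_new}, exploit linearity of expectation over the independent spectral-point blocks, and reduce everything to the classical Rahimi--Recht unbiasedness of the vanilla RFF applied separately to each mixture component $s_i(\w)$; the only residual work is to recognize that the characteristic function of $s_i$ is exactly the $i$-th summand in the closed form of $k_{\mathrm{sm}}$, which is precisely the Fourier-dual relation used to \emph{define} the SM kernel in Eq.~\eqref{eq:SM_density}.

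First I would write, directly from Eq.~\eqref{eq:SM_approximations_RFF_new} together with $\varphi_i(\x) = \sqrt{\alpha_i}\,\varphi(\x; \{\w_l^{(i)}\}_{l=1}^{L/2})$, the identity $\phi(\x)^\top\phi(\x') = \sum_{i=1}^m \varphi_i(\x)^\top\varphi_i(\x') = \sum_{i=1}^m \alpha_i\,\varphi(\x;\{\w_l^{(i)}\})^\top\varphi(\x';\{\w_l^{(i)}\})$. Since the blocks $\{\w_l^{(i)}\}_l$ for distinct $i$ are drawn independently and the reparameterization trick leaves the marginal law of each $\w_l^{(i)}$ equal to $s_i$, the expectation $\E_{p(\vw)}[\cdot]$ may be taken termwise, yielding $\sum_{i=1}^m \alpha_i\,\E_{\w_l^{(i)}\sim s_i}\!\big[\varphi(\x)^\top\varphi(\x')\big]$. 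Next, within a single component I would apply the product-to-sum identity $\sin a\sin b + \cos a\cos b = \cos(a-b)$ to collapse the vanilla feature inner product to $\varphi(\x)^\top\varphi(\x') = \tfrac{2}{L}\sum_{l=1}^{L/2}\cos\!\big(2\pi\,\w_l^{(i)\top}(\x-\x')\big)$. The $L/2$ summands are identically distributed, so the expectation equals $\E_{\w\sim s_i}\!\big[\cos(2\pi\w^\top(\x-\x'))\big]$; because $s_i$ is symmetric under $\w\mapsto-\w$, the sine moment vanishes and this equals $\E_{\w\sim s_i}\!\big[e^{2\pi\mathrm{i}\,\w^\top(\x-\x')}\big]$, i.e. the inverse Fourier transform of $s_i$ evaluated at $\x-\x'$.

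Finally I would evaluate that transform: writing $s_i$ as the symmetrized diagonal Gaussian of Eq.~\eqref{eq:SM_density} and using the standard Gaussian characteristic function, the transform is $\exp(-2\pi^2\|\bm{\sigma}_i^\top(\x-\x')\|^2)\cos(2\pi\bm{\mu}_i^\top(\x-\x'))$, which is exactly the $i$-th term in the closed form of $k_{\mathrm{sm}}$ stated just after Eq.~\eqref{eq:SM_density} --- this is Bochner duality, the very identity defining the SM kernel, so it can be cited rather than re-derived. Summing over $i$ with weights $\alpha_i$ reproduces $k_{\mathrm{sm}}(\x,\x';\btheta_{\mathrm{sm}})$, completing the proof. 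The only genuinely delicate point is the bookkeeping of the $2\pi$ factors and matching the notation $\|\bm{\sigma}_i^\top(\cdot)\|^2$ against $\diag(\bm{\sigma}_i^2)$ when identifying the Gaussian transform with the stated kernel; beyond that, the argument is just linearity of expectation plus a one-line trigonometric identity, so I do not anticipate a substantive obstacle.
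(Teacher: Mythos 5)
Your proposal is correct and follows essentially the same route as the paper's proof in App.~\ref{appendix:prop_SM_RFF_approx}: collapse the stacked feature inner product to $\sum_i \alpha_i \frac{2}{L}\sum_l \cos(2\pi \w_l^{(i)\top}(\x-\x'))$, apply linearity of expectation and the i.i.d.\ structure of the spectral points, and then invoke the symmetry of $s_i(\w)$ together with Bochner duality to identify each component expectation with the $i$-th summand of $k_{\mathrm{sm}}$. The only cosmetic difference is that you offer to evaluate the Gaussian characteristic function explicitly, whereas the paper simply cites the SM kernel definition at that point.
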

\begin{proof}
   See App. \ref{appendix:prop_SM_RFF_approx}
   \vspace{-.1in}
\end{proof}
In fact, given inputs $\vx$ and the new feature map defined in Eq.~\eqref{eq:SM_approximations_RFF_new}, we can further characterize the approximation error bound for the constructed SM kernel matrix approximation, $\hat{\mathbf{K}}_{\mathrm{sm}} \!=\! \Phi_{\mathrm{sm}}(\vx) \Phi_{\mathrm{sm}}(\vx)^\top$, where the random feature matrix $\Phi_{\mathrm{sm}}(\vx)\!=\!\left[\phi\left(\x_1\right),  \ldots , \phi\left(\x_N\right)\right]^\top \!\in \! \mathbb{R}^{N \times mL}$ \cite{jung2022efficient,lopez2014randomized}. 
\begin{theorem}
\label{thm:SM_RFF_approx}
For all small $\epsilon>0$, the approximation error between the underlying SM kernel matrix $\mathbf{K}_{\mathrm{sm}}$ and its RFF approximation $\hat{\mathbf{K}}_{\mathrm{sm}}$ is characterized by 
\begin{equation}
\setlength{\abovedisplayskip}{4.5pt}
\setlength{\belowdisplayskip}{4.5pt}
\begin{aligned}
    & P\left(\left\|\hat{\mathbf{K}}_{\mathrm{sm}} - \mathbf{K}_{\mathrm{sm}} \right\|_2 \geq \epsilon\right) \leq \\
    & N \exp \left(\frac{-3 \epsilon^2 L }{2N a \left(6\left\| \mathbf{K}_{\mathrm{sm}} \right\|_2 + 3 N a \sqrt{m}+8 \epsilon\right)}\right),
\end{aligned}
\end{equation}
where $a = \sqrt{\sum_{i=1}^m \alpha_i^2}$ and $\|\cdot\|_2$ denotes the matrix spectral norm.
\vspace{-.15in}
\end{theorem}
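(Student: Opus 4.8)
The plan is to recognize $\hat{\mathbf{K}}_{\mathrm{sm}}-\mathbf{K}_{\mathrm{sm}}$ as a sum of independent, mean‑zero, symmetric random matrices and then invoke the (Hermitian) matrix Bernstein inequality. Unpacking the feature map in Eq.~\eqref{eq:SM_approximations_RFF_new} gives $\hat{\mathbf{K}}_{\mathrm{sm}} = \sum_{i=1}^{m}\sum_{l=1}^{L/2}\tfrac{2\alpha_i}{L}\mathbf{Z}_l^{(i)}$, where $[\mathbf{Z}_l^{(i)}]_{ab}=\cos\!\big(2\pi(\mathbf{w}_l^{(i)})^{\top}(\x_a-\x_b)\big)$; each $\mathbf{Z}_l^{(i)}$ is PSD of rank at most two, being $\mathbf{u}\mathbf{u}^{\top}+\mathbf{v}\mathbf{v}^{\top}$ for the cosine/sine evaluation vectors, which satisfy $\|\mathbf{u}\|^2+\|\mathbf{v}\|^2=N$. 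By Proposition~\ref{prop:SM_RFF_approx} (componentwise Bochner), $\mathbb{E}[\tfrac{2\alpha_i}{L}\mathbf{Z}_l^{(i)}]=\tfrac{2\alpha_i}{L}\mathbf{K}_i$ with $\mathbf{K}_i$ the Gram matrix of the $i$‑th sub‑kernel, so summing over $l$ and $i$ recovers $\mathbf{K}_{\mathrm{sm}}=\sum_i\alpha_i\mathbf{K}_i$. Thus $\hat{\mathbf{K}}_{\mathrm{sm}}-\mathbf{K}_{\mathrm{sm}}=\sum_{i,l}\mathbf{X}_{i,l}$ with $\mathbf{X}_{i,l}\triangleq\tfrac{2\alpha_i}{L}(\mathbf{Z}_l^{(i)}-\mathbf{K}_i)$ independent and mean zero, a sum of $mL/2$ terms.

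Next I would assemble the two ingredients matrix Bernstein requires. For the uniform norm bound: since $\|\mathbf{Z}_l^{(i)}\|_2\le\operatorname{tr}\mathbf{Z}_l^{(i)}=N$ and $\|\mathbf{K}_i\|_2\le N$, and since $\alpha_i\le\sqrt{\sum_j\alpha_j^2}=a$, one gets $\|\mathbf{X}_{i,l}\|_2\le\tfrac{8Na}{L}=:R$ (collecting constants to match the statement). For the matrix variance statistic $\big\|\sum_{i,l}\mathbb{E}[\mathbf{X}_{i,l}^2]\big\|_2$: using $\mathbf{Z}^2\preceq\|\mathbf{Z}\|_2\mathbf{Z}\preceq N\mathbf{Z}$ gives $\mathbb{E}[(\mathbf{Z}_l^{(i)}-\mathbf{K}_i)^2]=\mathbb{E}[(\mathbf{Z}_l^{(i)})^2]-\mathbf{K}_i^2\preceq N\mathbf{K}_i$, hence $\sum_{i,l}\mathbb{E}[\mathbf{X}_{i,l}^2]\preceq\tfrac{2N}{L}\sum_i\alpha_i^2\mathbf{K}_i\preceq\tfrac{2Na}{L}\mathbf{K}_{\mathrm{sm}}$; a parallel, looser estimate that routes through $\|\mathbf{K}_i\|_2\le N$ together with Cauchy–Schwarz $\sum_i\alpha_i\le\sqrt{m}\,a$ contributes the $\tfrac{N^2a^2\sqrt{m}}{L}$ piece, and combining yields $v:=\big\|\sum_{i,l}\mathbb{E}[\mathbf{X}_{i,l}^2]\big\|_2\le\tfrac{Na}{L}\big(2\|\mathbf{K}_{\mathrm{sm}}\|_2+Na\sqrt{m}\big)$. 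Then I plug $R,v$ into $P(\|\sum_k\mathbf{X}_k\|_2\ge\epsilon)\le N\exp\!\big(\tfrac{-\epsilon^2/2}{v+R\epsilon/3}\big)$ (the prefactor is $N$ because the matrices are $N\times N$; the two‑sided event is handled by bounding $\lambda_{\max}$ and $\lambda_{\min}$ separately, or by the intrinsic‑dimension form, to avoid an extra factor $2$), and use $\tfrac{-\epsilon^2/2}{v+R\epsilon/3}=\tfrac{-3\epsilon^2}{6v+2R\epsilon}$; substituting the above $R$ and $v$ gives exactly $\tfrac{-3\epsilon^2 L}{2Na\left(6\|\mathbf{K}_{\mathrm{sm}}\|_2+3Na\sqrt{m}+8\epsilon\right)}$.

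The main obstacle I anticipate is getting the matrix variance statistic into precisely the stated form — in particular, tracking how both the $\|\mathbf{K}_{\mathrm{sm}}\|_2$ term and the $Na\sqrt{m}$ term arise from the same sum (the former from the tight ordering $\mathbf{Z}^2\preceq N\mathbf{Z}$ and $\sum_i\alpha_i^2\mathbf{K}_i\preceq a\mathbf{K}_{\mathrm{sm}}$, the latter from Cauchy–Schwarz on the mixture weights), and calibrating the per‑summand norm bound so that the constants align with $8$ in the denominator. Secondary care is needed for the ``small $\epsilon$'' qualifier (beyond which the bound is vacuous) and for confirming that the operator‑norm event costs no extra dimensional factor; both are routine once the Bernstein parameters are pinned down. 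The remainder is bookkeeping, and the statement is in any case an adaptation of the RFF concentration results of \cite{lopez2014randomized,jung2022efficient} to the $N\times N$ kernel matrix.
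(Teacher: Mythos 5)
Your proposal is correct and follows essentially the same route as the paper's proof: the identical decomposition of $\hat{\mathbf{K}}_{\mathrm{sm}}-\mathbf{K}_{\mathrm{sm}}$ into $mL/2$ independent, mean-zero, Hermitian summands indexed by mixture component and spectral sample, followed by the matrix Bernstein inequality with the same uniform bound $R=8Na/L$ and the same variance statistic. The only real difference is cosmetic: your variance estimate via $\mathbf{Z}^2\preccurlyeq \|\mathbf{Z}\|_2\,\mathbf{Z}\preccurlyeq N\mathbf{Z}$ is in fact tighter than the paper's (which expands the square of $\bm{c}\bm{c}^{\top}+\bm{s}\bm{s}^{\top}$ and retains the sine--cosine cross term, producing the $Na\sqrt{m}$ piece), and you then deliberately relax it to the stated form, which is harmless.
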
 
\begin{proof}
See App. \ref{appendix:thm_SM_RFF_approx}
\vspace{-.1in}
\end{proof}

{
\begin{algorithm}[t!] 
    \caption{\underline{\textsl{advised}\textsc{rflvm}}: Auto-Differentiable Variational Inference for SM-Embedded RFLVMs
    }
    \label{alg:GPLVM_SM_RFF}
    \KwInput{Dataset $\vy$; Initialized model hyperparameters $\btheta$ and variational parameters $\bzeta$.}
    \While{iterations not terminated}{
        Sample $\mathbf{X}$ from $q(\vx) = \prod_{i = 1}^N \mathcal{N}(\x_i \vert \bm{\mu}_i, \bm{S}_i)$  using the reparameterization trick \\ 
        Sample $\mathbf{W}$ from $p(\vw)=\prod_{i=1}^m \prod_{l=1}^{{L}/{2}} s_i(\w)$ using the reparameterization trick \\
        Construct ${\Phi}_{\mathrm{sm}}(\vx)$ using the sampled  $\mathbf{X}$ and $\mathbf{W}$  \\
        Evaluate Term 1 of $\mathcal{L}$ through Eq.~\eqref{eq:evaluation_term_1}\\
        Evaluate Term 2 of $\mathcal{L}$ analytically\\
        Maximize $\mathcal{L}$ and update $\btheta$, $\bzeta$ using Adam\\ 
    }
    \KwOutput{$\btheta$, $\bzeta$.}
\end{algorithm}
}

Beyond the theoretical guarantees of the approximation, the new feature map in Eq.~\eqref{eq:SM_approximations_RFF_new} offers a crucial advantage--it renders the variational lower bound $\mathcal{L}$ differentiable w.r.t. mixture weights $\alpha_i$, leading to the straightforward applicability of the automatic differentiation tools for hyperparameter optimization.  
Leveraging the new feature map, we can apply gradient-based methods (e.g., Adam \cite{kingma2015adam}) to maximize $\mathcal{L}$ w.r.t. model hyperparamters $\btheta$ and the variational parameters $\bm{\zeta}\!=\!\{\bm{\mu}_i, \bm{S}_i\}_{i=1}^N$. The pseudocode summarized in Algorithm \ref{alg:GPLVM_SM_RFF} outlines the implementation of the proposed method, called \texttt{a}uto-\texttt{d}ifferentiable \texttt{v}ariational \texttt{i}nference for \textsc{sm}-\texttt{e}mbedde\texttt{d} \textsc{rff}-\textsc{lvm}, abbreviated as \adrflvm. It is noteworthy that for scenarios where $N\gg mL$, the computational complexity per iteration of \adrflvm scales as $\mathcal{O}(N(mL)^2)$, as elaborated in App. \ref{appendix:ELBO_deriviations}. Notably, this computational complexity aligns with that of the inducing point-based sparse GP method \cite{titsias2010bayesian}. However, \adrflvm enhances the capacity of the GPLVM and mitigates the need for optimizing the inducing points, 
resulting in a lightweight optimization problem while alleviating the model collapse.

\vspace{-.03in}
\section{Related Work} \label{sec:related_work}
We have already described the main differences between our method and inducing points-based methods throughout the paper, e.g., in \S~\ref{sec:VI_algorithm}. Below we briefly introduce other related work on latent variable modeling and refer the reader to App.~\ref{app:related} for more details.

\vspace{-.1in}
\paragraph{VAEs.}
Variational autoencoders (VAEs) \citep{kingma2013auto} skillfully integrate LVMs typically modeled by neural networks with variational inference \cite{bishop:2006:PRML}, empowering the model to generate novel data. Unfortunately, despite the considerable success demonstrated by VAEs in generative tasks \citep{kingma2013auto, zhao2020variational, Nakagawa2023, pmlr-v202-tran23a}, they struggle to capture the underlying compact and informative latent representations of the observed data, resulting in the well-known posterior collapse issue \citep{menon2022forget,wang2022posterior, lucas2019don, razavi2019preventing}, a facet of model collapse (see App.~\ref{app:related}). This phenomenon is partially attributed to the overfitting, stemming from optimizing a large number of parameters in the encoder of VAE, leading to homogeneous latent spaces  \citep{bowman2015generating, sonderby2016train,zhu2023markovian}.

\vspace{-.1in}
\paragraph{RFLVMs.} 
In addition to inducing points-based GPLVMs, the random feature latent variable model (RFLVM) adopts the RFF approximation of the kernel function as a variant of the GPLVM and leverages a Dirichlet process (DP) mixture of Gaussians to learn the associated spectral density of the kernel function \cite{rahimi2007random,oliva2016bayesian,gundersen2021latent,zhang2023bayesian}. Despite the capacity to approximate arbitrary stationary kernels, the effectiveness in addressing model collapse in the RFLVM might be compromised by the ``rich-get-richer" property inherent in the DP mixture prior \cite{gundersen2021latent}, which places a strong assumption regarding the data generation process \citep{poux2023powered}. A comprehensive comparison between our \adrflvm and the SOTA models can be found in Table~\ref{table:comparison}, App.~\ref{app:related}.

\vspace{-.03in}
\section{Experiments} \label{sec:experiments}

We showcase the impact of the projection variance and kernel flexibility on model collapse in \S~\ref{subsec:dpca_connection_experiments} and \S~\ref{subsec:S_shape_demo}. In \S~\ref{subsec:real_data_knn_acc} and \S~\ref{subsec:missing_data_reconstruction}, we further corroborate the superior performance of \adrflvm in latent representation learning on various real-world datasets. More experimental details can be found in App.~\ref{Appendix: Experiment_details}, and the code is publicly available at \href{https://github.com/zhidilin/advisedGPLVM}{https://github.com/zhidilin/advisedGPLVM}.

\vspace{-.03in}
\subsection{Projection Variance Matters}  
\label{subsec:dpca_connection_experiments} 
\vspace{-.03in}
To evaluate the impact of the projection variance in general GPLVM,  we apply the \adrflvm on the \textsc{mnist} dataset \cite{lecun1998mnist}. We quantify the degree of model collapse under two configurations of $\sigma^2$: learned and fixed. The degree of the model collapse is evaluated by counting the number of zero-columns in the learned latent variable $\hat{\vx}$ and measuring its \textsc{k}-nearest neighbors (\textsc{knn}) classification accuracy. Detailed results are depicted in Fig.~\ref{fig:sigma}. 

On the left-hand side of Fig.~\ref{fig:sigma}, it is observed that, when $\sigma^2$ is fixed, the latent variable learned by the \adrflvm rapidly collapses to zero as the value of $\sigma^2$ increases. This observation aligns with the findings in the linear GPLVM (see Proposition \ref{lemma:sigma_big}). Additionally, the inferior performance of the \textsc{knn} accuracy depicted on the right-hand side of Fig.~\ref{fig:sigma} illustrates that, without learning $\sigma^2$, the proposed \adrflvm tends to recover a vague and uninformative latent representation. In stark contrast, 
\adrflvm with a learned $\sigma^2$ effectively mitigates the risk of model collapse, irrespective of the initialization value of $\sigma^2$ or the metric employed. This supports our hypothesis regarding the importance of learning $\sigma^2$ to prevent model collapse in general GPLVMs. 

\begin{figure}[t!]
    \centering
    \includegraphics[width=.49\linewidth]{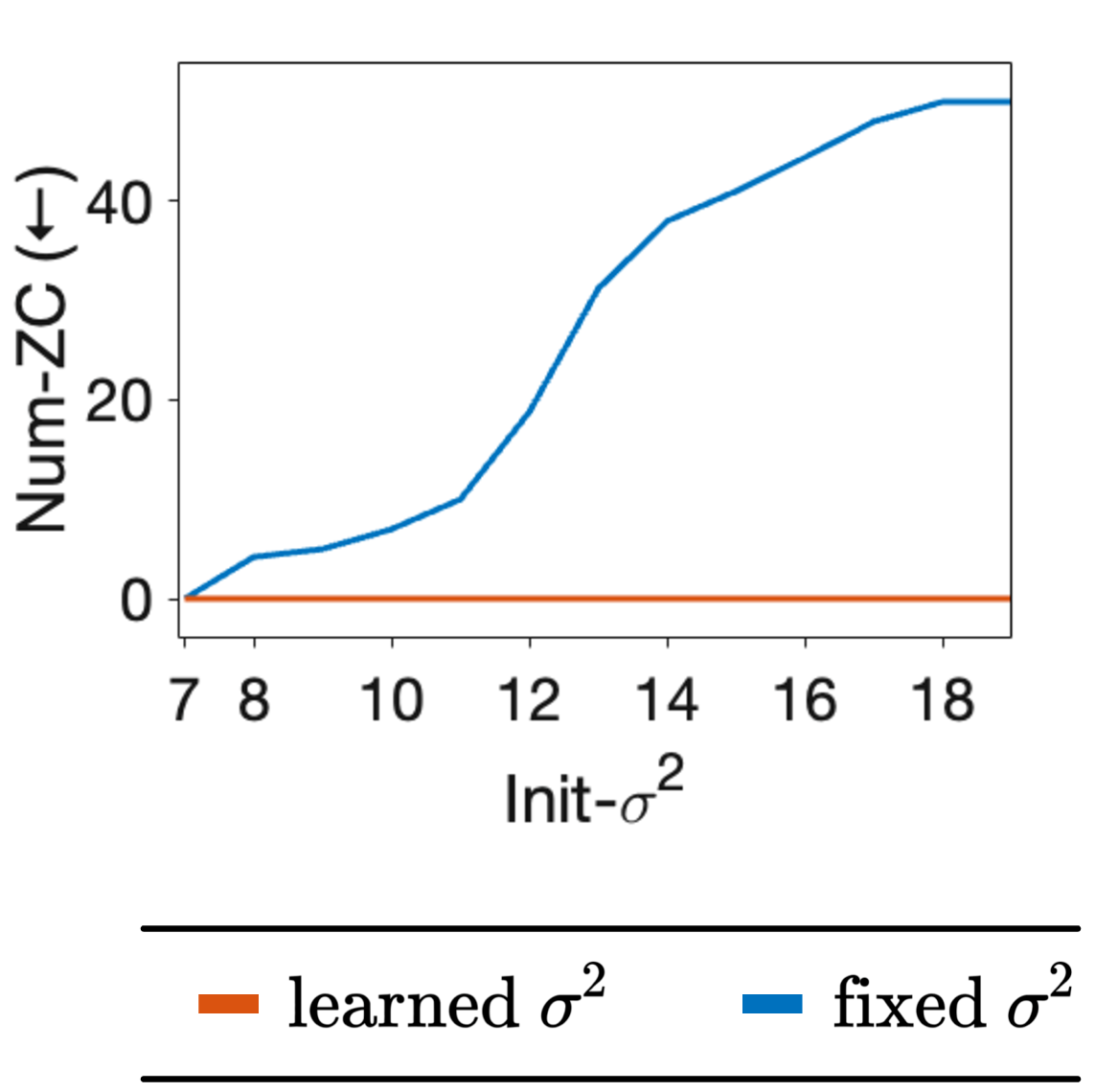} 
    \includegraphics[width=.49\linewidth]{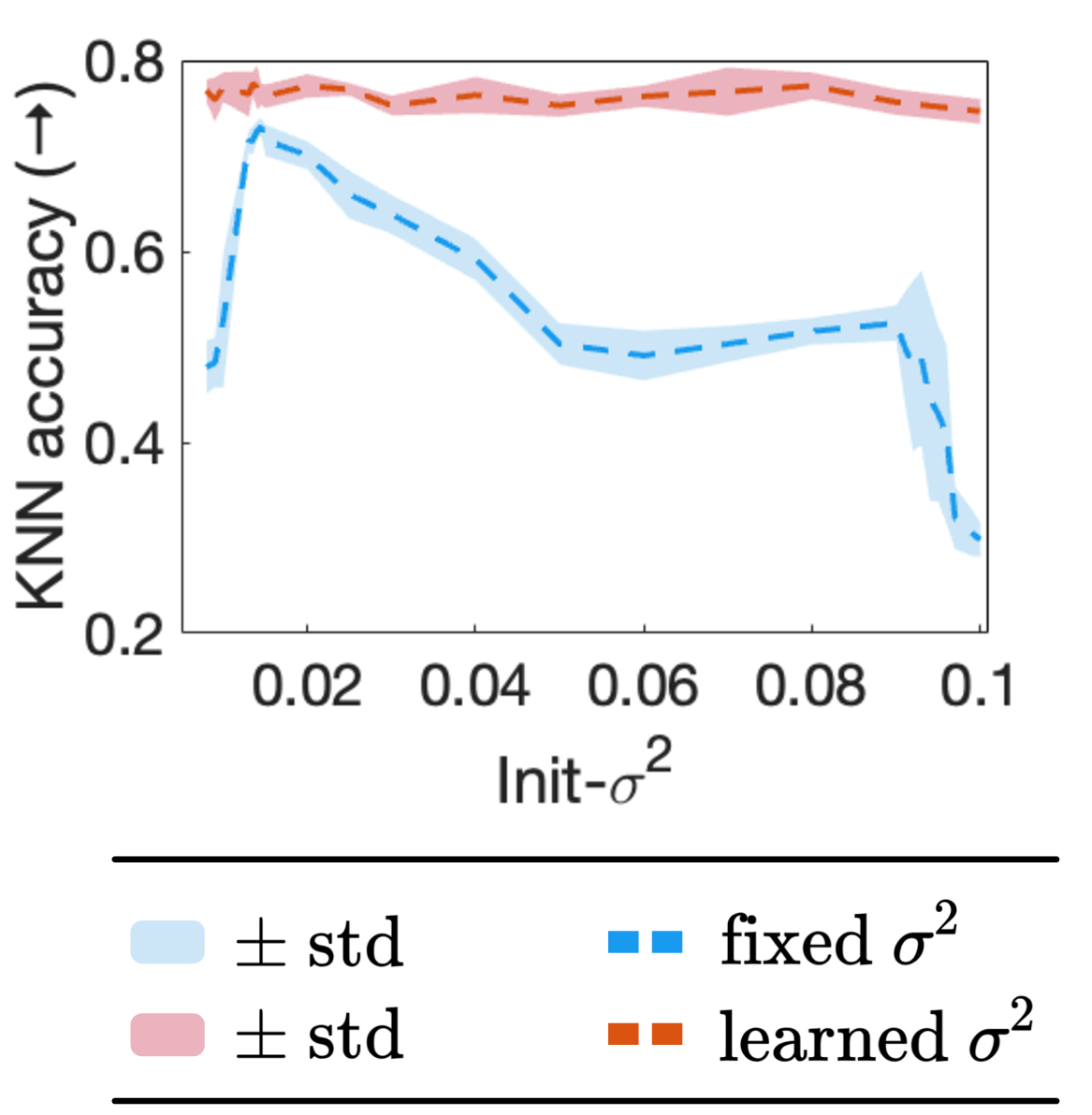} 
    \caption{ \textbf{Left}: The number of zero-columns (short as \textsc{n}um-\textsc{zc}) in the latent variable $\vx$ versus the initialization value of $\sigma^2$ (defined as \textsc{i}nit-$\sigma^2$). \textbf{Right}: \textsc{knn} classification accuracy against \textsc{i}nit-$\sigma^2$. Standard deviation is calculated over five experiments.
    \vspace{-.2in}
    }  
    \label{fig:sigma}
\end{figure}
%

%
\begin{figure*}[t!]
    \centering
    \includegraphics[width=.395\linewidth]{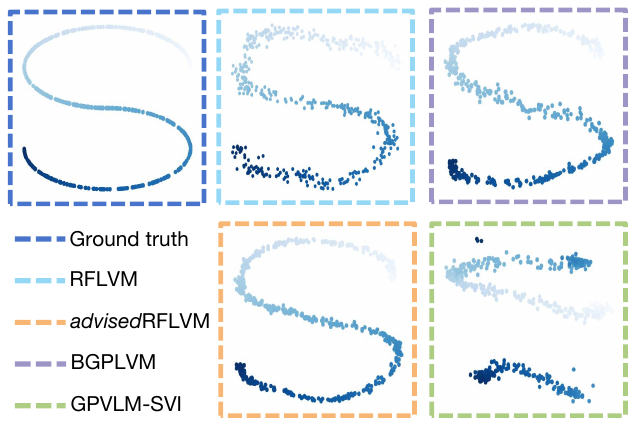}    \hspace{.3in}
    \includegraphics[width=.48\linewidth]{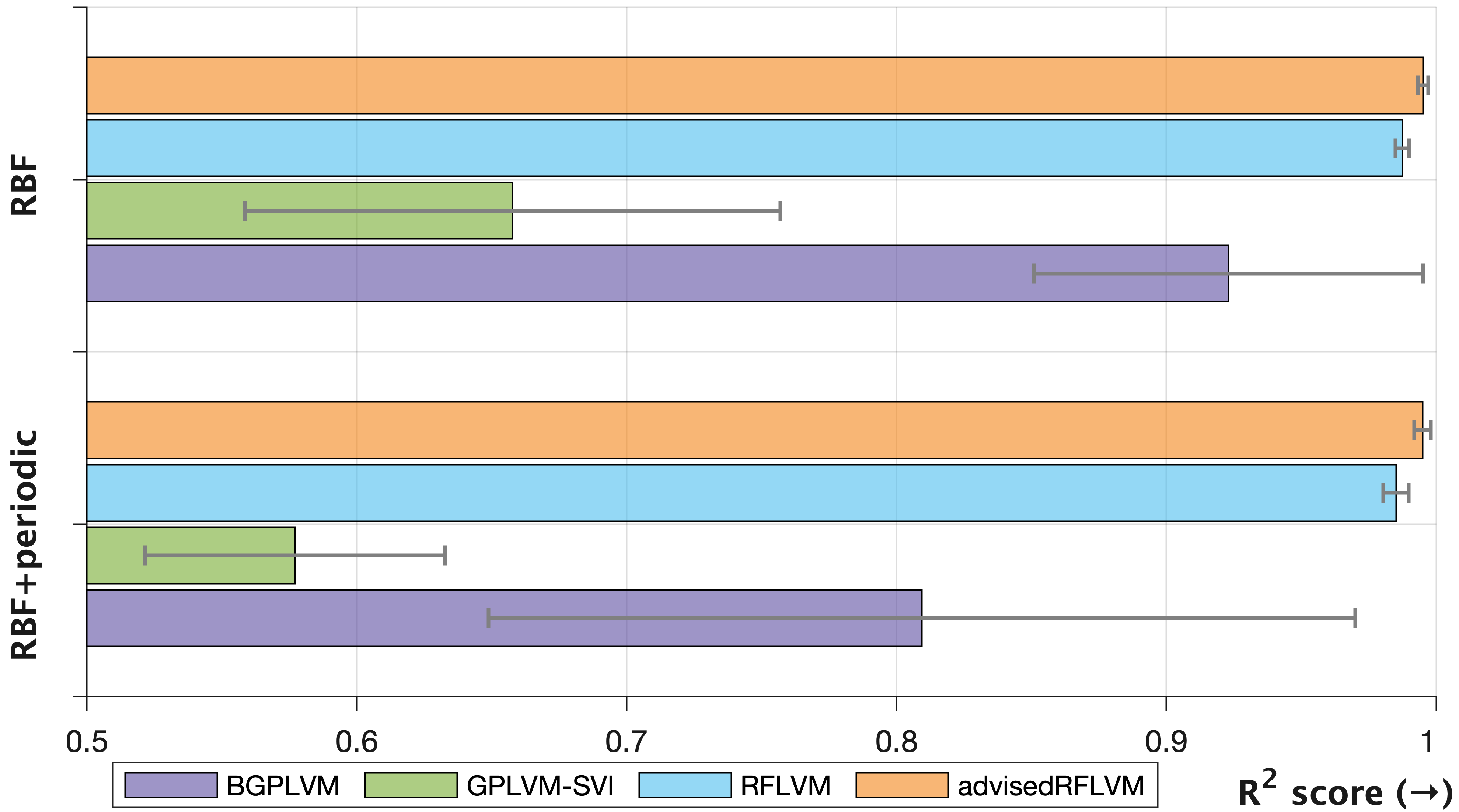}
    \caption{\textbf{Left}: Learned latent manifold in ``RBF+periodic'' dataset. \textbf{Right}: $\mathrm{R}^2$ score performance over different models in two datasets. 
    }
    \label{fig:S-shape-R2-score}
\end{figure*}

\begin{table*}[t]
\caption{Classification accuracy evaluated by fitting a \textsc{knn} classifier $(k=1)$ with five-fold cross-validation. Mean and standard deviation are computed over five experiments, and the top performance is in bold. \vspace{-.1in}}
\label{table:KNN}
\centering
\setlength{\tabcolsep}{4.0mm}
{
\scalebox{0.84}{
\begin{tabular}{c cccccc}
\toprule
\textsc{dataset} & PCA          & LDA               & Isomap            & HPF               & BGPLVM               & GPLVM-SVI          \\ \midrule \midrule
\rowcolor[HTML]{f2f2f2} 
\textsc{Bridges}    & 0.841 $\pm$ 0.007 & 0.668 $\pm$ 0.053 & 0.797 $\pm$ 0.025 & 0.544 $\pm$ 0.109 & 0.818 $\pm$ 0.037 & 0.796 $\pm$ 0.019  \\
\textsc{Cifar-10}   & 0.267 $\pm$ 0.002 & 0.227 $\pm$ 0.006 & 0.272 $\pm$ 0.006 & 0.208 $\pm$ 0.006 & 0.271 $\pm$ 0.014 & 0.251 $\pm$ 0.012  \\
\rowcolor[HTML]{f2f2f2} 
\textsc{Mnist}      & 0.365 $\pm$ 0.012 & 0.233 $\pm$ 0.026 & 0.444 $\pm$ 0.021 & 0.314 $\pm$ 0.040 & 0.567 $\pm$ 0.033 & 0.344 $\pm$ 0.054  \\
\textsc{Montreal}   & 0.678 $\pm$ 0.013 & 0.602 $\pm$ 0.028 & 0.709 $\pm$ 0.005 & 0.618 $\pm$ 0.001 & 0.725 $\pm$ 0.012 & 0.676 $\pm$ 0.010   \\
\rowcolor[HTML]{f2f2f2} 
\textsc{Newsgroups} & 0.392 $\pm$ 0.005 & 0.391 $\pm$ 0.018 & 0.397 $\pm$ 0.010 & 0.334 $\pm$ 0.019 & 0.385 $\pm$ 0.010 & 0.378  $\pm$ 0.018 \\
\textsc{Yale}       & 0.543 $\pm$ 0.008 & 0.338 $\pm$ 0.023 & 0.588 $\pm$ 0.017 & 0.511 $\pm$ 0.019 & 0.553 $\pm$ 0.036 & 0.521 $\pm$ 0.015  \\ \midrule 
\textsc{dataset}            & VAE               & NBVAE     &  DCA         &  CVQ-VAE            & RFLVM            & \textsl{advised}RFLVM           \\ \midrule \midrule
\rowcolor[HTML]{f2f2f2} 
\textsc{Bridges}    & 0.751 $\pm$ 0.016 & 0.758 $\pm$ 0.038 &   0.702 $\pm$ 0.036   &       0.688  $\pm$ 0.013
      & 0.846 $\pm$ 0.039 & \textbf{0.846 $\pm$ 0.015}   \\
\textsc{Cifar-10}  & 0.266 $\pm$ 0.002 & 0.259 $\pm$ 0.005 &   0.255 $\pm$ 0.019    &      0.224 $\pm$ 0.012
      & 0.284 $\pm$ 0.103 & \textbf{0.290 $\pm$ 0.006}    \\
\rowcolor[HTML]{f2f2f2} 
\textsc{Mnist}       & 0.643 $\pm$ 0.021 & 0.281 $\pm$ 0.012 &    0.171 $\pm$ 0.075     &     0.128 $\pm$ 0.005
     & 0.602 $\pm$ 0.055 & \textbf{0.795 $\pm$ 0.015}    \\
\textsc{Montreal}   & 0.668 $\pm$ 0.012 & 0.716 $\pm$ 0.009 &    0.685 $\pm$ 0.716    &       0.646 $\pm$ 0.003
    & 0.769 $\pm$ 0.010 & \textbf{0.789 $\pm$ 0.013}    \\
\rowcolor[HTML]{f2f2f2} 
\textsc{Newsgroups}& 0.385 $\pm$ 0.002 & 0.398 $\pm$ 0.010 &    0.399 $\pm$ 0.034     &      0.356 $\pm$ 0.019
    & 0.413 $\pm$ 0.009 & \textbf{0.418 $\pm$ 0.007}   \\
\textsc{Yale}     & 0.611 $\pm$ 0.020 & 0.456 $\pm$ 0.046 &    0.284 $\pm$ 0.054   &        0.338  $\pm$ 0.002
  & 0.653 $\pm$ 0.067 & \textbf{0.765 $\pm$ 0.010}  \\ \bottomrule
\end{tabular}
}
}\vspace{-.06in}
\end{table*} 

\vspace{-.03in}
\subsection{S-shaped Latent Manifold Learning}  
\label{subsec:S_shape_demo} \vspace{-.02in}

Next, we demonstrate the importance of kernel flexibility in preventing model collapse, utilizing two synthetic datasets, each consisting of $N\!\!=\!\!500$ observations with $M\!\!=\!\!100$ dimensions. Both datasets are generated from a GPLVM with a two-dimensional ($2$-D) latent $S$-shaped manifold, but employing distinct kernel configurations. One employs a basic RBF kernel, while the other utilizes a more complex combination of an RBF kernel and a periodic kernel \cite{williams2006gaussian}. We compare our \adrflvm with three GPLVM variants: BGPLVM \cite{titsias2010bayesian}, GPLVM-SVI \cite{lalchand2022generalised}, and RFLVM \cite{zhang2023bayesian,gundersen2021latent}. In the case of BGPLVM and GPLVM-SVI, the default setting (see App.~\ref{Appendix: Experiment_details}) is used except that the number of inducing points is selected from the set $\{6, 10, 20, 30, 60, 120\}$, which yields the best inference performance. 

Figure~\ref{fig:S-shape-R2-score} reports the results for the $S$-shaped manifold learning, where the coefficient of determination ($\mathrm{R}^2$ score) \cite{chicco2021coefficient} is used to quantify the ``closeness'' between the inferred manifold (after post-affine transformation) and the ground truth manifold. The results indicate that \adrflvm and RFLVM consistently outperform BGPLVM and GPLVM-SVI in both synthetic datasets. It is obvious that GPLVM-SVI exhibits the worst performance, and BGPLVM shows fluctuated performance, although, in some realizations, they can reasonably estimate the shape of $\vx$ (see the left illustration in Fig.~\ref{fig:S-shape-R2-score}). The fluctuated performance of BGPLVM and GPLVM-SVI suggests that optimizing the additional inducing points (variational parameters) can complicate the learning process and incur such instability. 

The performance gain of the \adrflvm and RFLVM can be attributed to the kernel flexibility, which is particularly evident when the dataset is generated from the underlying GPLVM with a hybrid of RBF kernel and periodic kernel. This validates the crucial role of kernel function flexibility in preventing model collapse. Nevertheless, \adrflvm consistently outperforms the RFLVM, although RFLVM theoretically is capable of approximating arbitrary stationary kernels as well. This discrepancy may stem from the biased assumption of DP priors for the spectral densities in RFLVM \cite{zhang2023bayesian}. Such bias can lead to unfair exposure for the density weights, resulting in only a few effective densities and a degenerated approximation capacity \cite{gundersen2021latent}. Moreover, the RFLVM is based on MCMC sampling which may be inferior in this setting to the \adrflvm, which optimizes the ELBO in terms of inference efficiency. 

%
\begin{table*}[t]
\caption{Missing data imputation on the \textsc{mnist} and \textsc{brendan} datasets.
\label{tab:missing_data}}
\vspace{-2ex}
\small
\setlength{\tabcolsep}{1.05mm}{
    \scalebox{.92}{
        \begin{tabular}{cr|| cccc| cccc| cccc| cccc}
        \toprule
        \multirow{2}{*}{\textsc{dataset}} 
        & \multirow{2}{*}{\textsc{metric}}   
        & \multicolumn{4}{c|}{\textsc{vae}}  
        & \multicolumn{4}{c|}{\textsc{bgplvm}}      
        & \multicolumn{4}{c|}{\textsc{rflvm}}  
        & \multicolumn{4}{c}{\adrflvm}   
        \\  
        \cmidrule(lr){3-6}
        \cmidrule(lr){7-10}
        \cmidrule(lr){11-14}
        \cmidrule(lr){15-18}
        & &0\% &10\% &30\% &60\%   &0\% &10\% &30\% &60\%   &0\% &10\% &30\% &60\%   &0\% &10\% &30\% &60\%
        \\ \midrule  \midrule
        \multirow{2}{*}{\textsc{mnist}}  
        & \textsc{knn acc} ($\uparrow$)    & 0.715 & 0.689 &  0.660 & 0.585  &0.603 &0.598 &0.541 &0.476  &0.602 &0.391 &0.345 &0.273  &\textbf{0.806} &\textbf{0.802} &\textbf{0.777} &\textbf{0.636} \\
        & \textsc{test mse} ($\downarrow$) &0.035& 0.038 &   0.045 &  \textbf{0.068} &0.048 &0.040 & 0.057 &0.098 &0.066 &0.067 &0.070 & 0.120 &\textbf{0.025} & \textbf{0.028} &\textbf{0.039}& {0.068}
        \\ \midrule \midrule
        \multirow{1}{*}{\textsc{brendan}}  
        & \textsc{test mse} ($\downarrow$) &0.005 & 0.009 &  \textbf{0.043} & \textbf{0.150} & 0.006 &0.041 &0.087 &0.197 & 0.010 & 0.015 & 0.049 & 0.153 & \textbf{0.003} & \textbf{0.009} & {0.045} & {0.152} 
        \\ \bottomrule
        \end{tabular}
    }
}
\vspace{-1ex}
\end{table*}

\subsection{Real Dataset Evaluation}  
\label{subsec:real_data_knn_acc} 

 This subsection further demonstrates the ability of \adrflvm to capture the latent space on multiple real-world datasets (see Table~\ref{table:KNN}), where the dataset sizes of \textsc{mnist} and \textsc{cifar} are reduced for accommodating the high complexity of RFLVM (see App.~\ref{app:ds_preprocessing} for further details). For each dataset, we hold the labels and employ them to evaluate the estimated latent space using \textsc{knn} classifier with five-fold cross-validation. 
 In addition to the GPLVM variants used in \S~\ref{subsec:S_shape_demo}, we also encompass various recent VAEs \citep{kingma2019introduction,zhao2020variational,eraslan2019single,zheng2023online} and classic dimensionality reduction methods. The \textsc{knn} classification accuracy results for all the competing methods are presented in Table~\ref{table:KNN}. 

The results demonstrate that \adrflvm consistently achieves the highest \textsc{knn} accuracy across all datasets. This suggests that the latent variables estimated by \adrflvm are more informative compared to the other methods.  The four classic methods, PCA \citep{wold1987principal,pearson1901liii}, hierarchical Poisson factorization (HPF) \cite{gopalan2015scalable}, latent Dirichlet allocation (LDA) \cite{blei2003latent}, and Isomap \citep{balasubramanian2002isomap} showing inferior performance, are primary attributed to their limited model flexibility.

For the VAE models, despite their impressive approximation capabilities through neural network-based decoders and encoders \cite{kingma2019introduction}, they often fall short in their latent space learning performance. This is because optimizing numerous neural network parameters can result in overfitting, rendering these deterministic neural networks directed toward wrong latent spaces. In contrast, GPLVM variants prevent the need for neural network parameter optimization. More importantly, the inherent regularization imposed by the GP prior mitigates overfitting and thus enhances the generalization capability for latent space learning \cite{wilson2020bayesian}. These lead to GPLVM-based models being expected to attain higher \textsc{knn} accuracy. Nevertheless, the results in Table~\ref{table:KNN} show that BGPLVM and GPLVM-SVI can only attain comparable performance compared to the PCA. This mainly attributed to the inherently inadequate kernel flexibility and the additional optimization burden of the variational parameters. 
RFLVM  consistently exhibits a slightly inferior performance compared to \adrflvm, primarily due to the unfair exposure of density weights and the inefficient and unscalable MCMC inference algorithm mentioned in \S~\ref{subsec:S_shape_demo} and \S~\ref{sec:related_work}. We also conducted additional simulations on larger datasets. The results, presented in Appendix \ref{app:larger_datasets}, emphasize the superiority of \adrflvm over state-of-the-art variants regardless of the dataset size.

\subsection{Missing Data Imputation}  
\label{subsec:missing_data_reconstruction}

This subsection further evaluates the performance of \adrflvm in the task of imputing missing data on two image datasets, namely \textsc{mnist} and \textsc{brendan}  \cite{roweis2000nonlinear}. Specifically, we randomly hold out a certain proportion (0\%, 10\%, 30\%, and 60\%) of the elements in the observed data matrix, $\vy$, and subsequently we utilize \adrflvm to estimate latent variables $\mathbf{X}$ from the incomplete dataset (denoted as $\mathbf{Y}_{obs}$). We then impute the missing values $\mathbf{Y}_{miss}$ by their posterior mean $\hat{\vy}_{miss} \!=\! \mathbb{E}[ \mathbf{Y}_{miss} \mid  \vx, \mathbf{Y}_{obs}, -]$. The imputation performance is evaluated through the mean square error (MSE) between $\hat{\vy}_{miss}$ and the ground-truth ${\vy}_{miss}$.  Additionally, \textsc{knn} classification accuracy is reported for the \textsc{mnist} dataset to illustrate the latent representation learning results. Table~\ref{tab:missing_data} presents the performance of the \adrflvm against competing methods. The results indicate that \adrflvm outperforms most competitors in reconstructing observations and recovering latent representations, regardless of the proportion of missing data. 
Despite VAE exhibiting reconstruction capabilities comparable to \adrflvm, it still lags behind in recovering informative latent variables due to its potential overfiting and inherent posterior collapse issues \citep{wang2022posterior}. More details about the reconstruction performance of \adrflvm  are provided in App.~\ref{app:missing_data}, showing its superior ability to restore missing pixels.

\vspace{-0.4ex}
\section{Conclusions}
\vspace{-0.4ex}
We have introduced our novel \adrflvm to address model collapse due to inadequate kernel flexibility and inappropriate projection variance selection in GPLVMs. By integrating the SM kernel and the differentiable RFF approximation, our \adrflvm not only enhances model flexibility but also enables the use of modern automatic differentiation tools for optimizing essential parameters, including the projection variance within the variational inference framework. Empirical results across diverse datasets corroborate the superiority of our \adrflvm in learning compact and informative latent representations, highlighting the importance of learning projection variance and kernel flexibility in mitigating model collapse. Furthermore, our model outperforms various state-of-the-art latent variable models, including VAEs and other GPLVM variants. In future work, we are focusing on how to further enhance the variational inference algorithm presented in this paper. We hope that, through our endeavors, we may scale up our LVM for scenarios with massive data sets as an efficient alternative to the resource-intensive deep learning models.

\section*{Acknowledgements}
The authors would like to thank the anonymous referees for their valuable comments that improved the quality of the paper.
The work of Feng Yin was supported by the NSFC under Grant No. 62271433, and in part by the Shenzhen Science and Technology Program under Grant No. JCYJ20220530143806016.
The work of Michael Minyi Zhang was supported by the HKU-URC Seed Fund for Basic Research for New Staff. 

\section*{Impact Statement}
This work introduces a novel probabilistic latent variable model tailored to effectively capture the underlying structures of the observed data, which allows us to provide informative but concise foundational knowledge for analyzing highly complex tasks, such as the analysis of social issues, research on human behavior, and exploration of cognitive mechanisms. Technically, this work, conducting theoretical analyses on the impact of the projection variance on model collapse, will strengthen the understanding of broader researchers and engineers on the ``default'' learning of the projection variance. We also carefully examine the impact of kernel flexibility, and all these rigorous examinations of the potential reasons for model collapse enhance model interpretability, which is crucial for safety-critical systems such as autonomous driving and intelligent healthcare. 
\vspace{-.1in}
\paragraph{Limitations and future works.}
Our model faces limitations in handling out-of-distribution data, which requires explicitly learning an encoding function from observable data points into a latent representation. One potential solution to address this is to assume a $\vy$-dependent parametric variational distribution of latent variables, $q(\vx \vert \vy)$, where the parameters of the distribution are modeled by an encoder network that takes the observation $\vy$ as input. Consequently, upon completion of the training process, the encoder network can be employed to infer the latent variables of the out-of-distribution data. Another limitation is that despite the reduction in the complexities (linear with $N$), the practical training time of our method may not be endurable for massive datasets.

\bibliography{main}
\bibliographystyle{icml2024}

\newpage

\appendix
\onecolumn




\glsresetall
\renewcommand{\contentsname}{\centering\textsc{Appendices}}
\tableofcontents
\addtocontents{toc}{\protect\vspace{10pt}}            
\addtocontents{toc}{\protect\setcounter{tocdepth}{3}} 
\vspace{.3in}
\section{Model Collapse Mechanism Revelation}



In \S~\ref{app:relation_dpca_gplvm}, we provide a detailed introduction to dual probabilistic principal analysis (DPPCA) \citep{lawrence2005probabilistic} and establish its connection with the linear GPLVM. Building upon this connection, a detailed derivation of Theorem \ref{theo:dpca_stationary_main_text} is provided, delineating the forms of stationary points. Through further exploration of the optimization landscapes around stationary points, we provide detailed proofs for Proposition \ref{theo:sigma} and Proposition \ref{lemma:sigma_big}, located in \S~\ref{app:sigma} and \S.~\ref{app:sigma_big}, respectively.

\subsection{Special Case of GPLVM: Dual Probabilistic Principal Analysis (DPPCA)}
\label{app:relation_dpca_gplvm}

In DPPCA \citep{lawrence2005probabilistic}, each observed data point $\mathbf{y}_i \in \mathbb{R}^{M} $ is generated from a latent variable $\mathbf{x}_i \in \mathbb{R}^{Q}$ through a linear transformation $\mathbf{A} \in \mathbb{R}^{M \times Q}$, i.e., 
\begin{subequations}
    \begin{align}
         & \mathbf{y}_{i} \sim \mathcal{N}\left(  \va \mathbf{x}_i, \sigma^{2} \mathbf{I}_{M} \right),     \\ 
         & p(\va) \sim \prod^{M} \mathcal{N} \left( \mathbf{0}, \mathbf{I}_{Q} \right), 
    \end{align}
\end{subequations}
where $\sigma^2$ represents the projection variance, representing the uncertainty. For $N$ observed data points in DPPCA, denoted as $\vy \in \mathbb{R}^{N \times M}$, the marginal likelihood, obtained by marginalizing the transformation matrix $\mathbf{A}$, can be represented as follows: 
\begin{align}
    \setlength{\abovedisplayskip}{4.5pt}
    \setlength{\belowdisplayskip}{4.5pt}
    \mathbf{y}_{:,j} \vert \vx \sim \mathcal{N}\left( \mathbf{0}, \mathbf{X} \mathbf{X}^{\top} + \sigma^{2} \mathbf{I}_N \right), \quad j = 1, \ldots, M,  
    \label{eq:ppca}
\end{align}
where $\mathbf{y}_{:,j}$ denotes $j$-th column in the observed data $\vy$. Consequently, the maximum likelihood estimate (MLE) for the latent variable, denoted as $ \hat{\vx}_{\text{DPPCA}} $, can be derived by maximizing the logarithm of Eq.~\eqref{eq:ppca} through, e.g., gradient-based methods, i.e.,  
\begin{equation}
\label{eq:DPPCA_MLE}
 \hat{\vx}_{\text{DPPCA}} =  \max_{\vx} \ \log  \prod_{j=1}^M \mathcal{N}\left(\mathbf{y}_{:,j} \mid \bm{0}, \vx \vx^{\top} + \sigma^2 \mathbf{I}_N \right). 
\end{equation}
Building upon Eq.~\eqref{eq:DPPCA_MLE} and the optimization problem given in Eq.~\eqref{eq:GPLVM_MLE}, a connection between GPLVM and DPPCA can be established \citep{lawrence2005probabilistic}, encapsulated in the following corollary: 
\begin{corollary}
\label{theo:relation_dpca_gplvm}
    Assuming the kernel function in GPLVM is defined as the inner product kernel with $k(\mathbf{x}, \mathbf{x}^{\prime}) = \mathbf{x}^{\top} \mathbf{x}^{\prime}$, the stationary points for the linear GPLVM, as expressed in Eq.~\eqref{eq:GPLVM_MLE}, are identical to the stationary points of DPPCA, $\hat{\vx}_{\text{DPPCA}}$.  
    \vspace{-.12in} 
\end{corollary}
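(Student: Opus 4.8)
The plan is to show that, under the inner product kernel, the log-likelihood objective maximized by the linear GPLVM is \emph{literally the same function of $\vx$} as the one maximized by DPPCA; once the two objectives are identified as the same map on a common domain, equality of their stationary sets is immediate.

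First I would instantiate the GPLVM marginal likelihood of Eq.~\eqref{eq:GPLVM_marginal_likelihood} with $k(\x,\x^\prime) = \x^\top\x^\prime$. The Gram matrix then has entries $[\mathbf{K}]_{i,j} = \x_i^\top\x_j$, which is exactly the $(i,j)$ entry of $\vx\vx^\top$ for $\vx = [\x_1,\ldots,\x_N]^\top \in \mathbb{R}^{N\times Q}$; hence $\mathbf{K} = \vx\vx^\top$. Substituting this identity yields
\begin{equation}
\setlength{\abovedisplayskip}{4pt}\setlength{\belowdisplayskip}{4pt}
 p(\vy\mid\vx) = \prod_{j=1}^M \mathcal{N}\!\left(\y_{:,j}\mid\bm{0},\ \vx\vx^\top + \sigma^2\mathbf{I}_N\right),
 \nonumber
\end{equation}
which coincides verbatim with the DPPCA marginal likelihood in Eq.~\eqref{eq:ppca}. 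Therefore the linear GPLVM objective $L(\vx) = \log p(\vy\mid\vx)$ from Eq.~\eqref{eq:GPLVM_MLE} and the DPPCA objective from Eq.~\eqref{eq:DPPCA_MLE} are the same function $\mathbb{R}^{N\times Q}\to\mathbb{R}$. Since two functions that agree on a common domain have identical gradients everywhere, the sets $\{\vx : \nabla_\vx L(\vx) = \mathbf{0}\}$ coincide, which is the claim; the explicit form then follows by plugging $\mathbf{K} = \vx\vx^\top$ into the stationarity analysis of Theorem~\ref{theo:dpca_stationary_main_text} (equivalently, App.~A of \cite{lawrence2005probabilistic}).

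The step requiring the most care — conceptual rather than computational — is making the correspondence rigorous despite the fact that the two models marginalize different latent objects: the GPLVM integrates out the GP function values $\mathbf{f} = (f(\x_1),\ldots,f(\x_N))$ (equivalently the map $f$), whereas DPPCA integrates out the finite linear map $\mathbf{A}$. I would resolve this by observing that for a zero-mean GP with the linear kernel the induced prior on $\mathbf{f}$ is exactly $\mathcal{N}(\bm{0},\vx\vx^\top)$, and that marginalizing $\mathbf{A}\sim\prod^M\mathcal{N}(\bm{0},\mathbf{I}_Q)$ out of $\y_i = \mathbf{A}\x_i + \bm{v}_i$ column-by-column produces precisely the same marginal covariance $\vx\vx^\top + \sigma^2\mathbf{I}_N$. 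After this identification the remainder is the bookkeeping above, and I do not anticipate any genuine obstacle.
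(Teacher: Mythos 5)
Your proposal is correct and follows essentially the same route as the paper: substitute the inner product kernel so that $\mathbf{K}=\vx\vx^\top$, observe that the GPLVM marginal likelihood then coincides with the DPPCA marginal likelihood of Eq.~\eqref{eq:ppca}, and conclude that identical objectives have identical stationary points. Your extra remark verifying that marginalizing the GP function values and marginalizing $\mathbf{A}$ yield the same covariance $\vx\vx^\top+\sigma^2\mathbf{I}_N$ is a sound (and slightly more careful) justification of a step the paper's proof takes for granted, but it does not change the argument.
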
 
\begin{proof}
    If the kernel function is the inner product kernel, i.e., $k(\x, \x^{\prime}) = \x^{\top} \x^{\prime}$, the marginal likelihood of the linear GPLVM can be reformulated as, 
    \begin{equation}
        \begin{aligned}
        p(\vy \vert \vx) = \prod_{j=1}^M \mathcal{N}\left(\mathbf{y}_{:,j} \vert \bm{0}, \vx \vx^{\top} + \sigma^2 \mathbf{I}_N \right).
        \end{aligned}
    \end{equation}
    Then, the stationary points of the linear GPLVM, $ \hat{\vx}$, is given by
    \begin{align}
        \hat{\vx}
        &= \max_{\vx} \log p(\vy \vert \vx) = \max_{\vx}  M \left\{  -\frac{N}{2} \log 2 \pi - \frac{1}{2} \log \left|  \vx \vx^{\top} + \sigma^2 \mathbf{I}_N \right|  \right\} - \frac{1}{2} \operatorname{tr}\left( \left(  \vx \vx^{\top} + \sigma^2 \mathbf{I}_N\right)^{-1} \vy \vy^{\top} \right). 
    \end{align}
   The stationary points of DPPCA, $\hat{\vx}_{\text{DPPCA}}$, given in Eq.~\eqref{eq:DPPCA_MLE}, can be reformulated as 
    \begin{align}
        \hat{\vx}_{\text{DPPCA}} &= \max_{\vx}  M \left\{  -\frac{N}{2} \log 2 \pi - \frac{1}{2} \log \left|  \vx \vx^{\top} + \sigma^2 \mathbf{I}_N \right| \right\}  - \frac{1}{2} \operatorname{tr}\left( \left(  \vx \vx^{\top} + \sigma^2 \mathbf{I}_N\right)^{-1} \vy \vy^{\top} \right).  
        \label{eq:dppca_log_marg}
    \end{align} 
    It is evident that the stationary points of the linear GPLVM is identical to the stationary points of DPPCA.   
\end{proof}

\subsection{Proof of Theorem \ref{theo:dpca_stationary_main_text}}
\label{app:dpca_stationary_x}
This subsection conducts a comprehensive derivation, elucidating the stationary points of the linear GPLVM. Our derivation generally adheres to the one in  \cite{lawrence2005probabilistic}, albeit with subtle distinctions.
\begin{proof}
    Recall that, the log marginal likelihood can be expressed as 
      \begin{align}
        L \triangleq M \left\{  -\frac{N}{2} \log 2 \pi - \frac{1}{2} \log |  \vx \vx^{\top} + \sigma^2 \mathbf{I}_N |  \right\} - \frac{1}{2} \operatorname{tr}\left( \left(  \vx \vx^{\top} + \sigma^2 \mathbf{I}_N\right)^{-1} \vy \vy^{\top} \right).  
        \label{eq: ppca_ll_1}
   \end{align}
   Define $\vk \triangleq \vx \vx^{\top} + \sigma^2 \mathbf{I}_N$, Eq.~\eqref{eq: ppca_ll_1} could be reformulated as 
   \begin{align}
        L = M \left\{  -\frac{N}{2} \log 2 \pi - \frac{1}{2} \log | \mathbf{K} | \right\}  - \frac{1}{2} \operatorname{tr}( \mathbf{K}^{-1} \vy \vy^{\top} ).   
        \label{eq: ppca_ll}
   \end{align} 
   Taking the gradient of Eq.~\eqref{eq: ppca_ll} with respect to $\vx$, we have 
   \begin{align}
       \frac{ \partial L}{ \partial \vx } =  \mathbf{K}^{-1} \vy \vy^{\top} \mathbf{K}^{-1} \vx - M \mathbf{K}^{-1} \vx.  
   \end{align}
   Setting this gradient to zero, the stationary points of Eq.~\eqref{eq: ppca_ll} should satisfy
    \begin{align}
         \frac{1}{M} \vy \vy^{\top} \mathbf{K}^{-1} \vx = \vx. 
         \label{eq:ppca_fixed_point}
    \end{align}
    According to Lemma \ref{Lemma:Woodbury_matrix_identity}, we have 
    \begin{align}
        \mathbf{K}^{-1} \vx = \left[ \vx \vx^{\top} + \sigma^2 \mathbf{I}_N \right]^{-1} \vx = \vx    \left[ \vx^{\top} \vx + \sigma^2 \mathbf{I}_Q \right]^{-1}. 
        \label{eq:push_through}
    \end{align}
    We conduct singular value decomposition (SVD) to $\vx$, and get $\vx = \vu \vl \vv^{\top}$,  where $\vu \in \mathbb{R}^{N \times Q}$, $\vl = \operatorname{diag}(l_1, l_2, \ldots, l_Q)\in \mathbb{R}^{Q \times Q}$ is a diagonal matrix, and $\vv\in \mathbb{R}^{Q\times Q}$.   
    Together with Eq.~\eqref{eq:push_through} and  Eq.~\eqref{eq:ppca_fixed_point}, we have    
    \begin{subequations}
    \label{eq:x_solution}
        \begin{align}
        & \frac{1}{M} \vy \vy^{\top} \vu \vl \left[ \vl^2 + \sigma^2 \mathbf{I}_Q \right]^{-1} \vv^{\top} = \vu \vl \vv^{\top}, \\ 
        \Rightarrow \qquad & \frac{1}{M} \vy \vy^{\top} \vu \vl \left[ \vl^2 + \sigma^2 \mathbf{I}_Q \right]^{-1}  = \vu \vl , \\ 
        \Rightarrow \qquad & \frac{1}{M} \vy \vy^{\top} \vu \vl = \vu ( \sigma^2 \mathbf{I}_Q + \vl^2) \vl. 
        \end{align}
    \end{subequations}
    Then, we have:
    \begin{itemize}
        \item  If $l_i \neq 0$, it indicates that $\frac{1}{M} \vy \vy^{\top} \u_i = \u_i (\sigma^2 + l_i^{2})$, implying $\u_i$ is an eigenvector of $\frac{1}{M} \vy \vy^{\top}$ corresponding to the eigenvalue $\lambda_i = \sigma^2 + l_i^{2}$. 
        \item  If $l_i = 0$, the vector $\mathbf{u}_i$ is arbitrary. We can set it to be an eigenvector of $\frac{1}{M} \vy \vy^{\top}$ for consistency. 
    \end{itemize}
     Consequently, all potential stationary solutions for $\vx$ can be written as 
    \begin{align}
        \hat{\vx} = \vu_{Q} \left( \boldsymbol{\Lambda}_{Q} - \sigma^2 \mathbf{I}_Q \right)^{1/2} \mathbf{R}, 
        \label{eq:opt_x_ppca}
    \end{align}
    where $\vu_{Q} \in \mathbb{R}^{N \times Q}$ is a matrix whose columns are eigenvectors of $\frac{1}{M} \vy \vy^{\top}$, $\mathbf{R} \in \mathbb{R}^{Q \times Q} $ is an arbitrary orthogonal matrix and $\boldsymbol{\Lambda}_{Q} \in \mathbb{R}^{Q \times Q} $ is a
    diagonal matrix with: 
    \begin{align}
        [\boldsymbol{\Lambda}_{Q}]_{i, i}=
        \left\{\begin{aligned} 
        & \lambda_i, \text { the corresponding eigenvalue to } \mathbf{u}_i, \text { or }, \\
        & \sigma^2. 
        \end{aligned}\right.
        \label{eq:dppca_lambda}
    \end{align} 
\end{proof}

\subsection{Proof of Proposition \ref{theo:sigma}}
\label{app:sigma}


\subsubsection{Auxiliary Theorem}

Before delving into the proof, we first proceed to characterize the stationary point of $\sigma^2$ in the linear GPLVM, which is summarized in the following theorem. 
\begin{theorem}
    \label{theo:dpca_stationary}
    Given $\hat{\vx}$, stationary points of the projection variance, denoted as $\hat{\sigma}^2$, could be obtained by solving the following optimization problem  
    \begin{align}
        \max_{\sigma^2} \log P(\vy \mid \hat{\vx}). 
    \end{align} 
    It turns out that $\hat{\sigma}^2$ takes the following form:
    \begin{align}
        \hat{\sigma}^2 &= \frac{1}{N-Q^{\prime}} \sum_{j=Q^{\prime}+1}^{N} \lambda_j,  
        \label{eq:ppca_middle_sigma}
    \end{align}
    where $Q^\prime$ is the number of eigenvalues retained in $\bm{\Lambda}_Q$. 
    \label{eq: ppca_stationary_sigma}
\end{theorem}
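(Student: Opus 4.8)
The plan is to substitute the stationary form of $\hat{\vx}$ from Theorem~\ref{theo:dpca_stationary_main_text} into the marginal log-likelihood of Eq.~\eqref{eq:dppca_log_marg}, reduce the problem to a one-dimensional optimization over $\sigma^2$ by diagonalizing the induced covariance, and then set the derivative to zero.

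\textbf{Step 1 (reduce to a univariate problem).} I would write the marginal log-likelihood, up to an additive constant independent of $\sigma^2$, as $\log P(\vy\mid\vx) = -\tfrac{M}{2}\left(\log\lvert\mathbf{C}\rvert + \operatorname{tr}(\mathbf{C}^{-1}\mathbf{S})\right)$ with $\mathbf{C}\triangleq\vx\vx^\top+\sigma^2\mathbf{I}_N$ and $\mathbf{S}\triangleq\tfrac{1}{M}\vy\vy^\top$. Plugging in $\hat{\vx}=\vu_Q(\boldsymbol{\Lambda}_Q-\sigma^2\mathbf{I}_Q)^{1/2}\mathbf{R}$ from Eq.~\eqref{eq:opt_x_ppca} gives $\hat{\vx}\hat{\vx}^\top=\vu_Q(\boldsymbol{\Lambda}_Q-\sigma^2\mathbf{I}_Q)\vu_Q^\top$, hence
\[
\mathbf{C}=\vu_Q\boldsymbol{\Lambda}_Q\vu_Q^\top+\sigma^2\left(\mathbf{I}_N-\vu_Q\vu_Q^\top\right).
\]
Because, by construction in Eq.~\eqref{eq:dppca_lambda}, the $Q'$ columns of $\vu_Q$ retaining a genuine eigenvalue are orthonormal eigenvectors of $\mathbf{S}$, while the remaining $Q-Q'$ columns (for which $[\boldsymbol{\Lambda}_Q]_{ii}=\sigma^2$) contribute nothing to $\hat{\vx}\hat{\vx}^\top$, the spectrum of $\mathbf{C}$ is exactly $\{\lambda_1,\dots,\lambda_{Q'}\}$ together with $\sigma^2$ of multiplicity $N-Q'$. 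Consequently $\log\lvert\mathbf{C}\rvert=\sum_{i=1}^{Q'}\log\lambda_i+(N-Q')\log\sigma^2$, and since the retained $\mathbf{u}_i$ are eigenvectors of $\mathbf{S}$, $\operatorname{tr}(\mathbf{C}^{-1}\mathbf{S})=Q'+\sigma^{-2}\sum_{j=Q'+1}^{N}\lambda_j$; the $\sigma^2$-dependence in the retained block cancels.

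\textbf{Step 2 (optimize).} The objective as a function of $\sigma^2$ is then, up to constants, $-\tfrac{M}{2}\left((N-Q')\log\sigma^2+\sigma^{-2}\sum_{j=Q'+1}^{N}\lambda_j\right)$. Differentiating and setting the derivative to zero yields $(N-Q')/\sigma^2=\sigma^{-4}\sum_{j=Q'+1}^{N}\lambda_j$, i.e. the claimed $\hat{\sigma}^2=\tfrac{1}{N-Q'}\sum_{j=Q'+1}^{N}\lambda_j$ of Eq.~\eqref{eq:ppca_middle_sigma}; a second-derivative check confirms this unique positive root is a maximizer.

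\textbf{Main obstacle.} The delicate part is the bookkeeping in Step~1: rigorously arguing that after substituting $\hat{\vx}$ the matrix $\mathbf{C}$ has precisely the stated spectrum (the discarded columns of $\vu_Q$ are inert, the remaining $N-Q'$ eigenvalues are \emph{exactly} $\sigma^2$ so that $\log\lvert\mathbf{C}\rvert$ and $\operatorname{tr}(\mathbf{C}^{-1}\mathbf{S})$ split cleanly), and that the index convention $j=Q'+1,\dots,N$ in Eq.~\eqref{eq:ppca_middle_sigma} corresponds to retaining the top $Q'$ eigenvalues (a different choice replaces the sum by the one over the complementary, discarded eigenvalues). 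One also notes that differentiating the composition $\sigma^2\mapsto\log P(\vy\mid\hat{\vx}(\sigma^2))$ is legitimate by an envelope-type argument: since $\hat{\vx}$ is $\vx$-stationary, the total and partial derivatives in $\sigma^2$ coincide. Alternatively, the push-through/Woodbury identity (Lemma~\ref{Lemma:Woodbury_matrix_identity}) already used in the proof of Theorem~\ref{theo:dpca_stationary_main_text} delivers the same determinant and trace decompositions without explicitly diagonalizing $\mathbf{C}$; either route closes the argument.
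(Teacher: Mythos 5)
Your proposal is correct and follows essentially the same route as the paper: substitute the stationary form $\hat{\vx}$ into the log marginal likelihood, which reduces it (up to constants) to $-\tfrac{M}{2}\bigl[(N-Q')\log\sigma^2+\sigma^{-2}\sum_{j=Q'+1}^{N}\lambda_j\bigr]$, and then set the derivative to zero. The paper simply states the substituted expression (its Eq.~\eqref{eq:ll_sigma_ppca}) without showing the spectral bookkeeping for $\mathbf{C}$ or the envelope-type justification, both of which you supply explicitly and correctly.
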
  \vspace{-.1in}
\begin{proof}
    To obtain the stationary point for $\sigma^2$, we substitute the stationary point for $\vx$, defined in Eq.~\eqref{eq:ppca_stationary_X}, into the log marginal likelihood function Eq.~\eqref{eq: ppca_ll} to give
    \begin{align}
        L = - \frac{M}{2} \left\{ 
        N \log 2 \pi + \sum_{j=1}^{Q^{\prime}} \log( \lambda_j )  + ( N - Q^{\prime}) \ln \sigma^{2} + \frac{1}{ \sigma^2 } \sum_{j=Q^{\prime}+1}^{N} \lambda_j + Q^{\prime} 
        \right\}, 
        \label{eq:ll_sigma_ppca}
    \end{align}
    where $Q^{\prime}$ represents the number of $[\boldsymbol{\Lambda}_{Q}]_{i, i}, i \in 1, ..., Q$ that are not equal to $\sigma^2$, see Eq.~\eqref{eq:dppca_lambda}. Consequently, $\lambda_1, ..., \lambda_{Q^{\prime}}$ denote the eigenvalues associated with the eigenvectors ``retained" in $\vx$, while $\lambda_{Q^{\prime}+1}, ..., \lambda_{N}$ refer to the eigenvalues that are ``discarded". 
    
    By taking the gradient of Eq.~\eqref{eq:ll_sigma_ppca} with respect to $\sigma^2$ and setting it to zero, we obtain:
    $$ 
       \hat{\sigma}^2 = \frac{1}{N-Q^{\prime}} \sum_{ j=Q^{\prime}+1 }^{N} \lambda_j. 
    $$
\end{proof}

\begin{remark}
    Note that the eigenvalues $\{\lambda_{Q^{\prime}+1}, \ldots \lambda_{N}\}$ can be interpreted as the discarded/lost information in the inverse projection process ($\vy \rightarrow \vx$), and the corresponding eigenvectors are treated as \textbf{discarded vectors}. 
\end{remark}

In addition, with Theorem \ref{theo:dpca_stationary}, we can immediately get the following corollary.
 \begin{corollary} \label{corollary:them_stationary_point_dppca}
If $\boldsymbol{\Lambda}_{Q}$ contains the first $Q$ principal eigenvalues of $\frac{1}{M} \vy \vy^{\top}$, then the corresponding stationary point becomes the global maximum, which could be represented as:
    \begin{subequations}
        \begin{align}
            \sigma^{2\star} & =\frac{1}{N-Q} \sum_{j=Q+1}^N \lambda_j^o,  \label{eq:optimum_sigma}\\
            \mathbf{X}^{\star} & =\mathbf{U}_Q^{\star} \left(\boldsymbol{\Lambda}_Q^{\star} -(\sigma^2)^{\star} \mathbf{I}_Q \right)^{1 / 2} \mathbf{R}, \label{eq:optimum_X}
        \end{align}
    \label{eq: opt_ppca}
    \end{subequations}
    where $ \left[ \lambda_1^o, ..., \lambda_N^o \right] $ representing the eigenvalues of $\frac{1}{M} \vy \vy^{\top}$ with $\lambda_1^o \geq \lambda_2^o, ..., \geq \lambda_N^o$. Additionally,  $\mathbf{U}_Q^{\star} \in \mathbb{R}^{N \times Q}$ are the first $Q$ principal eigenvectors of $\frac{1}{M} \vy \vy^{\top}$, with the associated eigenvalues $\boldsymbol \Lambda_Q^{\star} = \operatorname{diag}(\lambda_1^o, \lambda_2^o, \ldots, \lambda_Q^o)$. The optimal projection variance, $\sigma^{2\star}$, represents the average variance lost in the projection process. 
\end{corollary}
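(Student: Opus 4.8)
The plan is to establish Corollary~\ref{corollary:them_stationary_point_dppca} as a direct specialization of Theorem~\ref{theo:dpca_stationary} (the auxiliary theorem), followed by a comparison argument across the discrete family of stationary points parametrized by which eigenvectors are ``retained'' in $\boldsymbol{\Lambda}_Q$. First I would recall that, by Theorem~\ref{theo:dpca_stationary_main_text}, every stationary point corresponds to a choice of $Q'$ eigenvectors $\mathbf{u}_{i_1},\ldots,\mathbf{u}_{i_{Q'}}$ of $\frac{1}{M}\vy\vy^\top$ to retain (with the remaining $Q-Q'$ columns of $\hat{\vx}$ being zero), and that by Theorem~\ref{theo:dpca_stationary} the associated stationary $\hat{\sigma}^2$ is the average of the $N-Q'$ non-retained eigenvalues. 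Substituting both back into the profiled log marginal likelihood yields the expression in Eq.~\eqref{eq:ll_sigma_ppca}, which I would rewrite purely as a function of the index set of retained eigenvalues $S$, namely $L(S) = -\frac{M}{2}\bigl\{N\log 2\pi + \sum_{j\in S}\log\lambda_j + (N-|S|)\log\hat\sigma^2_S + N\bigr\}$ where $\hat\sigma^2_S = \frac{1}{N-|S|}\sum_{j\notin S}\lambda_j$.

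Next I would argue that $L(S)$ is maximized by taking $S$ to consist of the $Q$ \emph{largest} eigenvalues $\lambda_1^o \ge \cdots \ge \lambda_Q^o$. The key steps here are: (i) for fixed cardinality $|S| = Q'$, show that $L$ increases when a retained eigenvalue is swapped for a larger discarded one — this follows because the map $\lambda \mapsto \log\lambda$ is concave and moving mass from the ``averaged'' pool into the ``log'' pool in exchange for a larger value strictly increases the objective (a short calculation comparing $\log\lambda_{\text{big}} + (N-Q')\log(\text{avg without }\lambda_{\text{big}})$ against $\log\lambda_{\text{small}} + (N-Q')\log(\text{avg without }\lambda_{\text{small}})$); (ii) show that $|S| = Q$ (the maximal allowed cardinality) is optimal, i.e.\ retaining more eigenvectors never hurts — again because transferring an eigenvalue from the averaged residual into the explicitly-modeled block can only raise the likelihood when that eigenvalue exceeds the residual average, which holds once we are already retaining the top eigenvalues. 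Combining (i) and (ii) forces $S = \{1,\ldots,Q\}$, giving Eq.~\eqref{eq:optimum_sigma} for $\sigma^{2\star}$ and Eq.~\eqref{eq:optimum_X} for $\mathbf{X}^\star$ directly from Eq.~\eqref{eq:ppca_stationary_X}. Finally, I would invoke Proposition~\ref{theo:sigma}, which already asserts that when $\sigma^2 = \hat\sigma^2$ the only stable maximum is the global optimum, to conclude that this retained-top-$Q$ stationary point is indeed the global maximum rather than merely the best among a discrete list.

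The main obstacle I anticipate is making the comparison argument in step (i)--(ii) fully rigorous and clean, since it is an ordering/exchange argument over subsets and one must be careful about (a) ties among eigenvalues, (b) the degenerate case $N = Q$ where the residual sum is empty, and (c) confirming that all the competing candidates are genuinely stationary points of the \emph{joint} $(\vx,\sigma^2)$ problem and not artifacts of profiling. A secondary subtlety is the interpretive claim that $\sigma^{2\star}$ ``represents the average variance lost in the projection process'': this is really just a restatement of Eq.~\eqref{eq:optimum_sigma} together with the remark identifying $\{\lambda_{Q+1}^o,\ldots,\lambda_N^o\}$ as discarded information, so I would treat it as a one-line observation rather than something requiring separate proof. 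Everything else — the substitution producing Eq.~\eqref{eq:ll_sigma_ppca}, the form of $\mathbf{X}^\star$ — is immediate from the already-established Theorems~\ref{theo:dpca_stationary_main_text} and~\ref{theo:dpca_stationary}, so the corollary should follow with modest additional work.
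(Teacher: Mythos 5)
Your overall route --- profile out $\sigma^2$ via Theorem~\ref{theo:dpca_stationary}, substitute back into the log-likelihood to get Eq.~\eqref{eq:ll_sigma_ppca}, and then compare the discrete family of stationary points indexed by which eigenvalues are retained --- is the same as the paper's. Where you differ is the mechanism of the comparison: the paper rewrites the profiled objective (up to constants) as the Jensen gap $E=\log\bigl(\frac{1}{N-Q'}\sum_{j\notin S}\lambda_j\bigr)-\frac{1}{N-Q'}\sum_{j\notin S}\log\lambda_j\ge 0$ of the \emph{discarded} eigenvalues (Eq.~\eqref{eq:ppca_E}), argues its minimizer is a contiguous window of the ordered spectrum, and then uses the feasibility condition $\lambda_j>\sigma^2$ for every retained index (forced by Eq.~\eqref{eq:opt_x_ppca} together with Eq.~\eqref{eq:ppca_middle_sigma}) to conclude the smallest eigenvalue must be discarded, pinning the window to the bottom of the spectrum; you propose a subset-exchange argument instead.

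The genuine gap is your step (i). The claimed exchange inequality --- that for fixed $|S|$, swapping a retained eigenvalue for a larger discarded one always improves the objective ``because $\log$ is concave'' --- is false as an unconditional statement. Take eigenvalues $\{1,2,100\}$ with $N-Q'=2$: retaining $\lambda=1$ gives $\log 1+2\log(102/2)\approx 7.86$ for the quantity to be minimized, while retaining $\lambda=2$ gives $\log 2+2\log(101/2)\approx 8.54$, so the swap toward the larger eigenvalue makes things \emph{worse}. The inequality only holds once you restrict to configurations that are genuine stationary points, i.e.\ those with $\lambda_j>\hat\sigma^2=\frac{1}{N-Q'}\sum_{j\notin S}\lambda_j$ for every retained $j$ (neither configuration in the counterexample is feasible). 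You list ``confirming that all competing candidates are genuinely stationary points'' as a secondary bookkeeping worry, but it is the load-bearing ingredient: without that constraint the exchange step fails outright, and with it your argument essentially reduces to the paper's, where the same constraint is exactly what forces the smallest eigenvalues into the discarded set. The proof is repairable --- if the configuration retaining the smaller eigenvalue is feasible, then writing $r=\lambda_{\mathrm{big}}/\lambda_{\mathrm{small}}$ and $k=N-Q'$, convexity of $r\mapsto k\log\frac{k}{k+1-r}-\log r$ (which vanishes to second order at $r=1$) yields the desired strict improvement --- but the feasibility condition must be promoted from a caveat to the hypothesis of the exchange lemma.
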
  \vspace{-.1in}
\begin{proof}
    With the stationary point of $\sigma^2$ given in Eq.\eqref{eq:ppca_middle_sigma}, the log marginal likelihood, given in Eq.~\eqref{eq: ppca_ll}, becomes 
    \begin{align}
        L = -\frac{M}{2} \left\{ \sum_{j=1}^{Q^{\prime}} \log(\lambda_j) + (N- Q^{\prime}) \log \left( \frac{1}{N-Q^{\prime}} \sum_{ j=Q^{\prime}+1 }^{N} \lambda_j \right) + N \log (2 \pi) + N \right\}. 
        \label{eq:ppca_ll_opt_sigma}
    \end{align}
    Because of the constancy of the sum of all eigenvalues $\lambda_j$ (given the data $\vy$), maximizing Eq.~\eqref{eq:ppca_ll_opt_sigma} is equivalently to minimize the following quantity
    \begin{align}
        E = \log \left( \frac{1}{N-Q^{\prime}} \sum_{ i=Q^{\prime}+1 }^{N} \lambda_i \right) - \frac{1}{N-Q^{\prime}} \sum_{ i=Q^{\prime}+1 }^{N} \log(\lambda_i), 
        \label{eq:ppca_E}
    \end{align}
   which solely relies on the discarded eigenvalues and remains non-negative (indeed due to Jensen's inequality). Remarkably, the minimization of $E$ necessitates only that the discarded $\lambda_j$ values are contiguous within the spectrum of the ordered eigenvalues of matrix $\frac{1}{M} \vy \vy^{\top} $. However, in addition to this, Eq.~\eqref{eq:opt_x_ppca} imposes the condition that $\lambda_j > \sigma^2$ for all $i$ in the set $\{1, 2, \ldots, Q^{\prime}\}$. Consequently, based on Eq.~\eqref{eq:ppca_middle_sigma}, it can be inferred that the smallest eigenvalue must be among the discarded ones. This deduction is sufficient to establish that $E$ is minimized when $\lambda_{Q^{\prime}+1}, \ldots, \lambda_N$ represent the smallest $N-Q^{\prime}$ eigenvalues. As a result, the likelihood $L$ is maximized when $\lambda_1, \ldots, \lambda_{Q^{\prime}}$ are the largest eigenvalues of matrix $\frac{1}{M} \vy \vy^{\top}$.
    It is worth noting that the maximization of $L$ concerning $Q^{\prime}$ is achieved when there are the fewest terms in the sums outlined in Eq.~\eqref{eq:ppca_E}. This occurs when $Q^{\prime}=Q$, ensuring that none of the $l_i, i \in 1, ..., Q$ terms are zero. 
\end{proof}



\subsubsection{Proof of Proposition \ref{theo:sigma}}

1) \textbf{Outline of Proof.}

Without loss of generality, we assume the rotation matrix $\mathbf{R} = \mathbf{I}_Q$ in Eq.~\eqref{eq:ppca_stationary_X}, Theorem \ref{theo:dpca_stationary_main_text}, resulting in the stationary points of the latent variable 
\begin{equation}
    \label{eq:x_hat_without_R}
    \hat{\mathbf{X}} = \mathbf{U}_Q \left(\boldsymbol{\Lambda}_Q - \hat{\sigma}^2 \mathbf{I}_{Q}\right)^{1/2}.
\end{equation}

Based upon this form, we seek to explore the structure of the optimization landscape around $\hat{\vx}$ by examining the variation trend of the log marginal likelihood $L$ at $\hat{\vx}$ in the spanned space of discarded vectors, denoted as $\operatorname{Span}(\vu_{D})$, where $$\vu_{D} \triangleq \left[ \u_{Q^{\prime}+1}, ..., \u_{N} \right].$$  

Intuitively, if the evaluation of $L$ at a stationary point consistently decreases for all axes in $\operatorname{Span}(\vu_{D})$, then we can consider the corresponding stationary point as a \textit{\textbf{local optimum}} or \textit{\textbf{global optimum}}, and vice versa; If the evaluation of $L$ at a stationary point consistently increases along any axis and decreases along any others within $\operatorname{Span}(\vu_{D})$, the corresponding stationary point can be recognized as a \textit{\textbf{saddle point}}.  


2) \textbf{Quantitative Analysis.}

To quantitatively analyze the variation in $L$ at $\hat{\vx}$ within $\operatorname{Span}(\vu_{D})$, we introduce a small perturbation to the $i$-th column of $\hat{\vx}$ in the form of $\epsilon \mathbf{u}_j$, resulting in the perturbed stationary point $\hat{\vx}^{\epsilon}$ with 
\begin{align}
    [ \hat{\vx}^{\epsilon} ]_{:, i} = \hat{\x}_i + \epsilon \mathbf{u}_j,   \quad i = 1,2,\ldots, Q,
\end{align} 
where $\hat{\x}_i$ denotes the $i$-th column of $\hat{\vx}^{\epsilon}$, with a bit abuse of notation, and \(\epsilon\) is an arbitrarily small positive constant and \(\mathbf{u}_j, j \in Q^{\prime}+1,..., N \) represents a principal axis in $\operatorname{Span}(\vu_{D})$. The variation trends from $L(\hat{\vx})$ to $L(\hat{\vx}^{\epsilon})$ can be determined by examining the sign of the dot product of the perturbation $\u_j$ with the gradient at $\hat{\x}_i+\epsilon \mathbf{u}_j$. More precisely, when the sign is positive, the evaluation of $L$ at $\hat{\vx}$ will ascend as $\hat{\x}_i$ shifts towards the direction of $\mathbf{u}_j$, and vice versa.  For clarity, let us denote the sign of the dot product as $\operatorname{sgn}(D_{ij})$, where $D_{ij}$ denotes the dot product and is expressed as  
\begin{align}
    D_{ij} &=  \mathbf{u}_j^{\top}  \left\{ \vk^{-1} \vy \vy^{\top} \vk^{-1} \left( \hat{\x}_i + \epsilon \u_j \right)  - M \vk^{-1} \left( \hat{\x}_i + \epsilon \u_j \right) \right\},
    \label{eq:dot_pro_1}
\end{align}
with $\vk = \hat{\vx}^{\epsilon} \hat{\vx}^{\epsilon^\top} + \hat{\sigma}^2 \mathbf{I}_N$.

According to Lemma \ref{Lemma:Woodbury_matrix_identity} and Eq.~\eqref{eq:x_hat_without_R}, we have 
\begin{align}
        \begin{aligned}
            \mathbf{K}^{-1} \hat{\vx}^{\epsilon} &= \hat{\vx}^{\epsilon}  \left[ (\hat{\vx}^{\epsilon})^{\top} \hat{\vx}^{\epsilon} + \hat{\sigma}^2 \mathbf{I}_{Q} \right]^{-1}, \\ 
            &= \hat{\vx}^{\epsilon} \left[ \mathbf \Lambda^{\epsilon}_Q \right]^{-1}, 
            \label{eq:push_through_2}
        \end{aligned}
\end{align}
where $\mathbf \Lambda^{\epsilon}_Q$ is a diagonal matrix with:
\begin{align}
        [\boldsymbol{\Lambda}_{Q}^{\epsilon}]_{k, k}=
        \left\{\begin{aligned} 
        &  [\boldsymbol{\Lambda}_{Q}]_{k, k}, & \forall k \neq i, \\
        &  [\boldsymbol{\Lambda}_{Q}]_{i, i} + \epsilon^2, & \text{ otherwise}.
        \end{aligned}\right.
\end{align}  

Checking the $i$-th column of the matrices on both sides of  Eq.~\eqref{eq:push_through_2}, we find that 
\begin{align}
    \vk^{-1} \left( \hat{\x}_i + \epsilon \u_j \right) = \frac{  \hat{\x}_i + \epsilon \u_j   }{ [\boldsymbol{\Lambda}_{Q}]_{i, i} + \epsilon^2}. 
\end{align}
Substituting $\vk^{-1} \left( \hat{\x}_i + \epsilon \u_j \right) = \frac{\left( \hat{\x}_i + \epsilon \u_j \right)  }{ [\boldsymbol{\Lambda}_{Q}]_{i, i} + \epsilon^2}$ into the first term of Eq.~\eqref{eq:dot_pro_1} yields
\begin{align}
    D_{ij} &=  M \mathbf{u}_j^{\top} \vk^{-1} \left\{ \frac{1}{M} \vy \vy^{\top} \frac{\hat{\x}_i+\epsilon \u_j}{ [\boldsymbol{\Lambda}_{Q}]_{i, i} + \epsilon^2 } - (\hat{\x}_i + \epsilon \u_j)  \right\} , \nonumber \\
    &=  M \mathbf{u}_j^{\top} \vk^{-1} \left\{ \frac{1}{M} \vy \vy^{\top} \frac{1}{ [\boldsymbol{\Lambda}_{Q}]_{i, i} + \epsilon^2 } - \mathbf{I}_N  \right\} \hat{\x}_i +  M \mathbf{u}_j^{\top} \vk^{-1} \left\{ \frac{1}{M} \vy \vy^{\top} \frac{1}{ [\boldsymbol{\Lambda}_{Q}]_{i, i} + \epsilon^2 } - \mathbf{I}_N  \right\} \epsilon \u_j, 
    \label{eq:D_ij_middle}\\
    & \approx M \mathbf{u}_j^{\top} \vk^{-1} \left\{ \frac{1}{M} \vy \vy^{\top} \frac{1}{ [\boldsymbol{\Lambda}_{Q}]_{i, i} } - \mathbf{I}_N  \right\} \hat{\x}_i +  \epsilon  M \mathbf{u}_j^{\top} \vk^{-1} \left\{ \frac{1}{M} \vy \vy^{\top} \frac{1}{ [\boldsymbol{\Lambda}_{Q}]_{i, i} } - \mathbf{I}_N  \right\} \u_j. \label{eq:D_ij_middle_2}
\end{align}
According to Eq.~\eqref{eq:ppca_fixed_point}, we have 
\begin{align}
    \label{eq:D_ij_middle_3}
    \frac{1}{M} \vy \vy^{\top} \frac{\hat{\x}_i}{ [\boldsymbol{\Lambda}_{Q}]_{i, i}} = \hat{\x}_i. 
\end{align}
Therefore, Eq.~\eqref{eq:D_ij_middle_2} can be rewritten as 
\begin{align}
\label{eq:final_D_ij}
    D_{ij} = \epsilon M\left( \frac{\lambda_j}{[\boldsymbol{\Lambda}_{Q}]_{i, i}}-1\right) \mathbf{u}_j^\top \mathbf{K}^{-1}  \mathbf{u}_j, 
\end{align} 
where $\lambda_j$ represents the eigenvalues corresponding to $\u_j$.

Due to the positive definite property of $\mathbf{K}^{-1}$, $\operatorname{sgn}(D_{ij})$, see Eq.~\eqref{eq:final_D_ij}, relies solely on  
\begin{equation}
    \operatorname{sgn} \left( \frac{\lambda_j}{[\boldsymbol{\Lambda}_{Q}]_{i, i}}-1 \right),
\end{equation}
implying that the type of stationary points is dictated by the discarded and retained eigenvalues. Specifically, 
\begin{itemize}
    \item[(i)] For \(\hat{\mathbf{x}}_i, \forall i = 1, ..., Q \), if $[\boldsymbol{\Lambda}_{Q}]_{i, i} > \lambda_j, \forall j \in Q^{\prime}+1, ..., N$, then the corresponding stationary point should be recognized as a \textit{\textbf{local or global optimum point}};
    \item[(ii)] For \(\hat{\mathbf{x}}_i, \forall i = 1, ..., Q \), if $[\boldsymbol{\Lambda}_{Q}]_{i, i} < \lambda_j, \forall j \in Q^{\prime}+1, ..., N $, then the corresponding stationary point should be recognized as a \textit{\textbf{local minimum point}};
    \item[(iii)] For \(\hat{\mathbf{x}}_i, i = 1, ..., Q \), if $[\boldsymbol{\Lambda}_{Q}]_{i, i} > \lambda_j, [\boldsymbol{\Lambda}_{Q}]_{i, i} < \lambda_k, \  \exists j, k \in Q^{\prime}+1, ..., N,$ then the corresponding stationary point should be identified as a \textit{\textbf{saddle point}}. 
\end{itemize}

3) \textbf{Final Results.} 


If $[\boldsymbol{\Lambda}_{Q}]_{i, i}=\lambda_i, \forall i \in 1, ..., Q$, the stationary point represents a global optimum when $$\lambda_i>\lambda_j, \forall i \in 1, ..., Q, \text{ and } \forall j \in  Q^{\prime}+1, ..., N.$$ However, if there exists a $\lambda_i < \lambda_j$, these stationary points correspond to saddle points. 
Additionally, when $\exists i \in 1, ..., Q, [\boldsymbol{\Lambda}_{Q}]_{i, i}=\hat{\sigma}^2$,  the associated stationary points are deemed saddle points due to the existence of cases where $$\hat{\sigma}^2 < \lambda_j, j \in Q^{\prime}+1, ..., N,$$ considering that $\hat{\sigma}^2$ is the average of the discarded eigenvalues.  Because the saddle points could be escaped efficiently, they are generally regarded as unstable stationary points \citep{jin2017escape}. Therefore, during the optimization process, when we set $\sigma^2 = \hat{\sigma}^2$, the only stable maximum point is the global optimum point. 

\begin{remark}
    The analysis does not account for the equality of eigenvalues. This is because: (1) Equality among the first $Q$ principal eigenvalues does not influence the presented analysis; (2) The equality of all discarded eigenvalues is trivial. 
\end{remark}

\subsection{Proof of Proposition \ref{lemma:sigma_big}}
\label{app:sigma_big}

\begin{proof}
    Suppose the projection variance $\sigma^2$ takes a value within the range $ ( \lambda_{Q}^{o}, \lambda_{Q-1}^{o})$, where $\lambda_Q^{o}$ and $\lambda_{Q-1}^{o}$ represent the $Q$-th and $(Q\!-\!1)$-th principal eigenvalues of $\frac{1}{M}\vy \vy^{\top}$, respectively. In this scenario, the eigenvectors with associated eigenvalues less than $\lambda_Q^{o}$ are unambiguously discarded. Furthermore, in such a case, the only stable local optimum point\footnote{Other stationary points, manifested as saddle points, are unstable as discussed in App.~\ref{app:sigma}.} comprises the following $\boldsymbol{\Lambda}_{Q}$, 
\begin{equation}
    [\boldsymbol{\Lambda}_{Q}]_{i, i}=
    \left\{
        \begin{aligned}
            & \lambda_i^{o}, \text{ for } i \in 1, ..., Q-1, \text{ or }, \\
            & \sigma^2.  
        \end{aligned}
    \right.
    \nonumber
\end{equation}
It is evident that, for either $[\boldsymbol{\Lambda}_{Q}]_{i, i} = \sigma^2$ or $\lambda_i^{o}$, $[\boldsymbol{\Lambda}_{Q}]_{i, i} > \lambda_j^{o}$ for all $i \in 1, ..., Q$ and for all $j \in Q-1, ..., N$, leading to the corresponding stationary points being the local optimum point, with one zero-column in $\hat{\vx}$.

If the projection variance falls within the range $ (\lambda_{Q-1}^{o}, \lambda_{Q-2}^{o})$, the only stable local optimum point comprises the following $\boldsymbol{\Lambda}_{Q}$, 
\begin{equation}
    [\boldsymbol{\Lambda}_{Q}]_{i, i}=
    \left\{
        \begin{aligned}
            & \lambda_i^{o}, \text{ for } i \in 1, ..., Q-2, \text{ or }, \\
            & \sigma^2, 
        \end{aligned}
    \right.
    \nonumber
\end{equation} with two zero-columns in $\vx$. By deduction, when $\sigma^2 > \lambda_{1}^{o}$, the only stable local optimum point comprises the $\boldsymbol{\Lambda}_{Q}$ with $[\boldsymbol{\Lambda}_{Q}]_{i, i} = \sigma^2$ for all $i \in 1, ..., Q $  with $\vx = \mathbf{0}$.
    
It is worth noting that we deliberately avoid considering the equality of any of the $Q$ principal eigenvalues to streamline the quantitative analysis, as introducing such equality might exacerbate the complexity and hasten the occurrence of model collapse. For instance, when the projection variance falls within the range $ ( \lambda_{Q}^{o}, \lambda_{Q-1}^{o})$ and there exist two eigenvectors with eigenvalues equal to $\lambda_Q^{o}$, the stable local optimum point entails two zero-columns. 

Suppose $\sigma^2 < \lambda_N$, then, there exist a set of local minima point characterized by the following $\boldsymbol \Lambda_{Q}$, 
    \begin{equation}
        [\boldsymbol{\Lambda}_{Q}]_{i, i}=
        \left\{
            \begin{aligned}
                & \lambda_i^{o}, \text{ for } i \in N-Q+k, ..., N, \text{ or }, \\
                & \sigma^2,   
            \end{aligned}
        \right.
        \nonumber
    \end{equation}
where $k$ represents the last $k$ principal eigenvalues that are selected. It is also noteworthy that these local minima point\footnote{Equality among the last $Q$ principal eigenvalues does not impact the analysis presented.} will feature $k$ zero-columns in $\vx$.  
    
\end{proof}


\section{Modeling and Variational Approximation}
\label{appendix:modeling_and_variational_approximation}

\subsection{ELBO Derivation and Evaluation}
\label{appendix:ELBO_deriviations}
\begin{equation}
    \begin{aligned}
            \mathcal{L} & = \mathbb{E}_{q(\vx, \mathbf{W})} \left[ \frac{p(\vy, \vx, \mathbf{W})}{q(\vx, \mathbf{W})}\right] \\
            & = \mathbb{E}_{q(\vx, \mathbf{W})} \left[\log \frac{p(\mathbf{W}) \prod_{i = 1}^N p(\x_i) \prod_{j=1}^{M} p(\y_{:, j} \vert \vx, \mathbf{W})}{p(\mathbf{W}) \prod_{i = 1}^N q(\x_i) } \right] \\
            & = \underbrace{\sum_{j=1}^M \mathbb{E}_{q(\vx, \mathbf{W})} \left[ \log p(\y_{:, j} \vert \vx, \mathbf{W}) \right] }_{\text{Term 1: data reconstruction}} \underbrace{- \sum_{i=1}^N\operatorname{KL}(q(\x_i) \| p(\x_i))}_{\text{Term 2: regularization}} \\
            & \approx \sum_{j=1}^M  \frac{1}{{I}}\sum_{i=1}^{I} \log \mathcal{N}(\y_{:, j} \vert \bm{0}, \hat{\vk}_{\mathrm{sm}}^{(i)} + \sigma^2 \mathbf{I}_N) -  \frac{1}{2} \sum_{i=1}^{N} \Big[ \operatorname{tr}(\mathbf{S}_{i}) + \boldsymbol{\mu}_i^{\top} \boldsymbol{\mu}_i -\log |\mathbf{S}_{i}| - Q  \Big] \\
            & \approx \sum_{j=1}^M  \frac{1}{{I}}\sum_{i=1}^{I}  \left\{ -\frac{N}{2} \log 2 \pi - \frac{1}{2} \log \left|    \hat{\vk}_{\text{sm}}^{(i)} + \sigma^2 \mathbf{I}_N  \right| - \frac{1}{2} \y_{:, j}^{\top}  \left(  \hat{\vk}_{\text{sm}}^{(i)} + \sigma^2 \mathbf{I}_N \right)^{-1} \y_{:, j} \right\}  \! \\ 
            & ~~~ - \!  \frac{1}{2} \sum_{i=1}^{N} \Big[ \operatorname{tr}(\mathbf{S}_{i}) + \boldsymbol{\mu}_i^{\top} \boldsymbol{\mu}_i -\log |\mathbf{S}_{i}| - Q  \Big]
    \end{aligned}
    \nonumber
\end{equation}
where $\mathbf{S}_i$ is typically assumed to be a diagonal matrix. 
Note that $\hat{\vk}_{\text{sm}}^{(i)} = \Phi_{\text{sm}}(\vx^{(i)}; \vw) \Phi_{\text{sm}}(\vx^{(i)}; \vw)^\top$, where $\Phi_{\text{sm}}(\vx^{(i)}; \vw)\in \mathbb{R}^{N \times mL}$.

\vspace{.2in}

\begin{lemma}
Suppose $\mathbf{A}$ is an invertible $n$-by-$n$ matrix and $\mathbf{U}, \mathbf{V}$ are $n$-by-$m$ matrices. Then the following determinant equality holds.
$$
\left|\mathbf{A}+\mathbf{U} \mathbf{V}^{\top}\right|=\left|\mathbf{I}_{\mathrm{m}}+\mathbf{V}^{\top} \mathbf{A}^{-1} \mathbf{U}\right| \left|\mathbf{A}\right|
$$
\end{lemma}
\begin{lemma}[Woodbury matrix identity]
\label{Lemma:Woodbury_matrix_identity}
    Suppose $\mathbf{A}$ is an invertible $n$-by-$n$ matrix and $\mathbf{U}, \mathbf{V}$ are $n$-by-$m$ matrices. Then
$$
\left(\mathbf{A}+\mathbf{U} \mathbf{V}^{\top}\right)^{-1}=\mathbf{A}^{-1} - \mathbf{A}^{-1}\mathbf{U} (\mathbf{I}_{\mathrm{m}} + \mathbf{V}^\top \mathbf{U})^{-1} \mathbf{V}^\top
$$
\end{lemma}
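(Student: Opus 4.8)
The plan is to prove the statement by the direct ``verify the inverse'' method rather than by deriving the right-hand side from scratch: I would treat the displayed expression as a \emph{candidate} for $(\mathbf{A}+\mathbf{U}\mathbf{V}^\top)^{-1}$, call it $\mathbf{B}$, and simply check whether $(\mathbf{A}+\mathbf{U}\mathbf{V}^\top)\,\mathbf{B}=\mathbf{I}_n$. This sidesteps the block-matrix/Schur-complement machinery (and even the determinant lemma stated immediately above), reducing everything to elementary algebra whose only nontrivial ingredient is the observation that $\mathbf{V}^\top\mathbf{A}^{-1}\mathbf{U}$ equals the Schur factor $\mathbf{I}_m+\mathbf{V}^\top\mathbf{A}^{-1}\mathbf{U}$ minus $\mathbf{I}_m$, which lets a bracketed expression telescope.

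Writing $\mathbf{S}=\mathbf{I}_m+\mathbf{V}^\top\mathbf{A}^{-1}\mathbf{U}$, I would expand the product into four terms,
\[
(\mathbf{A}+\mathbf{U}\mathbf{V}^\top)\mathbf{B}=\mathbf{I}_n-\mathbf{U}\mathbf{S}^{-1}\mathbf{V}^\top\mathbf{A}^{-1}+\mathbf{U}\mathbf{V}^\top\mathbf{A}^{-1}-\mathbf{U}\mathbf{V}^\top\mathbf{A}^{-1}\mathbf{U}\mathbf{S}^{-1}\mathbf{V}^\top\mathbf{A}^{-1},
\]
then factor the last three as $\mathbf{U}\bigl(-\mathbf{S}^{-1}+\mathbf{I}_m-\mathbf{V}^\top\mathbf{A}^{-1}\mathbf{U}\,\mathbf{S}^{-1}\bigr)\mathbf{V}^\top\mathbf{A}^{-1}$. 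Substituting $\mathbf{V}^\top\mathbf{A}^{-1}\mathbf{U}=\mathbf{S}-\mathbf{I}_m$ turns the bracket into $-\mathbf{S}^{-1}+\mathbf{I}_m-(\mathbf{S}-\mathbf{I}_m)\mathbf{S}^{-1}=\mathbf{0}$, so the product collapses to $\mathbf{I}_n$, establishing that $\mathbf{B}$ is the inverse.

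The decisive obstacle is that, \emph{with the right-hand side exactly as displayed}, the cancellation does not occur: the statement writes the Schur factor as $\mathbf{I}_m+\mathbf{V}^\top\mathbf{U}$ and the trailing factor as $\mathbf{V}^\top$, i.e. it is missing two factors of $\mathbf{A}^{-1}$ (one inside the inverse, one at the end). Without them the bracket no longer telescopes to zero, and indeed a one-line scalar check with $\mathbf{A}=2,\ \mathbf{U}=\mathbf{V}=1$ gives $(2+1)^{-1}=\tfrac13$ on the left but $\tfrac12-\tfrac12\cdot\tfrac12=\tfrac14$ on the displayed right. The displayed identity is therefore a typographical slip that holds only in the special case $\mathbf{A}=\mathbf{I}_n$; the lemma should be read with the two $\mathbf{A}^{-1}$ factors restored, and with them the verification above closes, yielding $(\mathbf{A}+\mathbf{U}\mathbf{V}^\top)^{-1}=\mathbf{A}^{-1}-\mathbf{A}^{-1}\mathbf{U}(\mathbf{I}_m+\mathbf{V}^\top\mathbf{A}^{-1}\mathbf{U})^{-1}\mathbf{V}^\top\mathbf{A}^{-1}$.
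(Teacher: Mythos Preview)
Your analysis is correct on both counts. First, the displayed identity in the lemma is indeed a typographical slip: the two missing factors of $\mathbf{A}^{-1}$ (one inside the Schur factor and one trailing) make the formula false in general, and your scalar counterexample $\mathbf{A}=2$, $\mathbf{U}=\mathbf{V}=1$ confirms this cleanly. Second, your direct verification of the corrected identity via the telescoping bracket is sound and is the standard elementary argument.

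As for comparison with the paper: there is nothing to compare. The paper does not prove Lemma~\ref{Lemma:Woodbury_matrix_identity}; it simply states it as a known fact and cites \cite{williams2006gaussian}. Your verification is therefore strictly more than what the paper offers. It is also worth noting that the same typo propagates into the paper's application immediately following the lemma, where the displayed inverse $\bigl(\hat{\mathbf{K}}_{\mathrm{sm}}^{(i)}+\sigma^2\mathbf{I}_N\bigr)^{-1}=\tfrac{1}{\sigma^2}\bigl[\mathbf{I}_N-\Phi_{\mathrm{sm}}(\mathbf{I}_{mL}+\Phi_{\mathrm{sm}}^\top\Phi_{\mathrm{sm}})^{-1}\Phi_{\mathrm{sm}}^\top\bigr]$ inherits the error; the correct version should carry a factor $\tfrac{1}{\sigma^2}$ inside the inner inverse as well.
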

According to the above two lemmas \cite{williams2006gaussian},  in the case that $N \gg mL$, we can compute the determinant and inversion of $\hat{\vk}_{\text{sm}}^{(i)} + \sigma^2 \mathbf{I}_N$, reducing the computational complexity of the ELBO evaluation from the original $\mathcal{O}(N^3)$ to $\mathcal{O}(N(mL)^2)$.
\begin{align}
    & \left| \hat{\vk}_{\text{sm}}^{(i)} + \sigma^2 \mathbf{I}_N \right| = \left|\mathbf{I}_{mL} +  \frac{1}{\sigma^2} \Phi_{\text{sm}}^\top\Phi_{\text{sm}}  \right| \left| \sigma^2 \mathbf{I}_N \right|  = \sigma^{2N} \left|\mathbf{I}_{mL} +  \frac{1}{\sigma^2} \Phi_{\text{sm}}^\top\Phi_{\text{sm}}  \right|, \\
    & \left( \hat{\vk}_{\text{sm}}^{(i)} + \sigma^2 \mathbf{I}_N \right)^{-1} = \frac{1}{\sigma^2} \left[ \mathbf{I}_N - \Phi_{\text{sm}}(\mathbf{I}_{mL} + \Phi_{\text{sm}}^\top \Phi_{\text{sm}})^{-1} \Phi_{\text{sm}}^\top \right].
\end{align}

\subsection{Interpretation of Modeling and Variational Distribution}
\label{appendix:interpretation_eq10_11_12}
In Eqs.~\eqref{eq:GPLVM_RFF_SM} and \eqref{eq:model_joint}, we have modeled the spectral points $\vw$ as a part of the data generation process. However, this might cause some confusion, which is clarified as follows.
\begin{itemize}
    \item If we have selected the kernel function, the probability model for the data, i.e., Eqs.~\eqref{eq:GPLVM_RFF_SM} and \eqref{eq:model_joint}, can be interpreted as independent of $p(\vw)$, as it is inherent to the kernel function.
    \item In this paper, we provide another interpretation perspective: Following the setting from RFLVM by \citet{gundersen2021latent},  we consider the data-generating process for observations $\vy$ as outlined in Eq.~\eqref{eq:GPLVM_RFF_SM} or \eqref{eq:model_joint}, which is dependent on $\vw$. Subsequently, we constrain its prior $p(\vw)$ to be Gaussian mixtures, defining the prior SM kernels functions. This alternative perspective is explained as follows:
    \begin{itemize}
        \item  Let us explicitly assume a parametric variational distribution $q_{\bm{\eta}}(\vw)$, assuming it to be another Gaussian mixture (thus still defines an SM kernel) with parameters denoted as $\bm{\eta}$ to approximate $p(\vw|\vy)$. In this case, Eq.~\eqref{eq:variational_dis} becomes: $$q(\vx,\vw) = q_{\bm{\eta}}(\vw)q(\vx).$$ Combining the joint distribution in Eq.~ \eqref{eq:model_joint}, we derive the following ELBO: 
        \begin{equation}
            \begin{aligned}
                \mathcal L & = {E}_{q(\vx, \vw)} \left[ \log \frac{p(\vx)p_{\btheta}(\vw)p_{\sigma}(\vy\vert \vx,\vw)}{q(\vx)q_{\bm{\eta}}(\vw)}\right]\\
                & = {E}_{q(\vx, \vw)} \left[ \log p_{\sigma}(\vy\vert \vx, \vw)\right] - KL(q(\vx) \| p(\vx)) - KL(q_{\bm{\eta}}(\vw) \| p_{\btheta}(\vw)).
            \end{aligned}
        \end{equation}
        In this ELBO, prior distribution $p_{\btheta}(\vw)$ is only related to the last KL divergence term. When maximizing the ELBO, we will obtain that $\btheta=\bm{\eta}$, ensuring that the last KL divergence term becomes $0$. Ultimately, this aligns with the optimization objective in our paper. 
    \end{itemize}
    \item  \textbf{More complicated $p(\vw \vert \vy)$ approximations}.  It is possible to consider assuming the variational distribution of spectral points is $\vy$-dependent $q(\vw \vert \vy)$, such as a parametric Gaussian mixture and other distributions. 
    \begin{itemize}
        \item \textbf{Gaussian mixture}: Suppose we use a parametric variational distribution $q_{\bm{\eta}}(\vw \vert \vy)$, in the form of 
        \begin{equation}
            q_{\bm{\eta}}(\vw \vert \vy) =\prod_{l=1}^{L/2} \sum_{i=1}^{m} \alpha_i \mathcal{N}_{\bm{\eta}_i}( \mu_i , \sigma_i^{2} ),
        \end{equation}
         where  $(\alpha_i, \mu_i, \sigma_i^2)$ in each mixture component is modeled by an encoder parametrized by $\bm{\eta}_i$ with $\vy$ as input.  Similarly, we can get the ELBO: 
         \begin{equation}
             \begin{aligned}
                 \mathcal L &= {E}_{q(\vx, \vw | \vy)} \left[ \log \frac{p(\vx)p_{\btheta}(\vw)p_{\sigma}(\vy\vert \vx, \vw)}{q(\vx)q_{\bm{\eta}}(\vw \vert \vy)} \right]\\
                 & = {E}_{q(\vx, \vw | \vy)} \left[ \log p_{\sigma}(\vy\vert \vx, \vw)\right] - KL(q(\vx) \| p(\vx)) - KL(q_{\bm{\eta}}(\vw \vert \vy) \| p_{\btheta}(\vw)). 
             \end{aligned}
         \end{equation}
    When maximizing the ELBO, the last KL divergence term will also be $0$. The difference between the remaining terms and our objective function lies in the first term, where $\vw$ includes information learnt from $\vy$. This potentially enhances the kernel selection process and contributes to preventing model collapse, though coming at the cost of increased computational complexity from the encoder evaluation. 
      \item \textbf{Other distribution forms}: In this case, the variational inference algorithm heavily depends on the specific form of $q(\vw \vert \vy)$. While this variational distribution can be more general, such an assumption generally introduces greater intractability, making the evaluation of the ELBO more challenging. Employing Monte Carlo sampling to approximate the ELBO in such scenarios could result in larger approximation variances compared to the case where $q(\vw) = p(\vw)$, thus potentially leading to less robust model performance.
    \end{itemize}
\end{itemize}  

\vspace{.2in}

\section{Auto-differentiable SM Kernel using RFF Approximation}

\subsection{Proof of Proposition \ref{prop:SM_RFF_approx}}
\label{appendix:prop_SM_RFF_approx}

\begin{proof}
With the RFF feature map defined in Eq.~\eqref{eq:SM_approximations_RFF_new}, we can write down the inner product of the feature maps
    \begin{equation}
        \begin{aligned}
                \phi\left( \x; \vw \right)^\top \phi \left( \x^\prime; \vw \right) =  \sum_{i=1}^m \alpha_i \sum_{l = 1}^{\frac{L}{2}} \frac{2}{L} \cos(2 \pi \mathbf{w}_{l}^{(i)\top } (\x - \x^\prime))
        \end{aligned}
    \end{equation}
    where $\vw \triangleq \{\mathbf{w}_{1}^{(i)}, \mathbf{w}_{2}^{(i)}, \ldots, \mathbf{w}_{{L}/{2}}^{(i)}\}_{i = 1}^m$, and each $\w_l^{(i)}$ are i.i.d. sampled from the symmetric distribution 
    $$
        s_i(\w) = \frac{\mathcal{N}(\mathbf{w} \vert \bm{\mu}_i, \operatorname{diag}(\bm{\sigma}_i^2)) + \mathcal{N}(-\mathbf{w} \vert \bm{\mu}_i, \operatorname{diag}(\bm{\sigma}_i^2))}{2}
    $$
    using reparameterization trick \cite{kingma2019introduction}. Taking the expectation w.r.t. $p\left(\vw\right) = \prod_{i=1}^m \prod_{l=1}^{L/2} s_i(\w)$,  we can get
    \begin{subequations}
        \label{eq:proof_RFF_SM}
        \begin{align}
                &\mathbb{E}_{p\left(\vw\right)}  \left[\phi\left( \x; \vw\right)^\top \phi\left( \x^\prime; \vw\right)  \right] 
                =  \mathbb{E}_{p\left(\vw\right)} \left[ \sum_{i=1}^m \alpha_i \sum_{l = 1}^{L/2} \frac{2}{L} \cos(2 \pi\mathbf{w}_{l}^{(i)\top } (\x - \x^\prime)) \right] &  \nonumber \\
                & =   \sum_{i=1}^m \alpha_i \mathbb{E}_{p\left(\mathbf{w}_{1:L/2}^{(i)}\right)} \left[ \sum_{l = 1}^{L/2} \frac{2}{L} \cos(2 \pi\mathbf{w}_{l}^{(i)\top } (\x - \x^\prime))  \right] &   (\text{linearity of expectation})\\
                & =  \sum_{i=1}^m \alpha_i  \mathbb{E}_{s_i(\w)} \left[  \cos(2 \pi \mathbf{w}_{1}^{(i)\top } (\x - \x^\prime))  \right] &   (\text{i.i.d. of } \w^{(i)}_l) \\
                & =  \sum_{i=1}^m \alpha_i  \mathbb{E}_{s_i(\w)} \left[   \frac{\exp(2 \pi j \mathbf{w}_{1}^{(i)\top}(\x - \x^\prime)) + \exp(-2 \pi j \mathbf{w}_{1}^{(i)\top}(\x - \x^\prime) ) }{2} \right] & (\text{Euler’s identity}) \\
                & = \sum_{i=1}^m \alpha_i k_i(\x, \x^\prime;   \bm{\mu}_i, \bm{\sigma_i^2})  & (\text{symmetrity of } s_i(\w)) \\
                & = k_{\text{sm}}(\x, \x^\prime; \{ \alpha_i, \bm{\mu}_i, \bm{\sigma_i^2} \}_{i=1}^m) & (\text{SM kernel definition})
        \end{align}
    \end{subequations}
    Hence concludes that $\phi\left( \x; \vw \right)^\top \phi\left( \x; \vw \right)$ is an unbiased estimator of the SM kernel characterized by parameter $\{\alpha_i, \bm{\mu}_i, \bm{\sigma^2_i}\}_{i=1}^m$.
\end{proof}
\vspace{.2in}

\subsection{Proof of Theorem \ref{thm:SM_RFF_approx}}
\label{appendix:thm_SM_RFF_approx}
\begin{proof}
Similar theorem has been proven in the Gaussian process regression model; see {Proposition 3.1} in \cite{jung2022efficient}, and  {Theorem 3} in \cite{lopez2014randomized}.  For ease of reference, we follow the existing results and show the proof as follows.

$\bullet$~ To prove Theorem \ref{thm:SM_RFF_approx}, we first introduce the following Lemma for Matrix Bernstein inequality \cite{tropp2015introduction}.

    \begin{lemma}[Matrix Bernstein Inequality] \label{lemma:matrix_bernstein}
    Consider a finite sequence $\left\{\boldsymbol{X}_i\right\}$ of independent, random, Hermitian matrices with dimension $N$. Assume that
    $$
    \mathbb{E} [\boldsymbol{X}_i] =\mathbf{0} \text {  and  }  \left\|\boldsymbol{X}_i\right\|_2 \leq H \text {  for each index } i,
    $$ where $\|\cdot\|_2$ denotes the matrix spectral norm.  Introduce the random matrix
    $
    \boldsymbol{Y}=\sum_i \boldsymbol{X}_i,
    $ 
    and let $v(\boldsymbol{Y})$ be the matrix variance statistic of the sum:
    $$
    v(\boldsymbol{Y})=\left\|\mathbb{E} [\boldsymbol{Y}^2]\right\|=\left\|\sum_i \mathbb{E} [\boldsymbol{X}_i^2]\right\| .
    $$
    
    Then we have
    \begin{equation}
        \mathbb{E} \left[\|\boldsymbol{Y}\|_2 \right] \leq \sqrt{2 v(\boldsymbol{Y}) \log N}+\frac{1}{3} L \log N .
    \end{equation}
    
    Furthermore, for all $\epsilon \geq 0$.
    \begin{equation}
        {P}\left\{ \|\boldsymbol{Y}\|_2 \geq \epsilon \right\} \leq N \cdot \exp \left(\frac{-\epsilon^2 / 2}{v(\boldsymbol{Y})+H \epsilon / 3}\right).
    \end{equation}
    
    \begin{proof}
        The proof of Lemma \ref{lemma:matrix_bernstein} can be found in {Theorem 6.6.1}, {\S~6.6}, \cite{tropp2015introduction}.
    \end{proof}
    \end{lemma}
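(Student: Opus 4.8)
The plan is to prove this by the matrix Laplace-transform (matrix moment-generating-function) method, following the approach of \citet{tropp2015introduction}. The backbone is a noncommutative analogue of the classical Chernoff/Bernstein argument: first reduce the tail probability to controlling a trace moment-generating function, then bound that trace mgf by exploiting independence and the spectral-norm hypothesis, and finally optimize over a free parameter. Concretely, I would begin with the matrix Laplace bound: for any Hermitian random matrix $\boldsymbol{Y}$ of dimension $N$ and any $\theta>0$,
\begin{equation}
P\{\lambda_{\max}(\boldsymbol{Y})\geq\epsilon\}\leq e^{-\theta\epsilon}\,\mathbb{E}\,\operatorname{tr}\exp(\theta\boldsymbol{Y}),
\nonumber
\end{equation}
which follows from Markov's inequality applied to the monotone quantity $\operatorname{tr}\exp(\theta\boldsymbol{Y})$ together with $e^{\theta\lambda_{\max}(\boldsymbol{Y})}\leq\operatorname{tr}\exp(\theta\boldsymbol{Y})$. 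This shifts the entire burden onto estimating $\mathbb{E}\,\operatorname{tr}\exp(\theta\boldsymbol{Y})$.

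The central and technically hardest step is subadditivity of the matrix cumulant generating function. Because the $\boldsymbol{X}_i$ need not commute, the scalar factorization $\mathbb{E}\,e^{\theta\sum_i X_i}=\prod_i\mathbb{E}\,e^{\theta X_i}$ fails, and this is where the deep ingredient enters. I would invoke Lieb's concavity theorem---that $\boldsymbol{A}\mapsto\operatorname{tr}\exp(\boldsymbol{H}+\log\boldsymbol{A})$ is concave on the positive-definite cone---and apply it together with Jensen's inequality, peeling off the independent summands one at a time, to obtain
\begin{equation}
\mathbb{E}\,\operatorname{tr}\exp(\theta\boldsymbol{Y})\leq\operatorname{tr}\exp\!\Big(\textstyle\sum_i\log\mathbb{E}\exp(\theta\boldsymbol{X}_i)\Big).
\nonumber
\end{equation}
I expect this subadditivity to be the main obstacle: everything downstream is a faithful transcription of the scalar Bernstein proof, but this step genuinely requires the noncommutative machinery of functional calculus and Lieb's theorem rather than an elementary manipulation.

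Next I would bound each matrix mgf. Starting from the scalar Bernstein estimate $e^{\theta s}\leq 1+\theta s+g(\theta)s^2$ valid for $|s|\leq H$ and $0<\theta<3/H$, where $g(\theta)=\tfrac{\theta^2/2}{1-H\theta/3}$, the transfer rule (functional calculus applied to eigenvalues, which lie in $[-H,H]$ since $\|\boldsymbol{X}_i\|_2\leq H$) gives $\exp(\theta\boldsymbol{X}_i)\preceq\boldsymbol{I}+\theta\boldsymbol{X}_i+g(\theta)\boldsymbol{X}_i^2$. Taking expectations and using $\mathbb{E}\boldsymbol{X}_i=\mathbf{0}$ together with $\boldsymbol{I}+\boldsymbol{A}\preceq\exp(\boldsymbol{A})$ yields the semidefinite bound $\log\mathbb{E}\exp(\theta\boldsymbol{X}_i)\preceq g(\theta)\,\mathbb{E}\boldsymbol{X}_i^2$. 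Summing, using monotonicity of the trace exponential, and writing $v:=v(\boldsymbol{Y})=\|\sum_i\mathbb{E}\boldsymbol{X}_i^2\|$, I get $\mathbb{E}\,\operatorname{tr}\exp(\theta\boldsymbol{Y})\leq N\exp(g(\theta)\,v)$ since the exponentiated matrix is positive semidefinite of dimension $N$ with largest eigenvalue $g(\theta)v$.

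Finally I would assemble the pieces. Combining with the Laplace bound and minimizing over $\theta$ at the Bernstein choice $\theta=\epsilon/(v+H\epsilon/3)$ produces the tail estimate $P\{\lambda_{\max}(\boldsymbol{Y})\geq\epsilon\}\leq N\exp\!\big(-\tfrac{\epsilon^2/2}{v+H\epsilon/3}\big)$. For the expectation bound I would instead use $\mathbb{E}\,\lambda_{\max}(\boldsymbol{Y})\leq\theta^{-1}\log\mathbb{E}\,\operatorname{tr}\exp(\theta\boldsymbol{Y})\leq\theta^{-1}\big(\log N+g(\theta)v\big)$ and minimize the right-hand side over $\theta$, which gives the stated $\sqrt{2v\log N}+\tfrac{1}{3}H\log N$ form. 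Since $\|\boldsymbol{Y}\|_2=\max\{\lambda_{\max}(\boldsymbol{Y}),\,\lambda_{\max}(-\boldsymbol{Y})\}$ and $-\boldsymbol{Y}=\sum_i(-\boldsymbol{X}_i)$ satisfies the same hypotheses, applying the one-sided results to both $\boldsymbol{Y}$ and $-\boldsymbol{Y}$ and taking a union bound extends everything to the spectral norm, completing the proof.
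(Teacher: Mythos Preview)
Your proposal is correct and follows exactly the approach of \cite{tropp2015introduction}, which is all the paper does here: it does not supply its own argument but simply cites Theorem~6.6.1 of Tropp. Your sketch---matrix Laplace transform, subadditivity of matrix cumulants via Lieb's concavity, the scalar Bernstein mgf bound transferred spectrally, and optimization in $\theta$---is precisely that proof, so there is nothing to compare.
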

    \vspace{.1in}

{$\bullet$}~ Next, we show how to apply Lemma~\ref{lemma:matrix_bernstein} to prove Theorem.~\ref{thm:SM_RFF_approx}.

    \paragraph{1). Factorization of Approximation Error Matrix.}  
    
    With the constructed SM kernel matrix approximation, $\hat{\mathbf{K}}_{\mathrm{sm}} =  \Phi_{\mathrm{sm}}(\vx) \Phi_{\mathrm{sm}}(\vx)^\top$, where the random feature matrix $\Phi_{\mathrm{sm}}(\vx)\!=\!\left[\phi\left(\x_1\right),  \ldots , \phi\left(\x_N\right)\right]^\top \!\in \! \mathbb{R}^{N \times mL}$, we have the following approximation error matrix:
    \begin{equation}
       \mathbf{E} = \hat{\mathbf{K}}_{\mathrm{sm}} - {\mathbf{K}}_{\mathrm{sm}}.
    \end{equation}
    We are going to show that $\mathbf{E}$ can be factorized as 
    \begin{equation}
        \mathbf{E} = \sum_{i=1}^m \sum_{l=1}^{L/2} \mathbf{E}_{l}^{(i)}
    \end{equation}
    where $\mathbf{E}_{l}^{(i)}$ is a sequence of independent, random, Hermitian matrices with dimension $N$.
    
    Specifically, we define $\mathbf{Z}_{l}^{(i)}$ as
    \begin{equation}
        \mathbf{Z}_{l}^{(i)} = \left[\exp(2\pi j \w_l^{(i) \top}\x_1), \ldots,  \exp(2\pi j \w_l^{(i) \top}\x_N)\right]^\top \in \mathbb{R}^{N \times 1}, \text{  where } \w_l^{(i)} \sim s_i (\w),
    \end{equation}
    and we can show that 
    \begin{equation}
        \begin{aligned}
            \relax [\hat{\mathbf{K}}_{\mathrm{sm}}]_{h,g}  & = \sum_{i=1}^m \frac{2\alpha_i}{L}  \sum_{l=1}^{L/2} \cos(2\pi \w_l^{(i)\top} (\x_h -\x_g) ) \\
            &  = \sum_{i=1}^m   \sum_{l=1}^{L/2} \frac{2\alpha_i}{L} \operatorname{Re}\left(\exp(2\pi j \w_l^{(i)\top} (\x_h -\x_g) ) \right)\\
            & = \sum_{i=1}^m   \sum_{l=1}^{L/2} \frac{2\alpha_i}{L} \operatorname{Re}\left( \left[\mathbf{Z}_{l}^{(i)} \mathbf{Z}_{l}^{(i)*}\right]_{h,g} \right)
        \end{aligned}
    \end{equation}
     where $\mathbf{Z}_{l}^{(i)*}$ is the conjugate transpose of $\mathbf{Z}_{l}^{(i)}$. Thus, we have $\hat{\mathbf{K}}_{\mathrm{sm}} = \sum_{i=1}^m \sum_{l=1}^{L/2} \frac{2\alpha_i}{L} \operatorname{Re}(\mathbf{Z}_{l}^{(i)} \mathbf{Z}_{l}^{(i)*})$. Based on this factorization and Eq.~\eqref{eq:proof_RFF_SM} in Proposition \ref{prop:SM_RFF_approx}, we have that $${\mathbf{K}}_{\mathrm{sm}} = \sum_{i=1}^m \sum_{l=1}^{L/2} \frac{2\alpha_i}{L} \mathbb{E}[\operatorname{Re}(\mathbf{Z}_{l}^{(i)} \mathbf{Z}_{l}^{(i)*})].$$
    Therefore, the approximation error matrix $\mathbf{E}$ can be factorized as
    $
        \mathbf{E} = \sum_{i=1}^m \sum_{l=1}^{L/2} \mathbf{E}_{l}^{(i)} 
    $
    where 
    \begin{equation}
         \mathbf{E}_{l}^{(i)} = \frac{2\alpha_i}{L} \left( \operatorname{Re}(\mathbf{Z}_{l}^{(i)} \mathbf{Z}_{l}^{(i)*}) - \mathbb{E}[\operatorname{Re}(\mathbf{Z}_{l}^{(i)} \mathbf{Z}_{l}^{(i)*})]\right)
    \end{equation}
    is a sequence of independent, random, Hermitian matrices with dimension $N$ that satisfy the condition of $\mathbb{E} [ \mathbf{E}_{l}^{(i)}] = \mathbf{0}$.
    
    We next find the upper bound for $\|\mathbf{E}_{l}^{(i)}\|_2$.
    
    \paragraph{2). Upper Bound for $\|\mathbf{E}_{l}^{(i)}\|_2$.}  
        \begin{subequations}
        \label{eq:upper_bound_e_l_i}
            \begin{align}
            \|\mathbf{E}_{l, i}\|_2 & =\frac{2 \alpha_i}{L} \left\|\operatorname{Re}\left(\mathbf{Z}_l^{(i)}\mathbf{Z}_{l}^{(i)*}\right)-\mathbb{E} \left[\operatorname{Re}\left(\mathbf{Z}_l^{(i)}\mathbf{Z}_{l}^{(i)*}\right)\right]\right\|_2 \\
            & \leq \frac{2\alpha_i}{L}\left(\left\|\operatorname{Re}\left(\mathbf{Z}_l^{(i)}\mathbf{Z}_{l}^{(i)*}\right)\right\|_2+\left\|\mathbb{E}\left[\operatorname{Re}\left(\mathbf{Z}_l^{(i)}\mathbf{Z}_{l}^{(i)*}\right)\right]\right\|_2\right) \quad \quad \text{ (triangle inequality)}\\
            & \leq \frac{2\alpha_i}{L}\left(\left\|\operatorname{Re}\left(\mathbf{Z}_l^{(i)}\mathbf{Z}_{l}^{(i)*}\right)\right\|_2+ \mathbb{E} \left[ \left\| \operatorname{Re}\left(\mathbf{Z}_l^{(i)}\mathbf{Z}_{l}^{(i)*}\right) \right\|_2 \right] \right)   \quad \quad \text{  (Jensen’s inequality)}\\
            & \leq \frac{2a}{L}\left( 2N + 2N\right) \\
            & = \frac{2a }{L} 4 N  
            \end{align}
        \end{subequations}
where $a = \sqrt{\sum_{i=1}^m \alpha_i^2}$ and 
\begin{subequations}
    \begin{align}
        & \bm{c}_l^{(i)}  =\left[\cos \left(2 \pi \w_{l}^{(i)\top} \x_1\right), \ldots, \cos \left(2 \pi \w_{l}^{(i)\top} \x_N\right)\right]^\top \in \mathbb{R}^{N \times 1}, \\
        & \bm{s}_l^{(i)}  =\left[\sin \left(2 \pi \w_{l}^{(i)\top} \x_1\right) , \ldots, \sin \left(2 \pi \w_{l}^{(i)\top} \x_N\right)\right]^\top \in \mathbb{R}^{N \times 1},\\
        & \operatorname{Re}\left(\mathbf{Z}_l^{(i)}\mathbf{Z}_{l}^{(i)*}\right) = \bm{c}_l^{(i)} \bm{c}_l^{(i)\top} + \bm{s}_l^{(i)}\bm{s}_l^{(i)\top},
    \end{align}
\end{subequations}
and the last inequality in Eq.~\eqref{eq:upper_bound_e_l_i}, we use the fact that 
$$
    \left\|\operatorname{Re}\left(\mathbf{Z}_l^{(i)}\mathbf{Z}_l^{(i)*}\right)\right\|_2=\sup _{\|\bm{v}\|_2^2=1} \bm{v}^\top\left(\bm{c}_l^{(i)} \bm{c}_l^{(i)\top}+\bm{s}_l^{(i)} \bm{s}_l^{(i)\top}\right) \bm{v}
    \leq 2 N.
$$
Next, we are going to bound the variance, $\left\| \sum_{i=1}^m \sum_{l=1}^{L/2} \mathbb{E}[ (\mathbf{E}_{l}^{(i)})^2 ] \right\|_2$. 

\paragraph{3). Upper Bound for the Variance, $\left\| \sum_{i=1}^m \sum_{l=1}^{L/2} \mathbb{E}[ (\mathbf{E}_{l}^{(i)})^2 ] \right\|_2$.} 
We first have the following bound:
    \begin{subequations}
    \begin{align}
    \frac{L^2}{4\alpha_i^2} \mathbb{E}\left[\left(\mathbf{E}_l^{(i)}\right)^2\right] & =\mathbb{E}\left[\operatorname{Re}\left(\mathbf{Z}_{l}^{(i)} \mathbf{Z}_{l}^{(i)*}\right)^2\right]-\left(\mathbb{E}\left[\operatorname{Re}\left(\mathbf{Z}_{l}^{(i)} \mathbf{Z}_{l}^{(i)*} \right)\right]\right)^2 \\
    & \preccurlyeq \mathbb{E}\left[\operatorname{Re}\left(\mathbf{Z}_{l}^{(i)} \mathbf{Z}_{l}^{(i)*}\right)^2\right]  \label{eq:first_inqu_}\\
    & =\mathbb{E}\left[\left(\bm{c}_l^{(i)\top} \bm{c}_l^{(i)}\right) \bm{c}_l^{(i)} \bm{c}_l^{(i)\top}+\left(\bm{s}_l^{(i)\top} \bm{s}_l^{(i)}\right) \bm{s}_l^{(i)} \bm{s}_l^{(i)\top}+\left(\bm{s}_l^{(i)\top} \bm{c}_l^{(i)}\right)\left(\bm{s}_l^{(i)} \bm{c}_l^{(i)\top}+\bm{c}_l^{(i)} \bm{s}_l^{(i)\top}\right)\right] \\
    & \preccurlyeq N \mathbb{E}\left[\bm{c}_l^{(i)} \bm{c}_l^{(i)\top}+\bm{s}_l^{(i)} \bm{s}_l^{(i)\top}\right]+\mathbb{E}\left[\left(\bm{s}_l^{(i)\top} \bm{c}_l^{(i)}\right)\left(\bm{s}_l^{(i)} \bm{c}_l^{(i)\top}+\bm{c}_l^{(i)} \bm{s}_l^{(i)\top}\right)\right] \label{eq:second_inqu_} \\
    & =N \mathbb{E}\left[\operatorname{Re}\left(\mathbf{Z}_{l}^{(i)} \mathbf{Z}_{l}^{(i)*}\right)\right]+\mathbb{E}\left[\left(\bm{s}_l^{(i)\top} \bm{c}_l^{(i)}\right)\left(\bm{s}_l^{(i)} \bm{c}_l^{(i)\top}+\bm{c}_l^{(i)} \bm{s}_l^{(i)\top}\right)\right]
    \end{align}
    \end{subequations}
where the notation $\mathbf{A} \preccurlyeq \mathbf{B}$ denotes that $\mathbf{B} - \mathbf{A}$ is a positive semi definite (PSD) matrix, and the  inequality in Eq.~\eqref{eq:first_inqu_} holds due to the fact that  $\left(\mathbb{E}\left[\operatorname{Re}\left(\mathbf{Z}_{l}^{(i)} \mathbf{Z}_{l}^{(i)*} \right)\right]\right)^2$ is a PSD matrix. The inequality in Eq.~\eqref{eq:second_inqu_} holds because 
\begin{equation}
    \begin{aligned}
        & N \mathbb{E}\left[\bm{c}_l^{(i)} \bm{c}_l^{(i)\top}+\bm{s}_l^{(i)} \bm{s}_l^{(i)\top}\right] - \mathbb{E}\left[\left(\bm{c}_l^{(i)\top} \bm{c}_l^{(i)}\right) \bm{c}_l^{(i)} \bm{c}_l^{(i)\top}+\left(\bm{s}_l^{(i)\top} \bm{s}_l^{(i)}\right) \bm{s}_l^{(i)} \bm{s}_l^{(i)\top} \right] \\
        & = \mathbb{E}\left[\left(\bm{s}_l^{(i)\top} \bm{s}_l^{(i)}\right) \bm{c}_l^{(i)} \bm{c}_l^{(i)\top}+\left(\bm{c}_l^{(i)\top} \bm{c}_l^{(i)}\right) \bm{s}_l^{(i)} \bm{s}_l^{(i)\top} \right] \quad \ldots \quad \footnotesize{\left[\text{due to }  \left(\bm{c}_l^{(i)\top} \bm{c}_l^{(i)} + \bm{s}_l^{(i)\top} \bm{s}_l^{(i)}\right)=N \right]}
    \end{aligned}
\end{equation}
is a PSD matrix.

Then we are able to bound the variance, $\left\| \sum_{i=1}^m \sum_{l=1}^{L/2} \mathbb{E}[ (\mathbf{E}_{l}^{(i)})^2 ] \right\|_2$, as
\begin{equation}
\label{eq:variance_bound}
    \begin{aligned}
    & ~~~ \left\| \sum_{i=1}^m \sum_{l=1}^{L/2} \mathbb{E}[ (\mathbf{E}_{l}^{(i)})^2 ] \right\|_2 \\ 
    & \leq\left\|\sum_{i=1}^m \sum_{l=1}^{L/2} \frac{4 \alpha_i^2}{L^2}\left(N \mathbb{E}\left[\operatorname{Re}\left(\mathbf{Z}_l^{(i)} \mathbf{Z}_l^{(i)*}\right)\right]+\mathbb{E}\left[\left(\bm{s}_l^{(i) \top} \bm{c}_l^{(i)}\right)\left(\bm{s}_l^{(i)} \bm{c}_l^{(i) \top}+\bm{c}_l^{(i)} \bm{s}_l^{(i) \top}\right)\right]\right)\right\|_2 \\
    & \leq \frac{2a}{L}\left\|\sum_{i=1}^m \alpha_i\left(N \mathbb{E}\left[\operatorname{Re}\left(\mathbf{Z}_l^{(i)} \mathbf{Z}_l^{(i)*}\right)\right]+\mathbb{E}\left[\left(\bm{s}_l^{(i) \top} \bm{c}_l^{(i)}\right)\left(\bm{s}_l^{(i)} \bm{c}_l^{(i) \top}+\bm{c}_l^{(i)} \bm{s}_l^{(i) \top}\right)\right]\right)\right\|_2 \\
    & \leq \frac{2a}{L}\left(N\left\|\mathbf{K}_{\mathrm{sm}}\right\|_2+\sum_{i=1}^m \alpha_i\left\|\mathbb{E}\left[\left(\bm{s}_l^{(i) \top} \bm{c}_l^{(i)}\right)\left(\bm{s}_l^{(i)} \bm{c}_l^{(i) \top}+\bm{c}_l^{(i)} \bm{s}_l^{(i) \top}\right)\right]\right\|_2\right)   \quad \text{ (triangle inequality)}\\
    & \leq \frac{2a}{L}\left(N\left\|\mathbf{K}_{\mathrm{sm}}\right\|_2+\sum_{i=1}^m \alpha_i \mathbb{E}\left[\left\|\left(\bm{s}_l^{(i) \top} \bm{c}_l^{(i)}\right)\left(\bm{s}_l^{(i)} \bm{c}_l^{(i) \top}+\bm{c}_l^{(i)} \bm{s}_l^{(i) \top}\right)\right\|_2\right]\right) \quad \text{ (Jensen’s inequality)}\\
    & \leq \frac{2a}{L}\left(N\left\|\mathbf{K}_{\mathrm{sm}}\right\|_2+\frac{N}{2} \sum_{i=1}^m \alpha_i \mathbb{E}\left[\left\|\left(\bm{s}_l^{(i)} \bm{c}_l^{(i) \top}+\bm{c}_l^{(i)} \bm{s}_l^{(i) \top}\right)\right\|_2\right]\right) \qquad \qquad \quad    \left( | \bm{s}_l^{(i)\top} \bm{c}_l^{(i)} | \le \frac{N}{2} \right) \\
    & \leq \frac{2a N}{L}\left(\left\|\mathbf{K}_{\mathrm{sm}}\right\|_2+\frac{N}{2} a \sqrt{m} \right)  
    \end{aligned}
\end{equation}
where the last inequality is because that 
\begin{equation}
    \mathbb{E}\left[\left\|\left(\bm{s}_l^{(i)} \bm{c}_l^{(i) \top}+\bm{c}_l^{(i)} \bm{s}_l^{(i) \top}\right)\right\|_2\right] = \sup_{\|\bm{v}\|_2^2 =1} \mathbb{E}\left[ \left\| \bm{v}^\top \left(\bm{s}_l^{(i)} \bm{c}_l^{(i) \top}+\bm{c}_l^{(i)} \bm{s}_l^{(i) \top}\right) \bm{v} \right\|_2\right] \le N, 
\end{equation}
and $\sum_{i=1}^m \alpha_i \le a \sqrt{m}$ by the Cauchy–Schwarz inequality.
\paragraph{4). Final Result.}
We next can apply the derived upper bounds, Eqs.~\eqref{eq:upper_bound_e_l_i} and \eqref{eq:variance_bound}, to the $H$ and $v(\bm{Y})$  in Lemma \ref{lemma:matrix_bernstein}, 
\begin{equation}
\begin{aligned}
    & {P} \left(\left\|\hat{\mathbf{K}}_{\mathrm{sm}} - \mathbf{K}_{\mathrm{sm}} \right\|_2 \geq \epsilon\right) \leq N \exp \left(\frac{-3 \epsilon^2 L }{2N a \left(6\left\| \mathbf{K}_{\mathrm{sm}} \right\|_2 + 3 N a \sqrt{m}+8 \epsilon\right)}\right)
\end{aligned}
\end{equation}
which completes the proof of Theorem \ref{thm:SM_RFF_approx}
\end{proof}

\section{Extended Related Work}
\label{app:related}

\paragraph{VAEs.}




As a facet of model collapse, the posterior collapse in variational autoencoders (VAEs) occurs when the variational posterior distribution of the latent variables approaches to the prior, resulting in a failure to exploit the valuable knowledge embedded in the observed data. Numerous approaches have been proposed to tackle this issue, with the most commonly embraced heuristic solution being the annealing of the KL term in the ELBO objective \cite{bowman2015generating, sonderby2016train}. Specifically, \citet{gulrajani2016pixelvae} suggest that posterior collapse is induced by the high-capacity decoder, which can map any noise vector to the desired target $\vx$. Motivating by this hypothesis, \citet{gulrajani2016pixelvae, yang2017improved} propose reducing the capacity of the decoder for better representations, albeit at the cost of a reduction in generative capability.   
Another line of works, such as \cite{lucas2019don, wang2022posterior,wang2021posterior}, claims that posterior collapse is partially attributed to the suboptimal selection of likelihood variances, aligning with our findings in the context of the Bayesian non-parametric GPLVM. Nevertheless, despite the alignment of these works addressing posterior collapse with our findings, the primary objective in VAEs is to improve generative capacity, deviating from our emphasis, which lies in recovering compact and informative latent representations. 


\paragraph{GPLVMs.} 
This paper focuses on the GPLVMs \citep{lawrence2005probabilistic}, which apply GP for modeling the nonlinear function in LVM, obviating the need to optimize substantial neural network parameters while alleviating overfitting and generalization issues \cite{wilson2020bayesian}.  The seminal work of GPLVM was proposed by \citet{lawrence2005probabilistic}. Subsequently, \citet{titsias2010bayesian} introduced the Bayesian formulation of the GPLVM, which variationally integrated out latent variables. However, this model exhibits computational efficiency only with specific preliminary kernel functions, such as the radial basis function (RBF) kernel \citep{williams2006gaussian}, imposing significant constraints on the model capacity of the GPLVM and leading to model collapse.
Recent endeavors have focused on enhancing the scalability and flexibility of the GPLVM \citep{lalchand2022generalised,de2021learning}, as well as ensuring compatibility with various likelihoods \cite{ramchandran2021latent}. Despite the relevance of these endeavors, the inference of these models relies on inducing points-based sparse GP \citep{titsias2009variational}. This necessitates optimizing additional inducing points, leading to increased computational burden and the risk of getting stuck in suboptimal solutions.  
Consequently, despite the enhanced model capability, these models often face challenges in achieving their theoretical potential to address model collapse.




\begin{table*}[t!]
    \caption{A summary of relevant LVMs, where $N$ and $M$ denote \# observations and the observation dimension, respectively, while $U, m, L$ represent \# inducing points, \#  mixture components in SM kernel, and the dimension of random features, respectively.
        } \label{table:comparison}
    \centering
    \vspace{-1ex}
    \resizebox{0.99\textwidth}{!}{%
        \begin{tabular}{lccccccl}
        \toprule
            Model 
            & \begin{tabular}[c]{@{}c@{}} Scalable \\ model  \end{tabular} 
            & \begin{tabular}[c]{@{}c@{}} Advanced \\ kernel \end{tabular} 
            & \begin{tabular}[c]{@{}c@{}} Probabilistic \\ mapping  \end{tabular} 
            & \begin{tabular}[c]{@{}c@{}} Bayesian  inference \\ of latent variables   \end{tabular}  
            & \begin{tabular}[c]{@{}c@{}} Computational \\ complexity \end{tabular} 
            & \begin{tabular}[c]{@{}c@{}} \# parameters  \end{tabular}  
            & Reference          \\
            \midrule
            \tikzexternalenable
            \tikzexternaldisable  \textsc{vae}    
            & \cmark                       
            & -                  
            &  \xmark       
            & \cmark         
            & -                 
            & -                
            & \citet{kingma2019introduction}  \\
            \tikzexternalenable
            \tikzexternaldisable \textsc{nbvae}  
            & \cmark                  
            & -                  
            & \xmark & \cmark   
            & -              
            & -              
            & \citet{zhao2020variational}   \\
            \tikzexternalenable
            \tikzexternaldisable \textsc{dca}
            & \cmark                      
            & -                 
            & \xmark & \cmark     
            & -                    
            & -                   
            & \citet{eraslan2019single}   \\
            \tikzexternalenable
            \tikzexternaldisable  \textsc{cvq}-\textsc{vae}       
            & \cmark                   
            & -                 
            & \xmark  & \cmark         
            & -                     
            & -                  
            & \citet{zheng2023online}  \\
            \tikzexternalenable
            \tikzexternaldisable   \textsc{gplvm}   
            & \xmark  
            & \xmark    
            & \cmark     
            & \xmark     
            & $\mathcal{O}(N^3)$                         
            & $N(N+Q)+C$    
            & \citet{lawrence2005probabilistic}   \\
            \tikzexternalenable
            \tikzexternaldisable  \textsc{bgplvm}                      
            & \cmark       
            & \xmark                    
            & \cmark                    
            & \cmark                          
            & $\mathcal{O}(NU^2)$                        
            & $Q(1+U+N+NQ) + C$                                                   
            & \citet{titsias2010bayesian}     \\
            \tikzexternalenable
            \tikzexternaldisable  \textsc{gplvm}-\textsc{svi}               
            & \cmark      
            & \xmark             
            & \cmark                    
            & \cmark   
            & $\mathcal{O}(MU^3)$   
            & $U(M+MU+Q)+2NQ+C$     
            & \citet{lalchand2022generalised}   \\
            \tikzexternalenable
            \tikzexternaldisable  \textsc{rflvm}      
            & \xmark                 
            & \cmark                  
            & \cmark      
            & \xmark            
            & $\mathcal{O}(NM^2L)$    
            &   $NQ+L(Q+M+\frac{Q^2}{2})+2M+C$   
            & \citet{zhang2023bayesian}   \\
            \midrule
            \tikzexternalenable
            \tikzexternaldisable  \adrflvm        
            & \cmark                                 
            & \cmark & \cmark                
            & \cmark      
            & $\mathcal{O}(N (mL)^2)$    
            & $Q(N+NQ+2m)+m+C$              
            & \textbf{This work} \\
            \bottomrule
            \tikzexternalenable
        \end{tabular}}
\end{table*}

\section{Experiment Details} \label{Appendix: Experiment_details}



\subsection{Data Descriptions and Preprocessing}\label{app:ds_preprocessing}

We first describe the detailed parameter settings for the two synthetic $S$-shaped datasets used in \S~\ref{subsec:S_shape_demo}. The datasets are generated from a GPLVM with different kernel configurations, which are listed below:
\begin{itemize}
    \item Dataset with \textbf{RBF} kernel:
    \begin{equation}
        k_{\mathrm{rbf}}(\x, \x^\prime) = \ell_{o} \exp(-\frac{(\x- \x^\prime)^2}{2 \ell_l^2}),
    \end{equation}
    with outputscale $\ell_o = 1$ and lengthscale $\ell_l = 1$.
    \item Dataset with a hybrid (\textbf{RBF+periodic}) kernel:
    \begin{subequations}
        \begin{align}
           & k_{\mathrm{hybrid}}(\x, \x^\prime) = k_{\mathrm{rbf}}(\x, \x^\prime) + k_{\mathrm{periodic}}(\x, \x^\prime), & \\
            &k_{\mathrm{rbf}}(\x, \x^\prime) = \ell_{o} \exp(-\frac{(\x- \x^\prime)^2}{2 \ell_l^2}),  & \quad \text{ with }  \ell_o = 0.5, \ell_l = 1;\\
            & k_{\mathrm{periodic}}(\x, \x^\prime) = \ell_{o} \exp \left(-\frac{2 \sin ^2\left(\frac{\x-\x^{\prime}}{p}\right)}{ \ell_l^2}\right),  & \quad \text{ with }  \ell_o = 0.5, \ell_l = 1, p=4.5.
        \end{align}
    \end{subequations}
\end{itemize}

Next, we offer a comprehensive introduction to real-world datasets and downsample large-scale datasets to a smaller size to accommodate the high computational complexity in RFLVM \citep{gundersen2021latent}. 

\begin{itemize}
    \item{
        \textsc{bridges}:
        We recorded the daily count of bicycles crossing each of the four East River bridges in New York City\footnote{\url{https://data.cityofnewyork.us/Transportation/Bicycle-Counts-for-East-River-Bridges/gua4-p9wg}}. To assign labels, we categorized the data into weekday versus weekend, treating them as binary labels due to the absence of explicit labels in the dataset. This categorization was made based on the understanding that weekdays and weekends are inherently linked to variations in bicycle counts.
    }
    
    \item{
        \textsc{cifar}-10:
        To create a final dataset of size 2000, we subsampled 400 images from each class within [airplane, automobile, bird, cat, deer]. These images were further resized from \(32 \times 32\) pixels to \(20 \times 20\) pixels and converted to grayscale. Test performance of different models on the full dataset can be found in \S.~\ref{app:larger_datasets}.
    }
    
    \item{
        \textsc{mnist:}
        The dataset size was reduced by randomly selecting $1000$ images. Test performance of different models on the full dataset can be found in \S.~\ref{app:larger_datasets}.
        }
    
    \item{
        \textsc{montreal:} We analyze the daily count of cyclists on eight bicycle lanes in Montreal\footnote{\url{http://donnees.ville.montreal.qc.ca/dataset/f170fecc-18db-44bc-b4fe-5b0b6d2c7297/resource/64c26fd3-0bdf-45f8-92c6-715a9c852a7b}}. Given the absence of explicit labels, we employed the four seasons as labels, as seasonality is correlated with bicycle counts.
    }
    
    \item{
        \textsc{newsgroups} The 20 Newsgroups Dataset\footnote{\url{http://qwone.com/~jason/20Newsgroups/}} was employed, with classes limited to \textit{comp.sys.mac.hardware}, \textit{sci.med}, and \textit{alt.atheism}. The vocabulary was constrained to words with document frequencies falling within the range of $10-90\%$.
    }

    \item{
        \textsc{yale:}
        The Yale Faces Dataset\footnote{\url{http://vision.ucsd.edu/content/yale-face-database}} was employed in our study, with subject IDs utilized as labels.
    }

    \item{
        \textsc{Brendan:}
        This dataset comprises 2000 images, each with a size of $20 \times 28$ pixels, depicting the face of Brendan\footnote{\url{https://cs.nyu.edu/~roweis/data/frey_rawface.mat}}. 
    }
    
    
\end{itemize} 

\subsection{Benchmark Methods Descriptions}

\begin{itemize}
    \item{
        \textbf{PCA, LDA, Isomap:} PCA \citep{wold1987principal}, LDA \citep{blei2003latent}, and Isomap \citep{balasubramanian2002isomap} were implemented utilizing the \texttt{sklearn.decomposition} module within the \texttt{scikit-learn} library \citep{sklearn_api}.
    }
    
    \item{
        \textbf{HPF:} The implementation of HPF \citep{gopalan2015scalable} is based on the \texttt{hpfrec} library\footnote{\url{https://github.com/david-cortes/hpfrec}}.   
    }
    
    \item{
        \textbf{BGPLVM:} We utilized the \texttt{BayesianGPLVMMiniBatch} implementation in the \texttt{GPy} library\footnote{\url{http://github.com/SheffieldML/GPy}}, which is an inducing points-based method \cite{titsias2009variational}.
    }

    \item{
        \textbf{GPLVM-SVI:} 
        We used the official source code based on \texttt{GPyTorch}\footnote{\url{https://github.com/vr308/Generalised-GPLVM}}. We also extended the GPLVM-SVI \citep{lalchand2022generalised} to accommodate the SM kernel function, but this modification could result in further performance degradation.
    }

    \item{
        \textbf{VAE:} The implementation of the VAE \citep{kingma2013auto} was built upon the example code provided by the \texttt{pytorch} library\footnote{\url{https://github.com/pytorch/examples/blob/main/vae/main.py}}.
    }

     \item{
        \textbf{NBVAE, DCA, CVQ-VAE, RFLVM:} All implementations for those algorithms adhere to the corresponding official code libraries available online
        \footnote{\url{https://github.com/ethanhezhao/NBVAE}}
        \footnote{\url{https://github.com/theislab/dca}}
        \footnote{\url{https://github.com/lyndonzheng/CVQ-VAE}}
        \footnote{\url{https://github.com/gwgundersen/rflvm}}. 
    }    
    
\end{itemize} 

\newpage
\subsection{Default Hyperparameter Configurations}

{
    \captionof{table}{Default hyperparameter settings.}
    \label{tab:para_settings}
    \vspace{-2ex}
    \small
    \begin{center}
        \scalebox{.95}
        {
            \rowcolors{1}{}{mylightgray}
            \begin{sc}
                \begin{tabular}{lc}
                    \toprule
                    parameter & value \\                
                    \midrule
                    \midrule
                    \# mixture densities in SM kernel ($m$)  & $2$ \\
                    dim. of random feature ($L$)   & $50$ \\
                    dim. of latent space ($Q$) & $2$ \\
                    \midrule
                    optimizer & adam \cite{kingma2015adam} \\
                    learning rate & $0.005$ \\
                    beta & $(0.9, 0.99)$ \\
                    \# iterations & $10000$    \\
                    \bottomrule
                    
                \end{tabular}
            \end{sc}
        }
    \end{center}
}

Tab.~\ref{tab:para_settings} displays the default hyperparameter settings employed by \adrflvm. 
These hyperparameter settings are employed in the majority of experiments, with the exception of the experiment corresponding to the left side of Fig.~\ref{fig:sigma}. In this case, the dimensionality of the latent space is configured to $50$ to intuitively illustrate the variability in the number of zero-columns within the latent variables. 

\subsection{Additional Results}
\label{app:aditional_results}

\subsubsection{S-shaped Latent Manifold Estimation}
\label{app:S-shape Latent Manifold Estimation}

\begin{figure}[ht!]
    \centering
    \includegraphics[width=.8\linewidth]{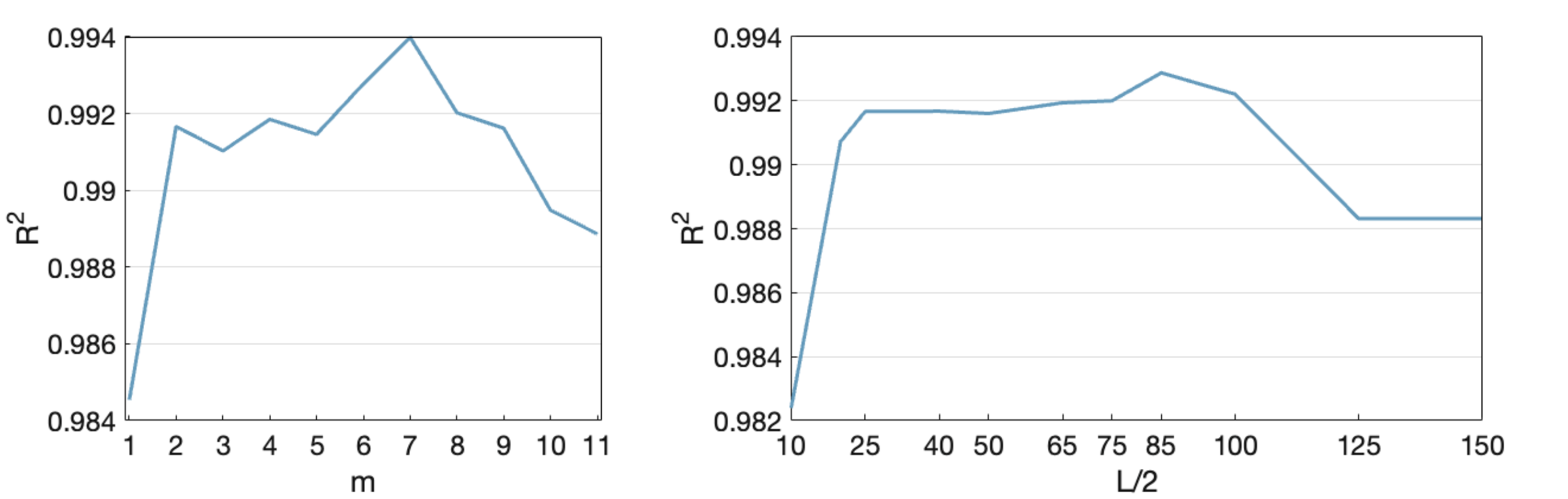}
    \caption{\textbf{(Left)} $\mathrm{R}^2$ against the number of mixture densities in SM kernel $(m)$. \textbf{(Right)}  $\mathrm{R}^2$ versus the dimensionality of the random feature ($L/2$). } 
    \label{fig:r2_m_L}
\end{figure}

To validate the rationale behind our parameter selection, this section presents an evaluation of \adrflvm, showcasing its performance in manifold visualization and $R^2$ scores across various values of $m$ and $L/2$. Fig.~\ref{fig:r2_m_L} depicts the \adrflvm performance in terms of $R^{2}$ scores. Additionally, visualizations of the latent manifold recovered by \adrflvm are provided in Fig.~\ref{fig:s-curve-m} and Fig.~\ref{fig:s-curve-L}. The results affirm that opting for $m = 2$ and $L/2 = 50$ ensures the lowest computational complexity while maintaining comparable performance. 

\subsubsection{Missing Data Imputation}
\label{app:missing_data}

To intuitively showcase the capability of \adrflvm in the task of missing data imputation, visualizations of the reconstructed observed data are presented in Fig.~\ref{appx_fig:mnist_missing_illustration} and Fig.~\ref{appx_fig:bface_missing_illustration}, underscoring its superior ability to restore missing pixels. 

\newpage
\begin{figure}[t!]
    \centering
    \subfloat[Ground-truth]{\includegraphics[width=.2\linewidth]{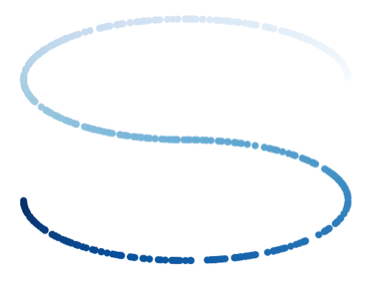}} \hspace{.1in}
    \subfloat[m=1]{\includegraphics[width=.2\linewidth]{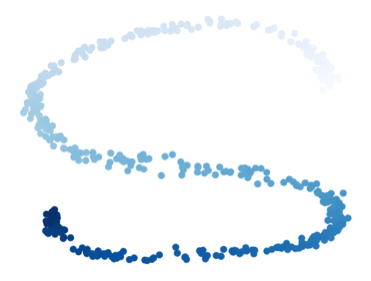}} \hspace{.1in}
    \subfloat[m=2]{\includegraphics[width=.2\linewidth]{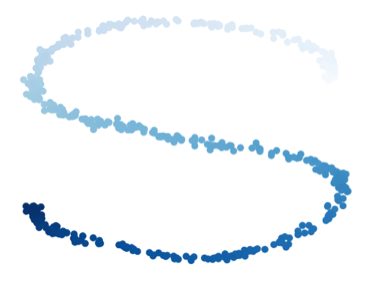}} \hspace{.1in}
    \subfloat[m=3]{\includegraphics[width=.2\linewidth]{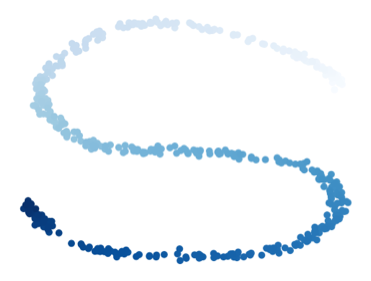}} \hspace{.1in}
    \vspace{.05in}

    \subfloat[m=4]{\includegraphics[width=.2\linewidth]{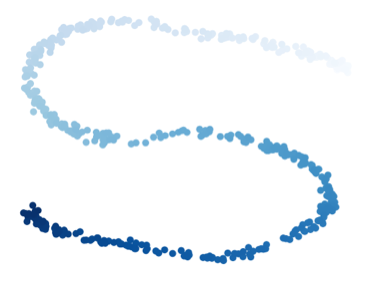}} \hspace{.1in}
    \subfloat[m=5]{\includegraphics[width=.2\linewidth]{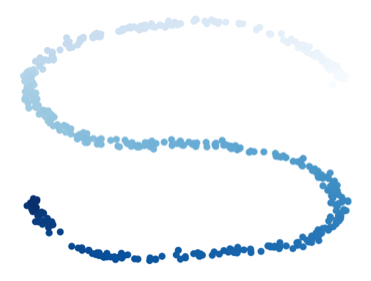}} \hspace{.1in}
    \subfloat[m=6]{\includegraphics[width=.2\linewidth]{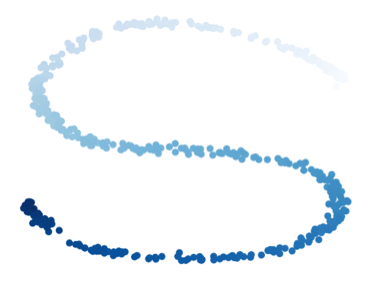}} \hspace{.1in}
    \subfloat[m=7]{\includegraphics[width=.2\linewidth]{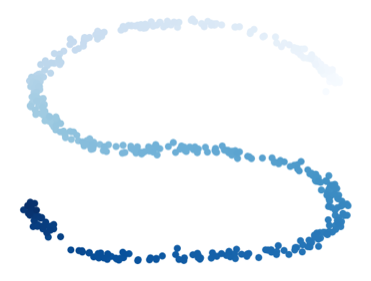}} \hspace{.1in}
    \vspace{.05in}
    
    \subfloat[m=8]{\includegraphics[width=.2\linewidth]{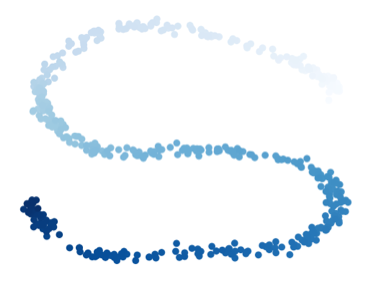}} \hspace{.1in}
    \subfloat[m=9]{\includegraphics[width=.2\linewidth]{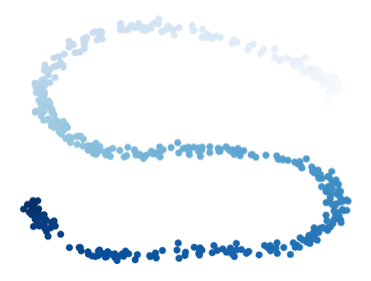}} \hspace{.1in}
    \subfloat[m=10]{\includegraphics[width=.2\linewidth]{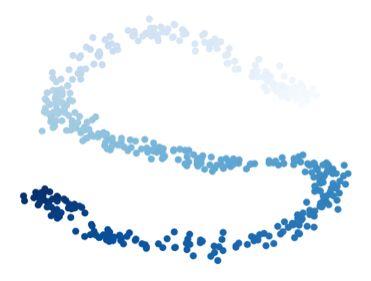}} \hspace{.1in}
    \subfloat[m=11]{\includegraphics[width=.2\linewidth]{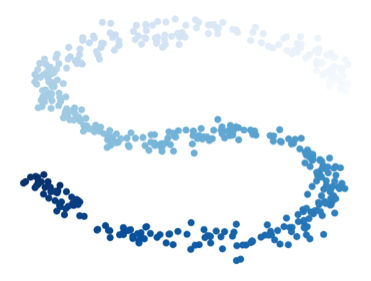}} \hspace{.1in}
    \caption{Latent manifold learning results with $L/2=25$ and different $m$}
    \label{fig:s-curve-m}
\end{figure}
\begin{figure}[t!]
    \centering
    \subfloat[Ground-truth]{\includegraphics[width=.2\linewidth]{figures/s_shape/appx/gd.png}} \hspace{.1in}
    \subfloat[L=10]{\includegraphics[width=.2\linewidth]{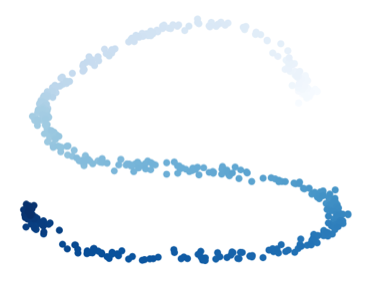}} \hspace{.1in}
    \subfloat[L=20]{\includegraphics[width=.2\linewidth]{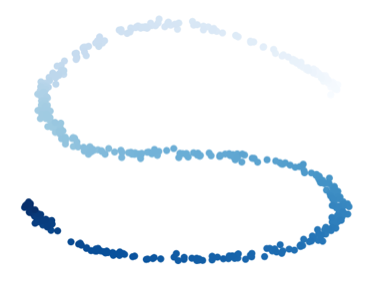}} \hspace{.1in}
    \subfloat[L=25]{\includegraphics[width=.2\linewidth]{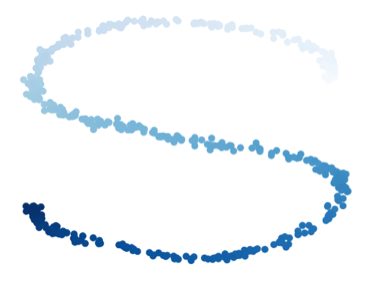}} \hspace{.1in}
    \vspace{.05in}

    \subfloat[L=40]{\includegraphics[width=.2\linewidth]{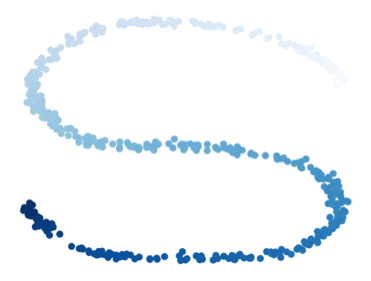}} \hspace{.1in}
    \subfloat[L=50]{\includegraphics[width=.2\linewidth]{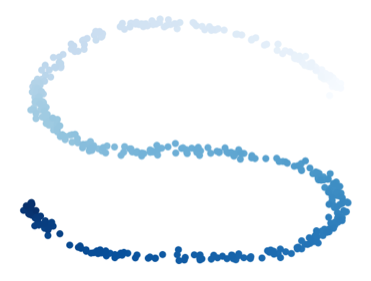}} \hspace{.1in}
    \subfloat[L=65]{\includegraphics[width=.2\linewidth]{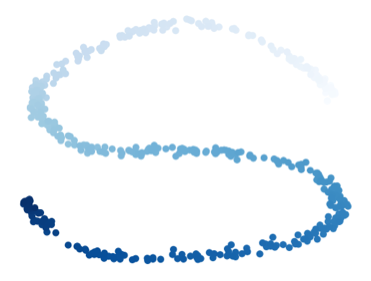}} \hspace{.1in}
    \subfloat[L=75]{\includegraphics[width=.2\linewidth]{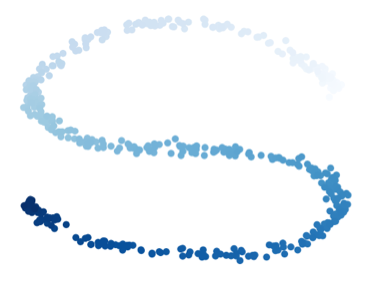}} \hspace{.1in}
    \vspace{.05in}
    
    \subfloat[L=85]{\includegraphics[width=.2\linewidth]{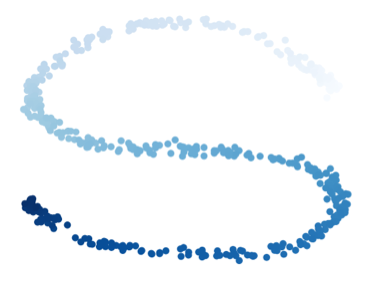}} \hspace{.1in}
    \subfloat[L=100]{\includegraphics[width=.2\linewidth]{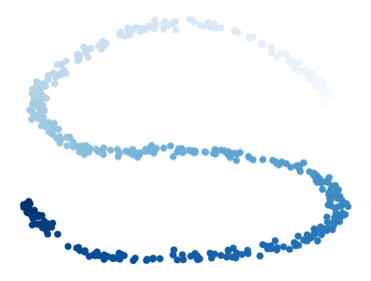}} \hspace{.1in}
    \subfloat[L=125]{\includegraphics[width=.2\linewidth]{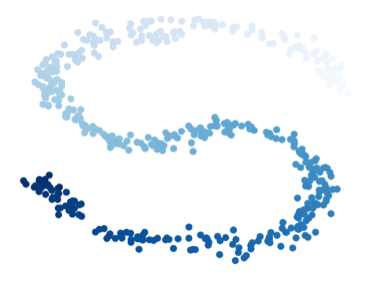}} \hspace{.1in}
    \subfloat[L=150]{\includegraphics[width=.2\linewidth]{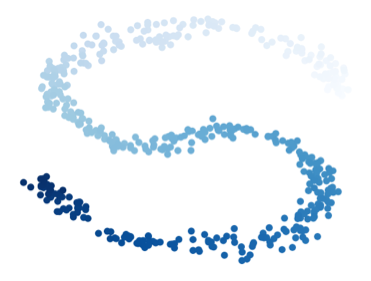}} \hspace{.1in}
    \caption{Latent manifold learning results with $m=2$ and different $L$.}
    \label{fig:s-curve-L}
\end{figure}

\begin{figure}[t!]
    \centering
    \subfloat[MNIST reconstruction task with 0\% missing pixels.]{
    \includegraphics[width=.3\linewidth]{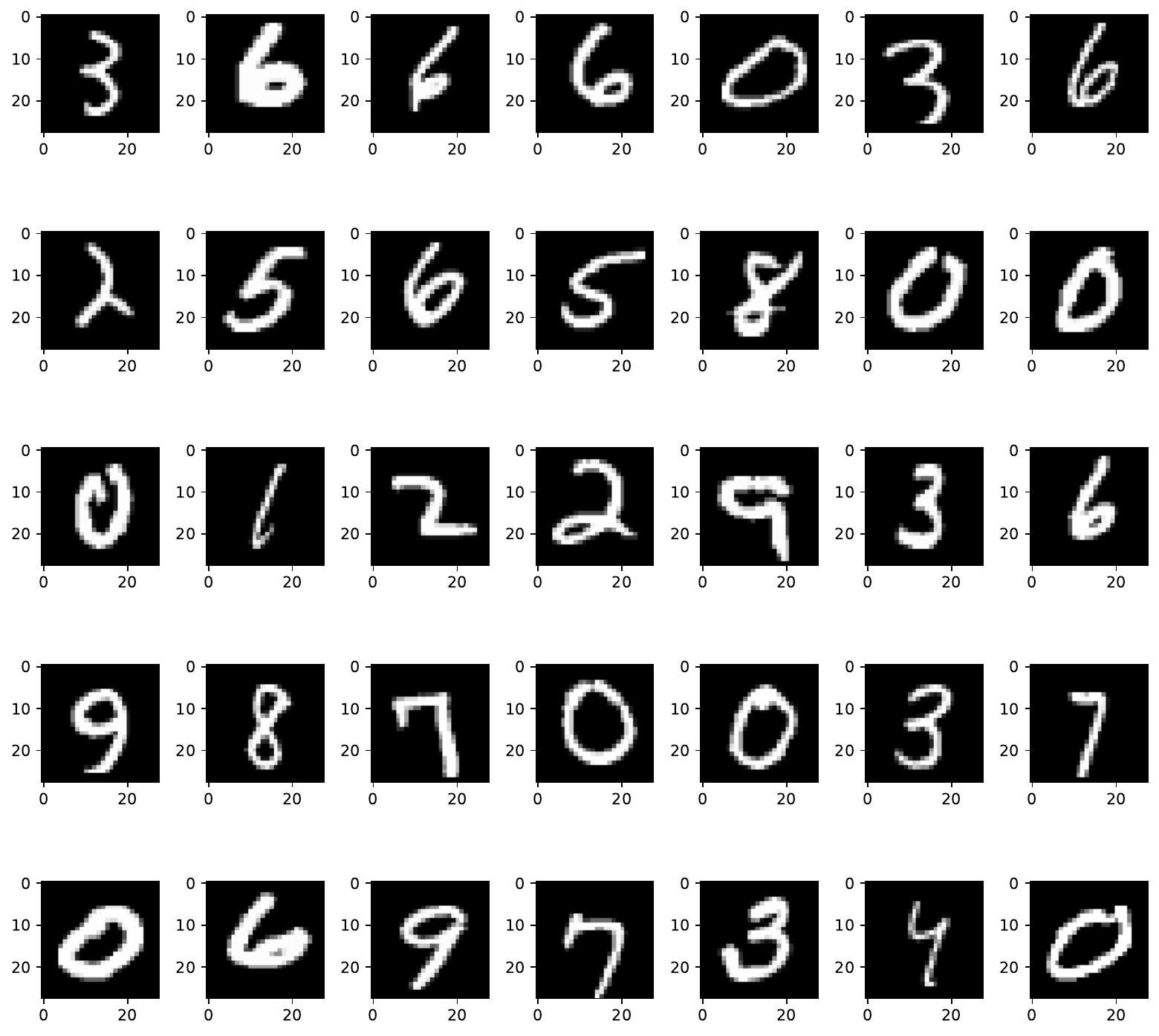} \ \ \
    \includegraphics[width=.3\linewidth]{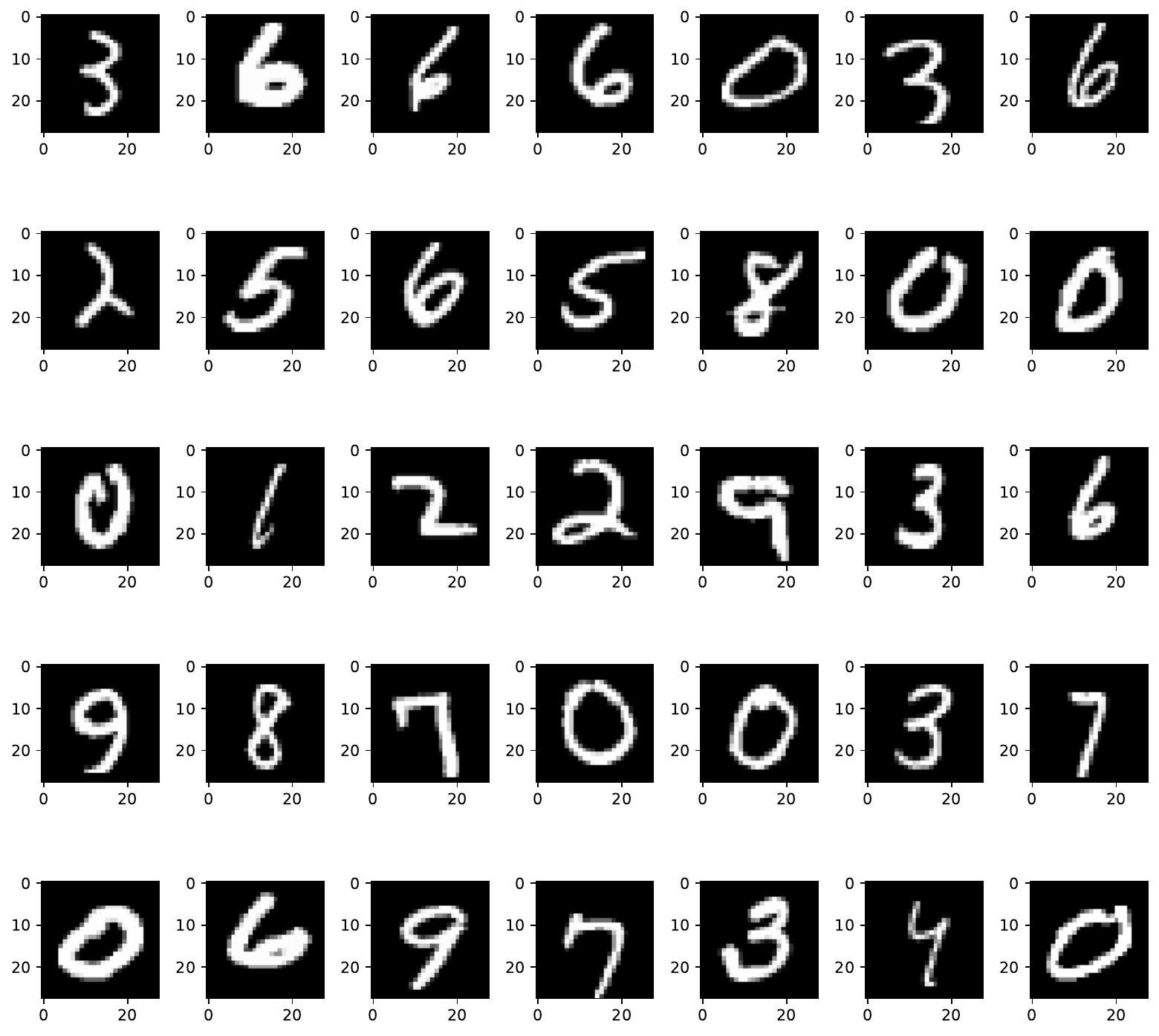} \ \ \
    \includegraphics[width=.3\linewidth]{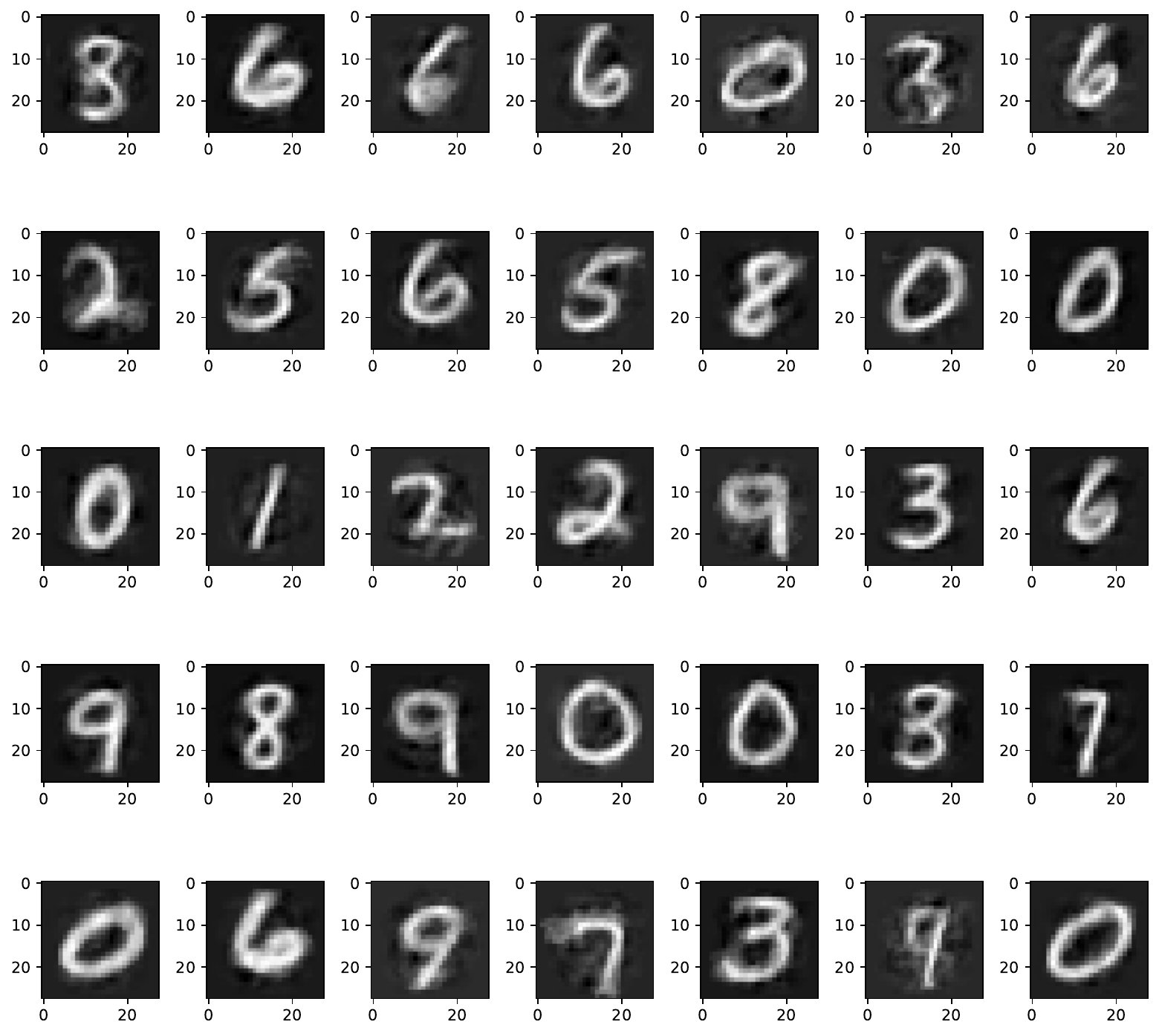}
    } \vspace{.15in}
    
    \subfloat[MNIST reconstruction task with 10\% missing pixels.]{
    \includegraphics[width=.3\linewidth]{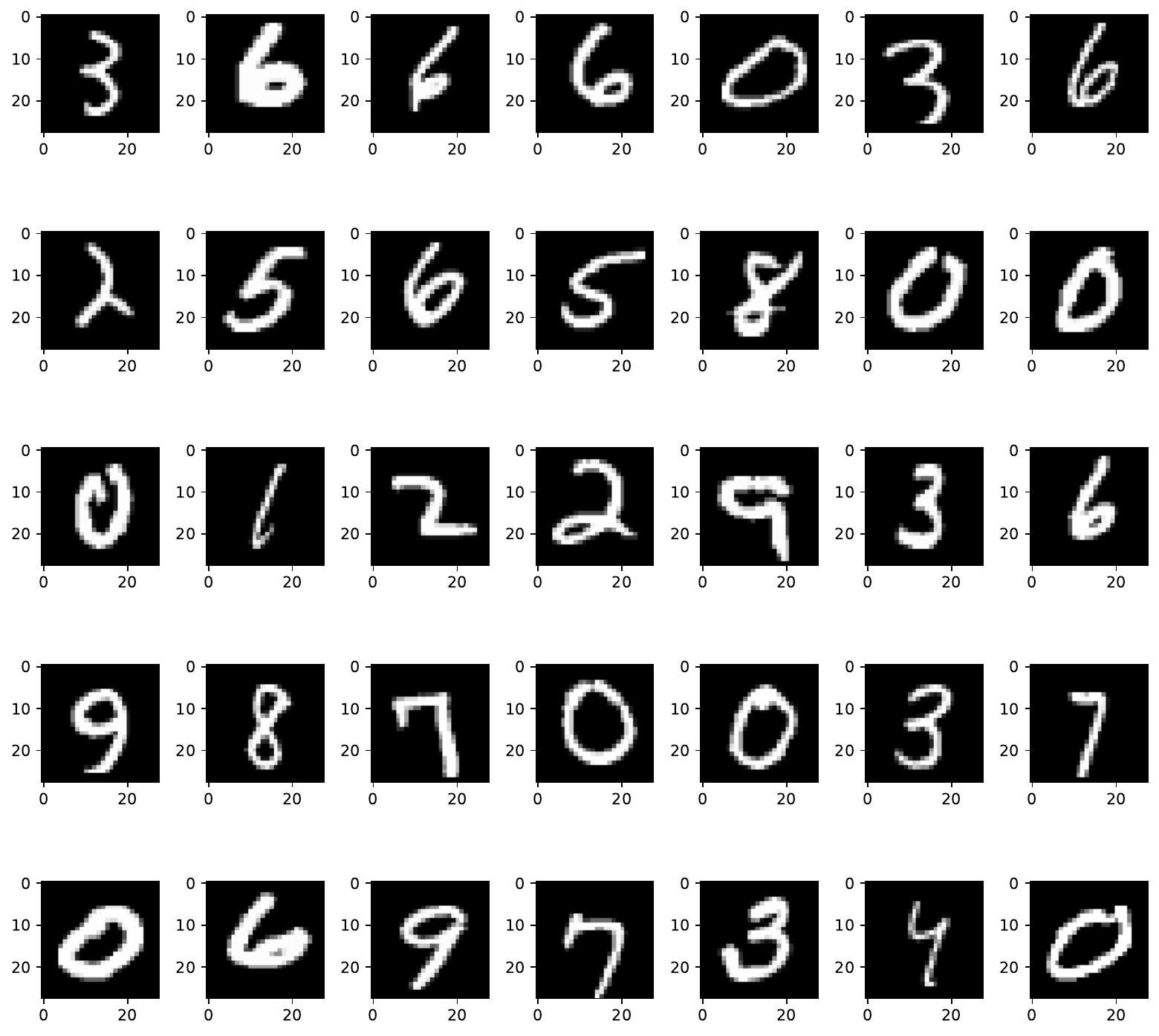} \ \ \
    \includegraphics[width=.3\linewidth]{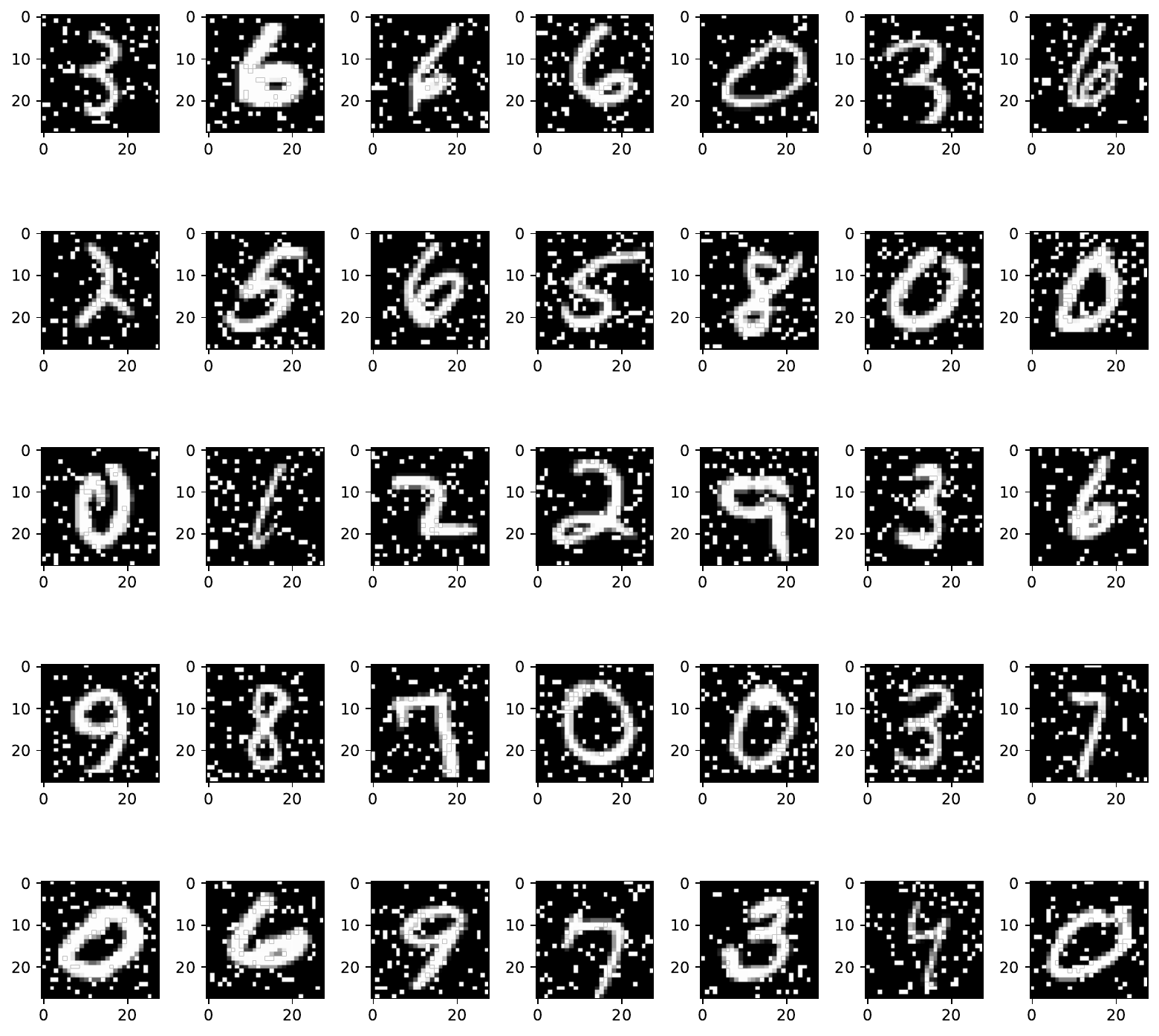} \ \ \
    \includegraphics[width=.3\linewidth]{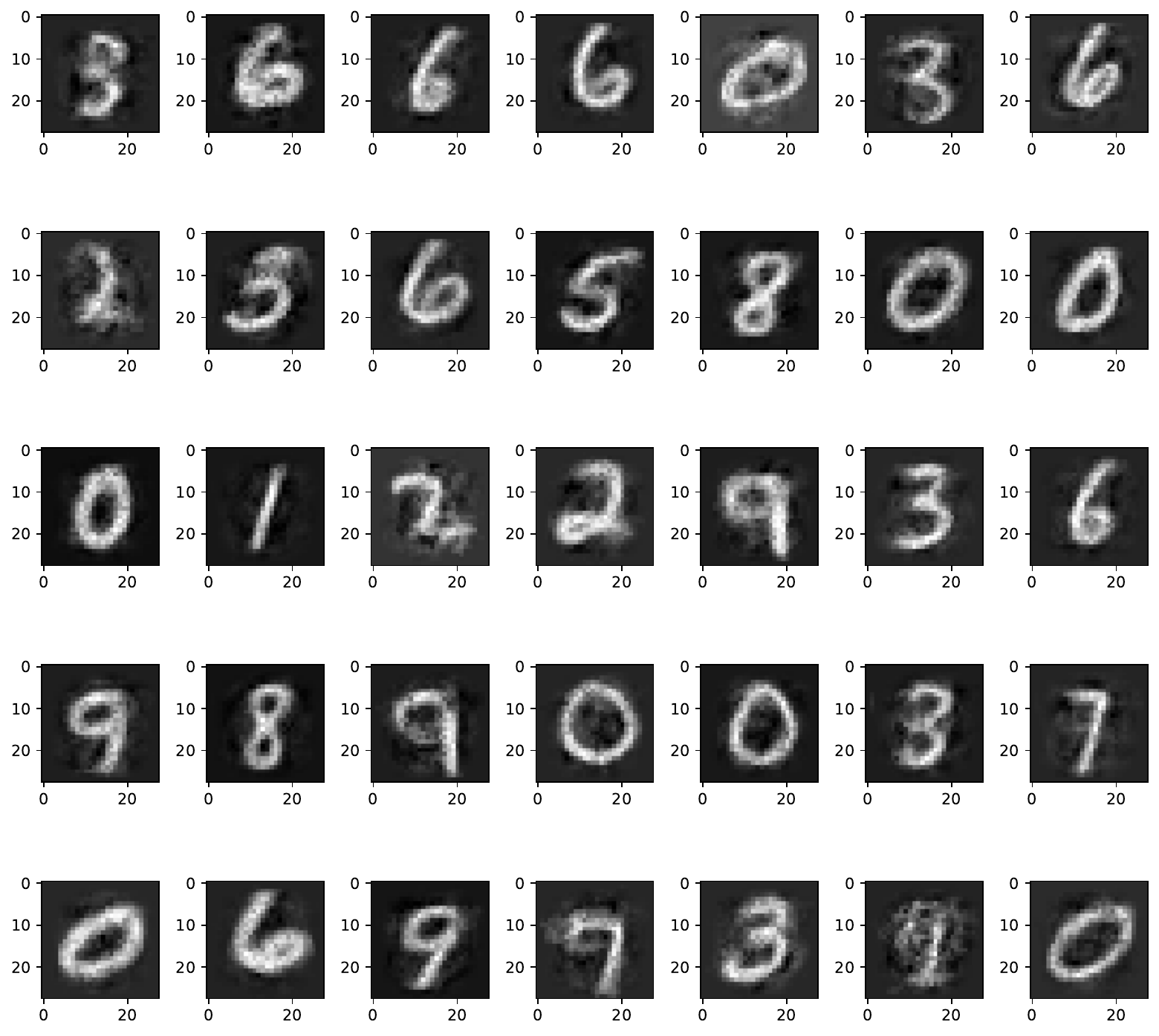}
    } \vspace{.15in}

    \subfloat[MNIST reconstruction task with 30\% missing pixels.]{
    \includegraphics[width=.3\linewidth]{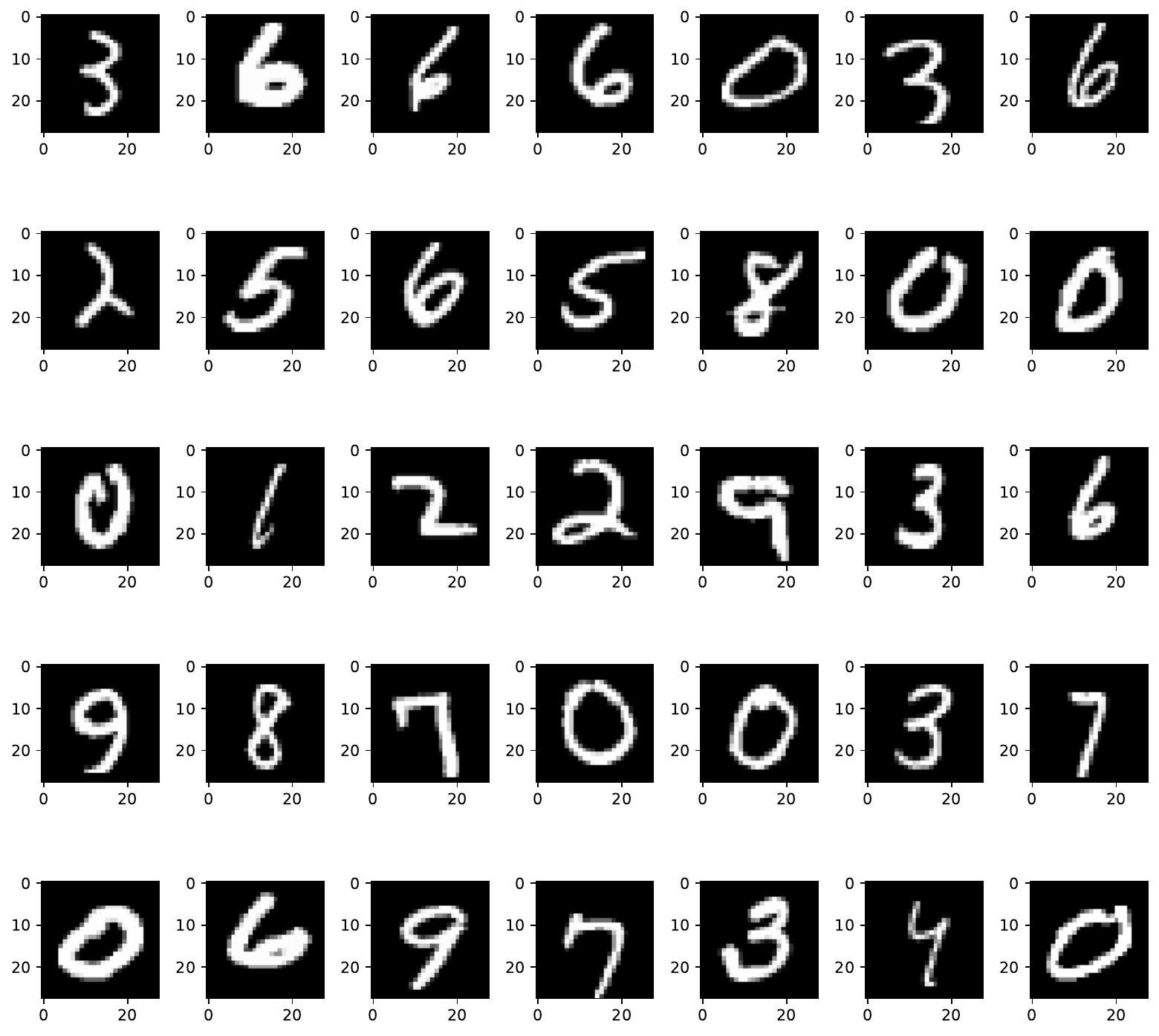} \ \ \
    \includegraphics[width=.3\linewidth]{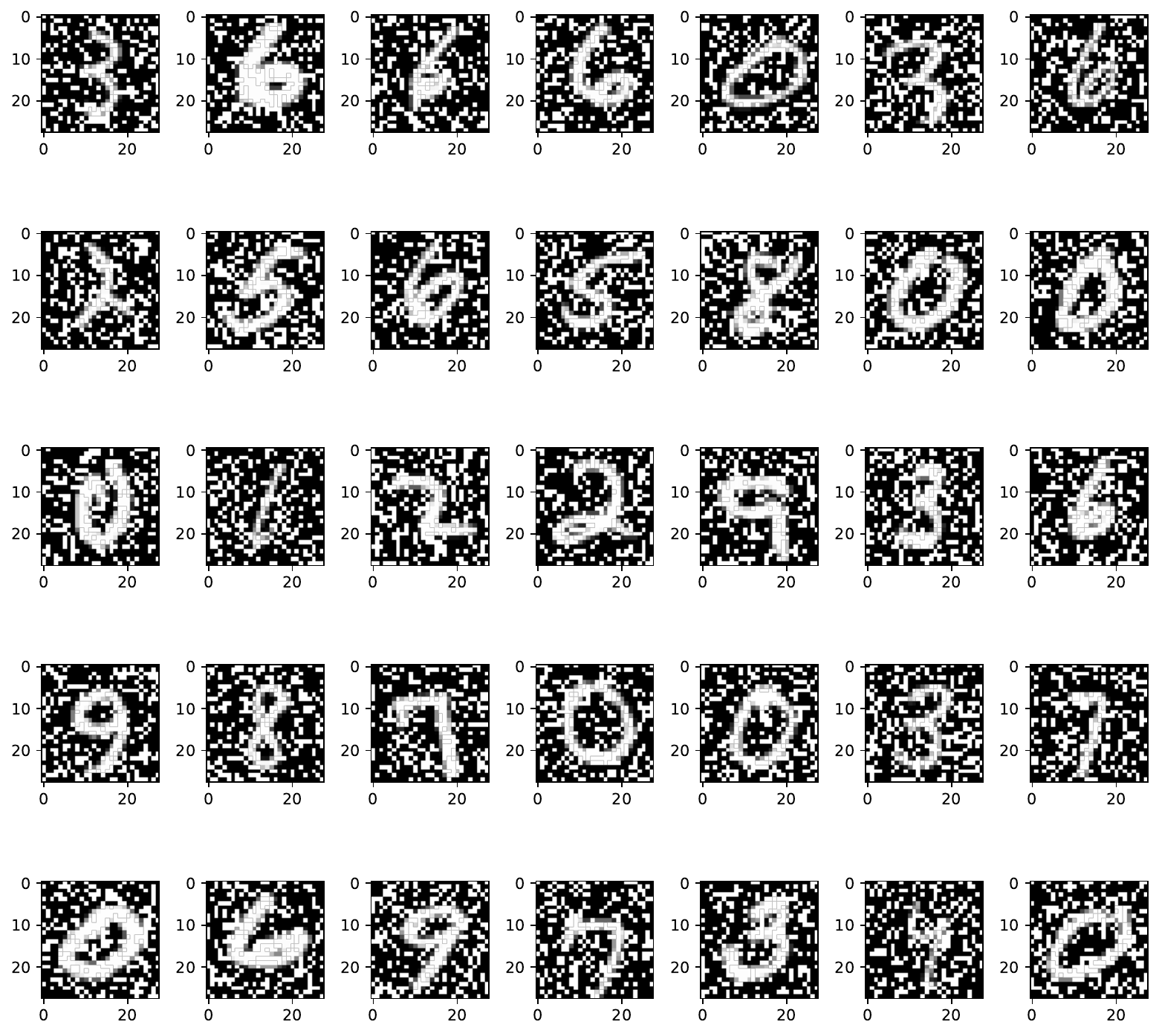} \ \ \
    \includegraphics[width=.3\linewidth]{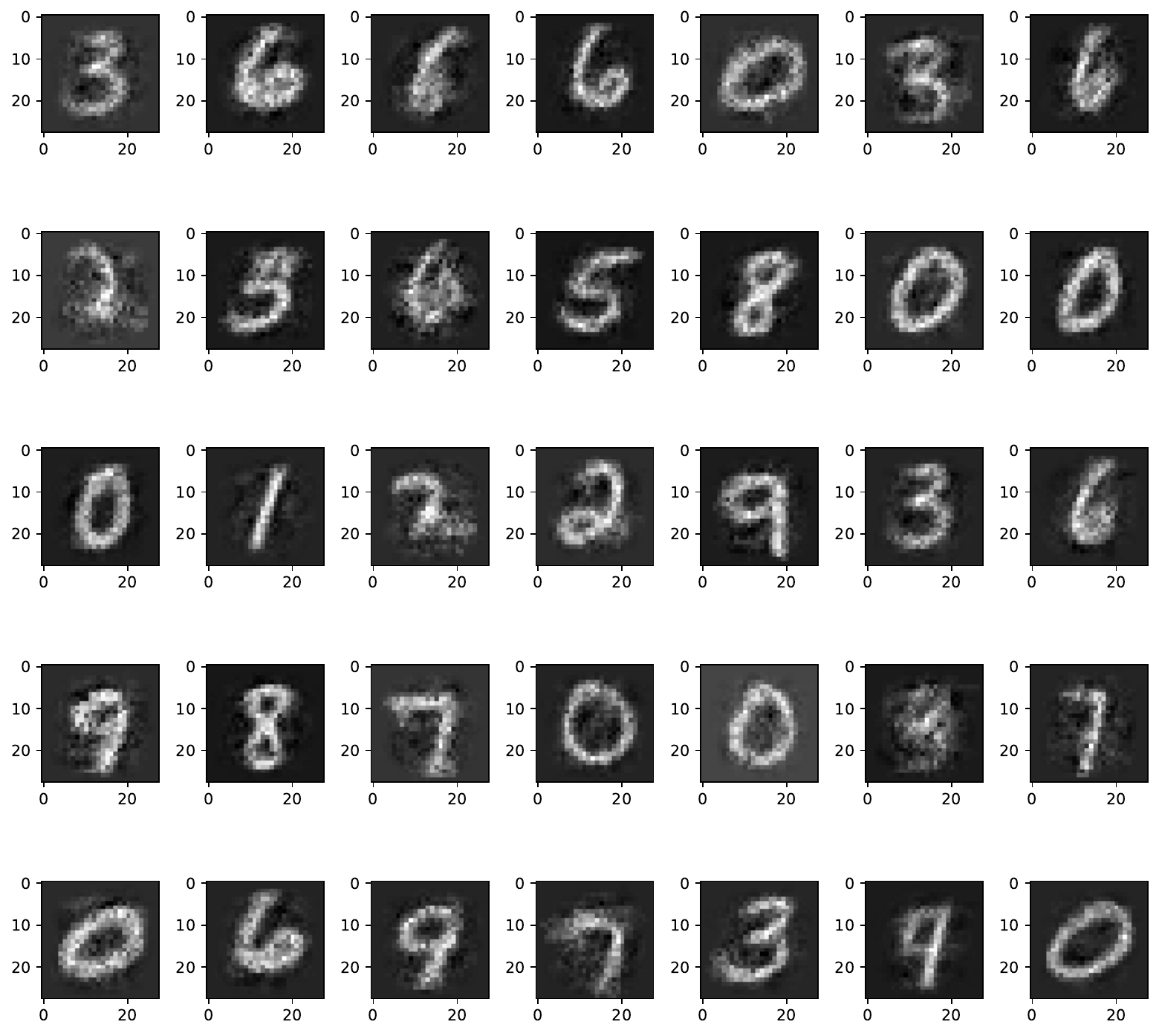}
    } \vspace{.15in}

    \subfloat[MNIST reconstruction task with 60\% missing pixels.]{
    \includegraphics[width=.3\linewidth]{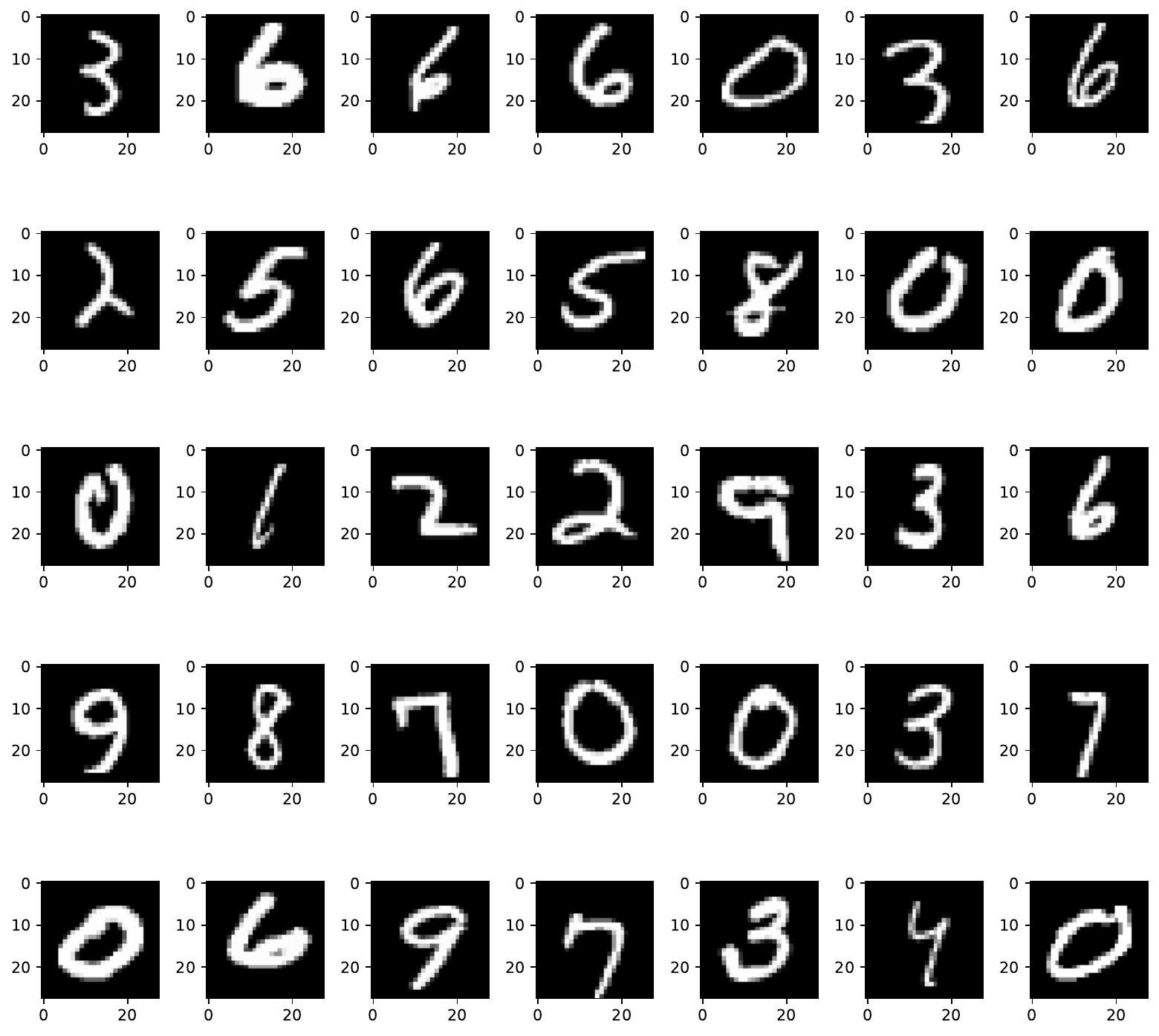} \ \ \
    \includegraphics[width=.3\linewidth]{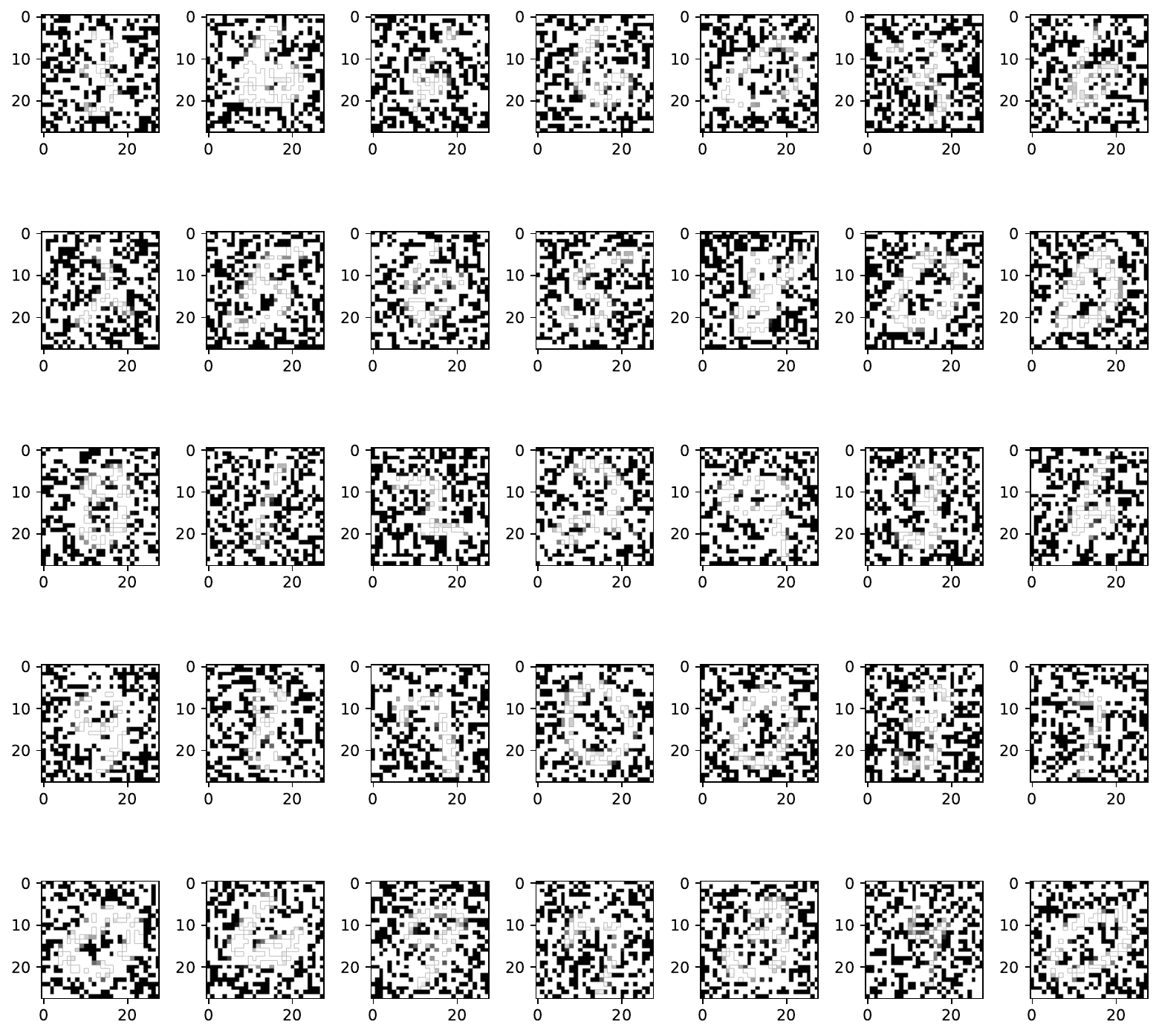} \ \ \
    \includegraphics[width=.3\linewidth]{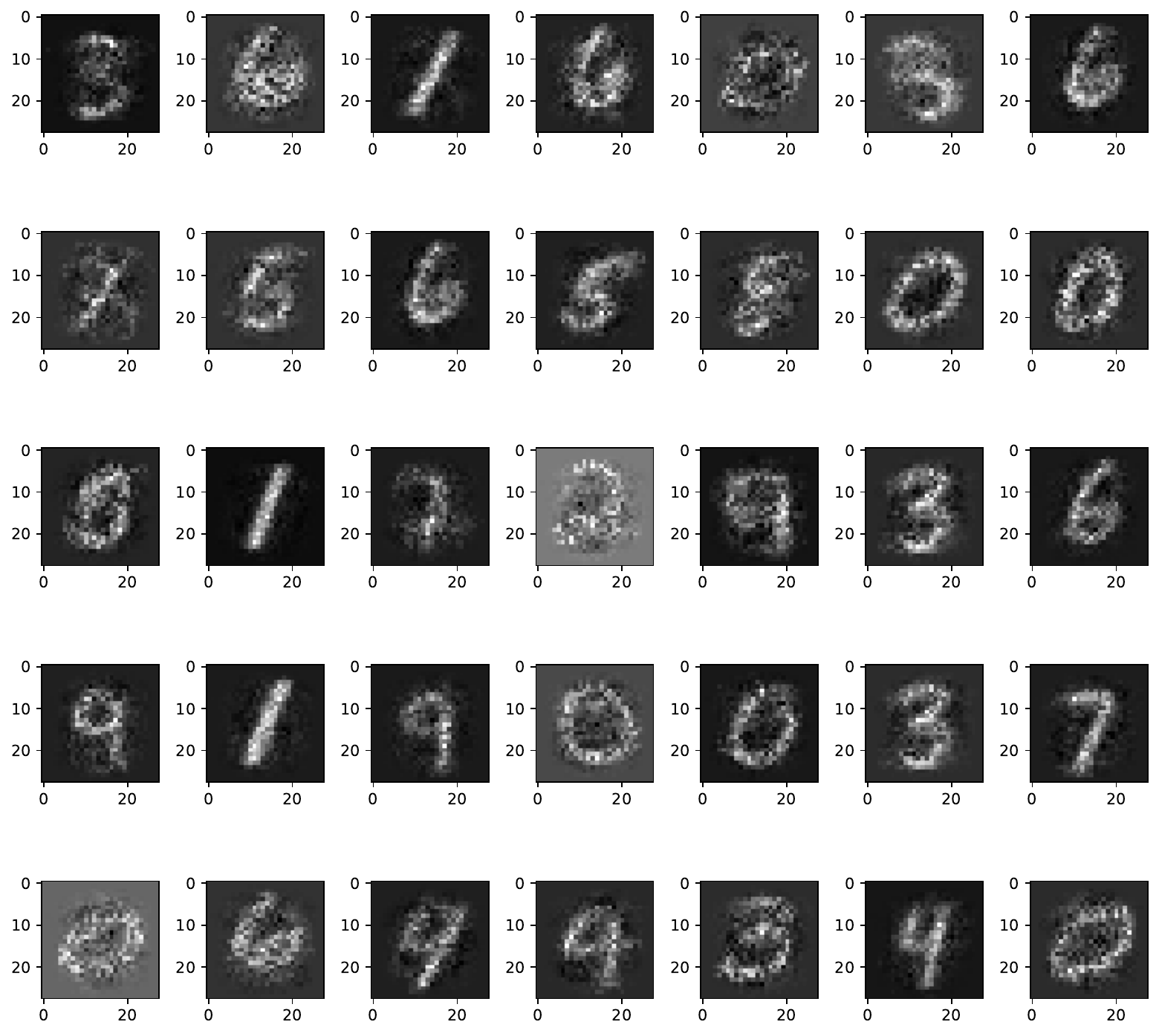}
    }  
    \caption{MNIST reconstruction task with missing pixels. From left to right: Ground truth, training images, reconstructions}
    \label{appx_fig:mnist_missing_illustration}
\end{figure}

\begin{figure}[t!]
    \centering
    \subfloat[Brendan faces reconstruction task with 0\% missing pixels.]{
    \includegraphics[width=.3\linewidth]{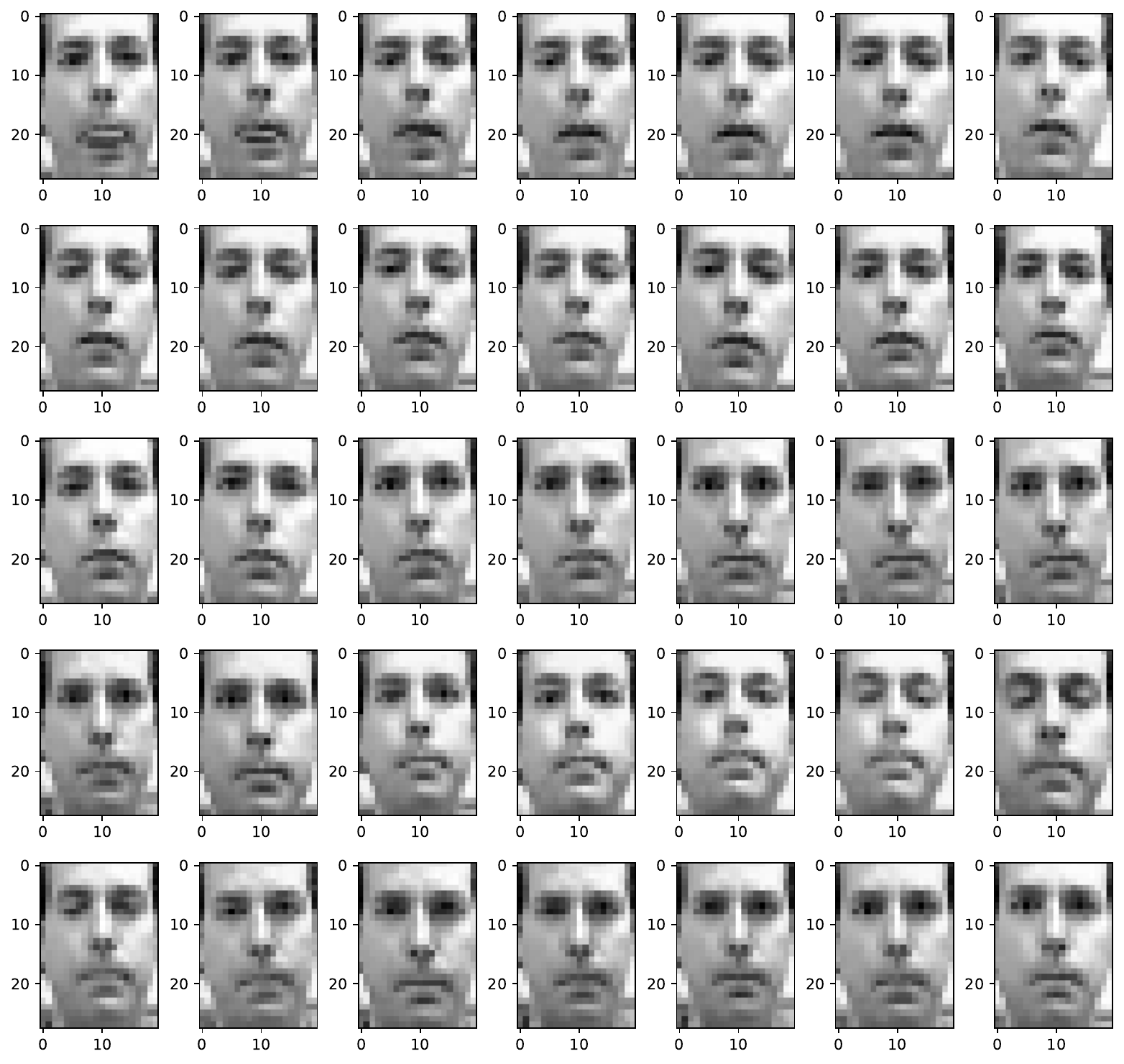} \ \ \
    \includegraphics[width=.3\linewidth]{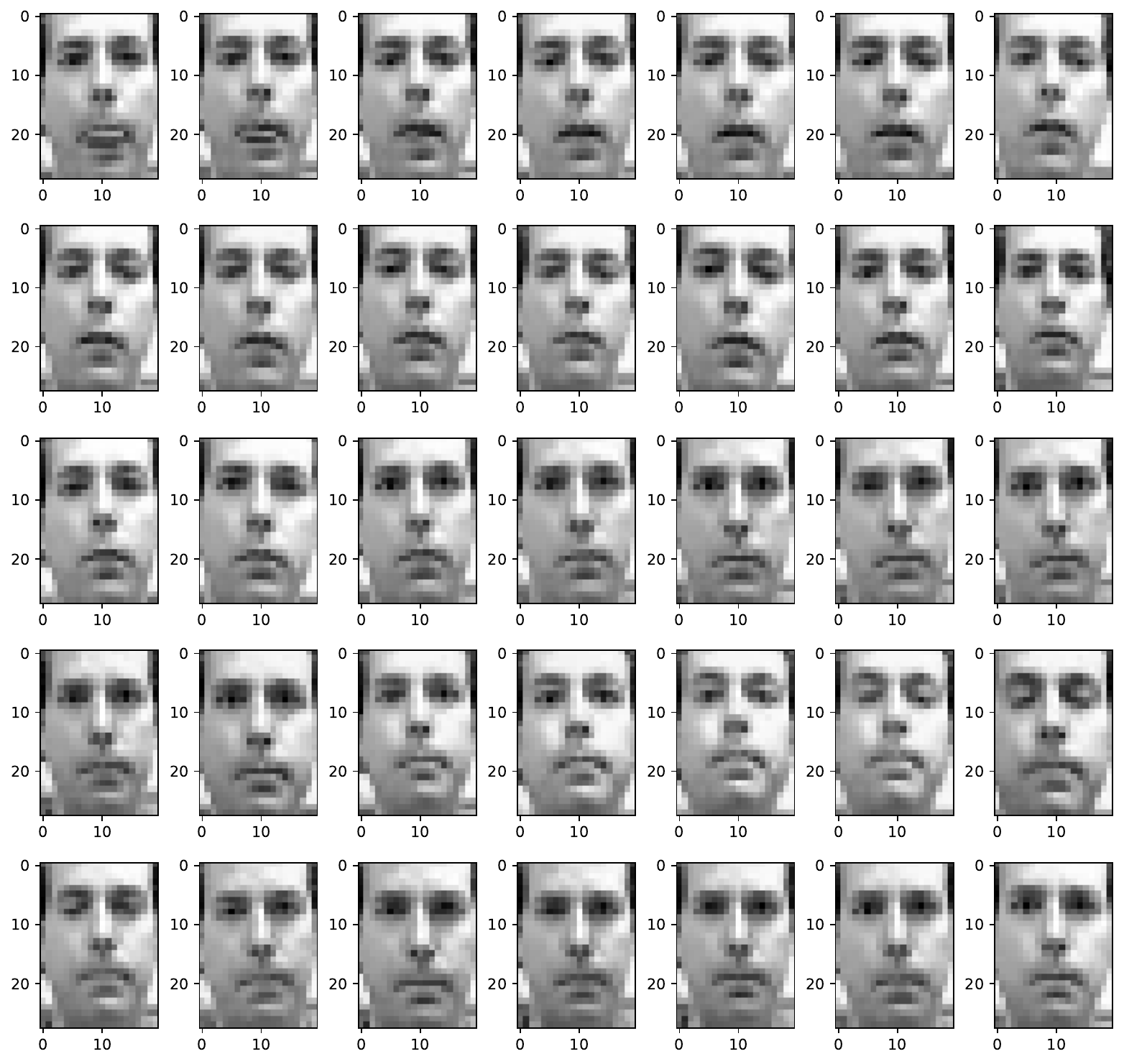} \ \ \
    \includegraphics[width=.3\linewidth]{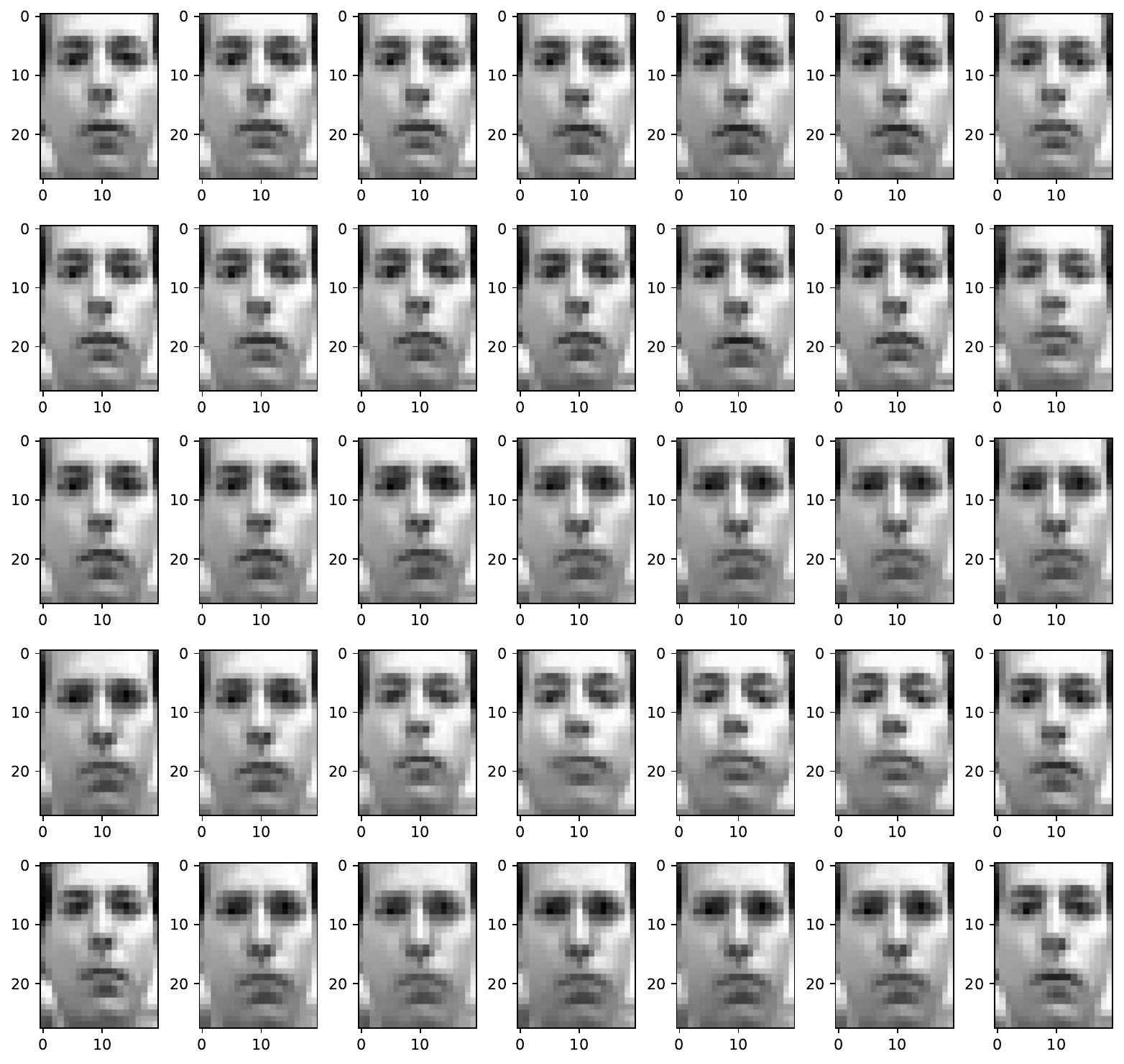}
    } \vspace{.1in}
    
    \subfloat[Brendan faces reconstruction task with 10\% missing pixels.]{
    \includegraphics[width=.3\linewidth]{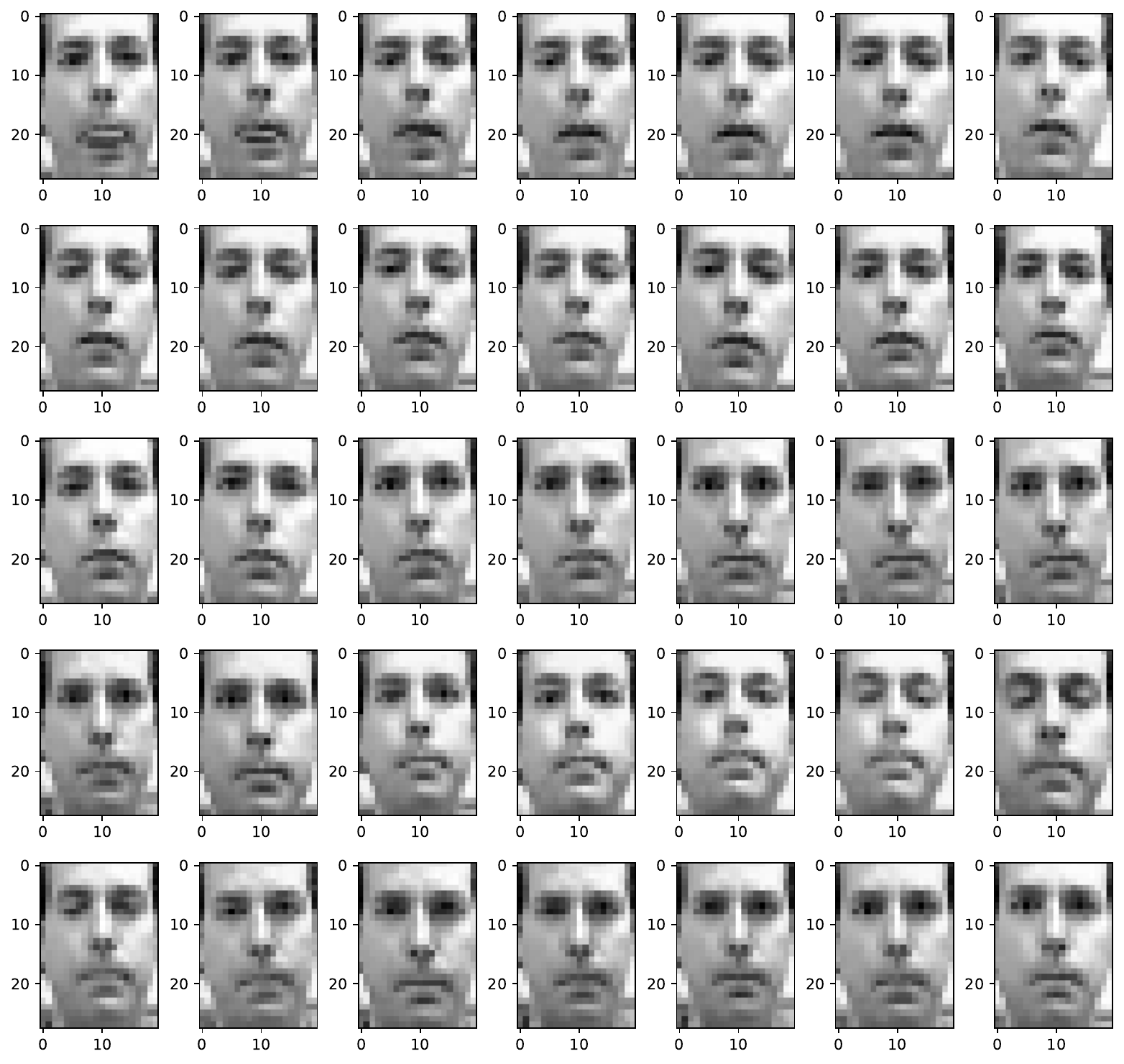} \ \ \
    \includegraphics[width=.3\linewidth]{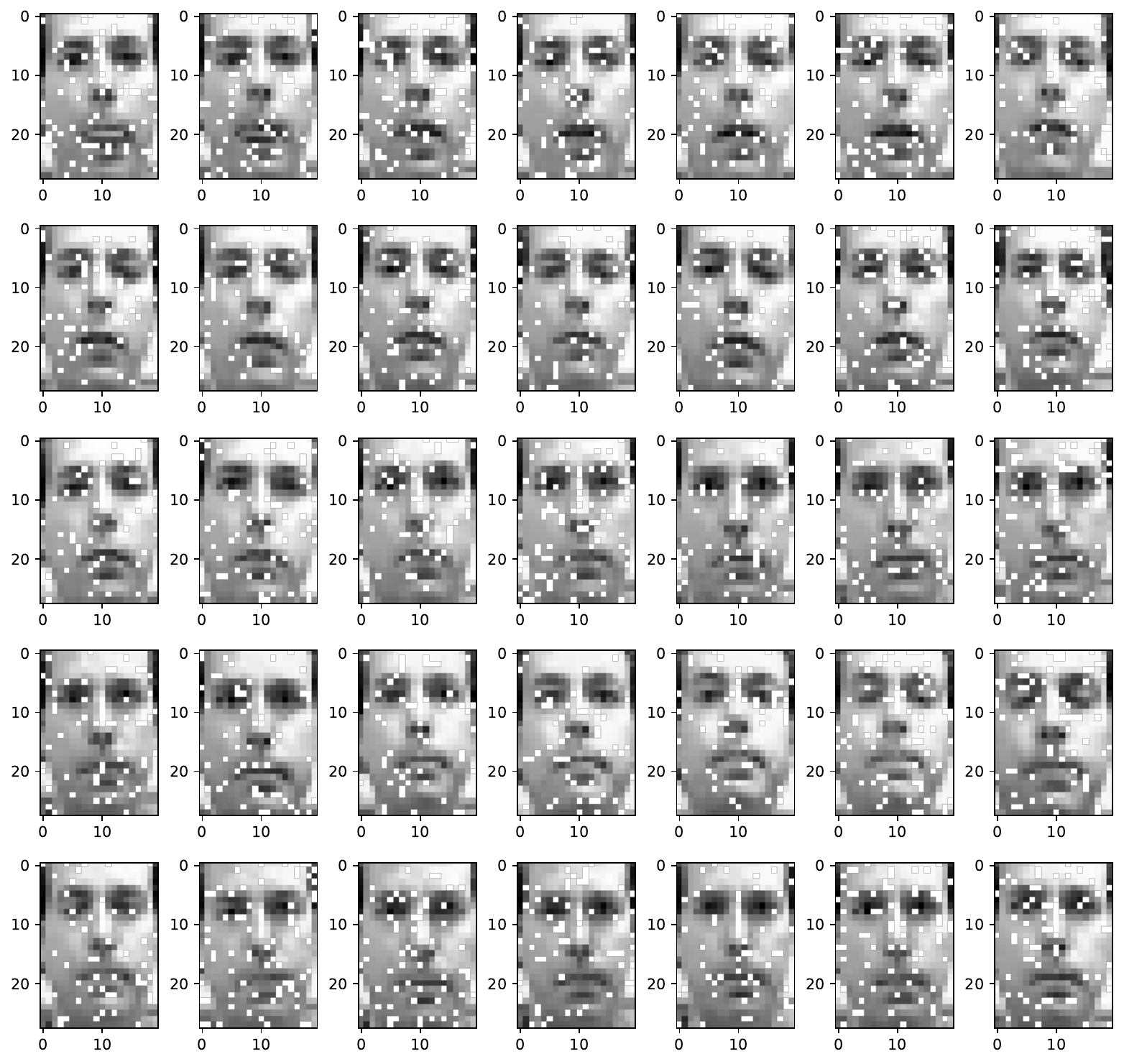} \ \ \
    \includegraphics[width=.3\linewidth]{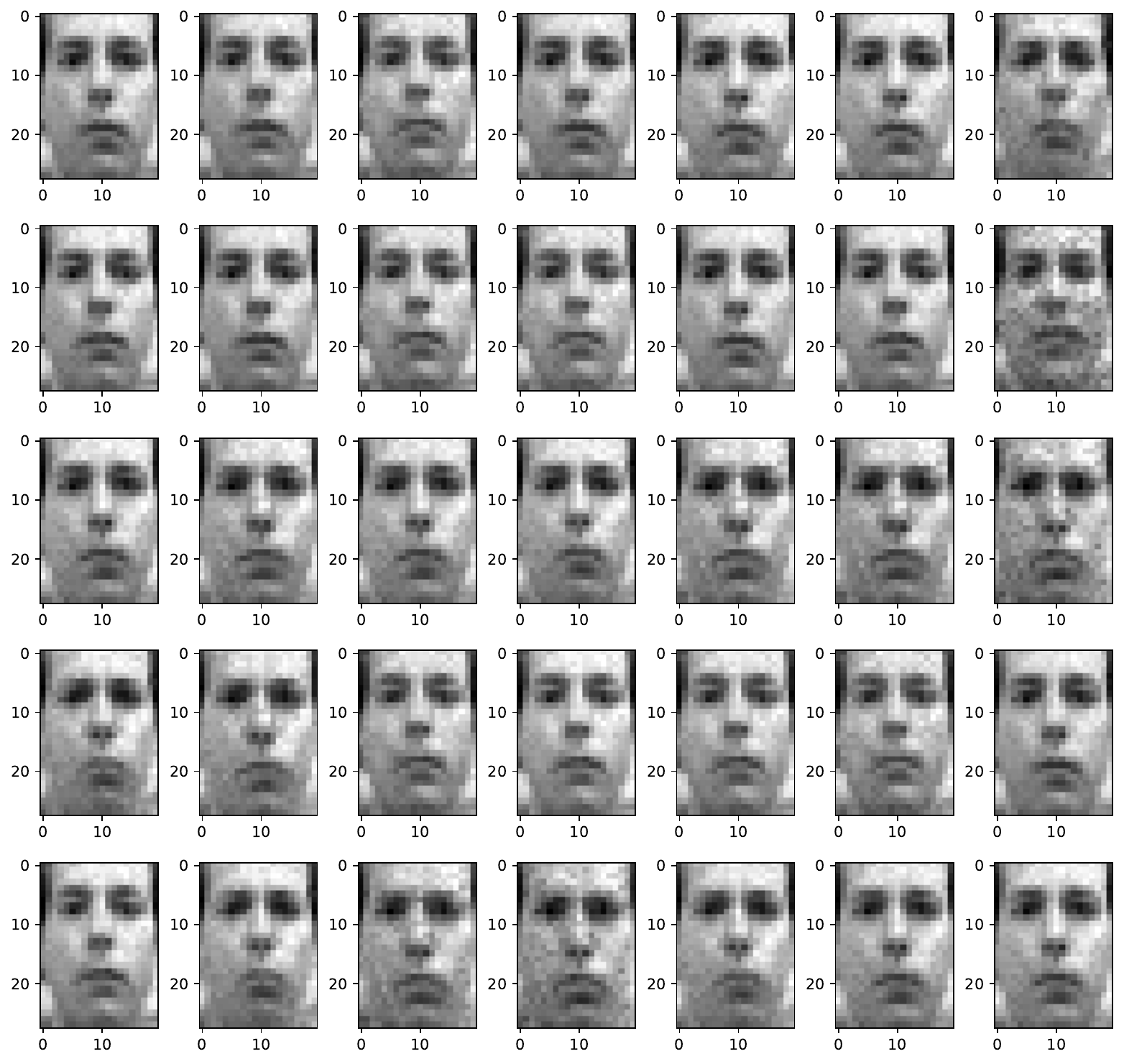}
    } \vspace{.1in}

    \subfloat[Brendan faces reconstruction task with 30\% missing pixels.]{
    \includegraphics[width=.3\linewidth]{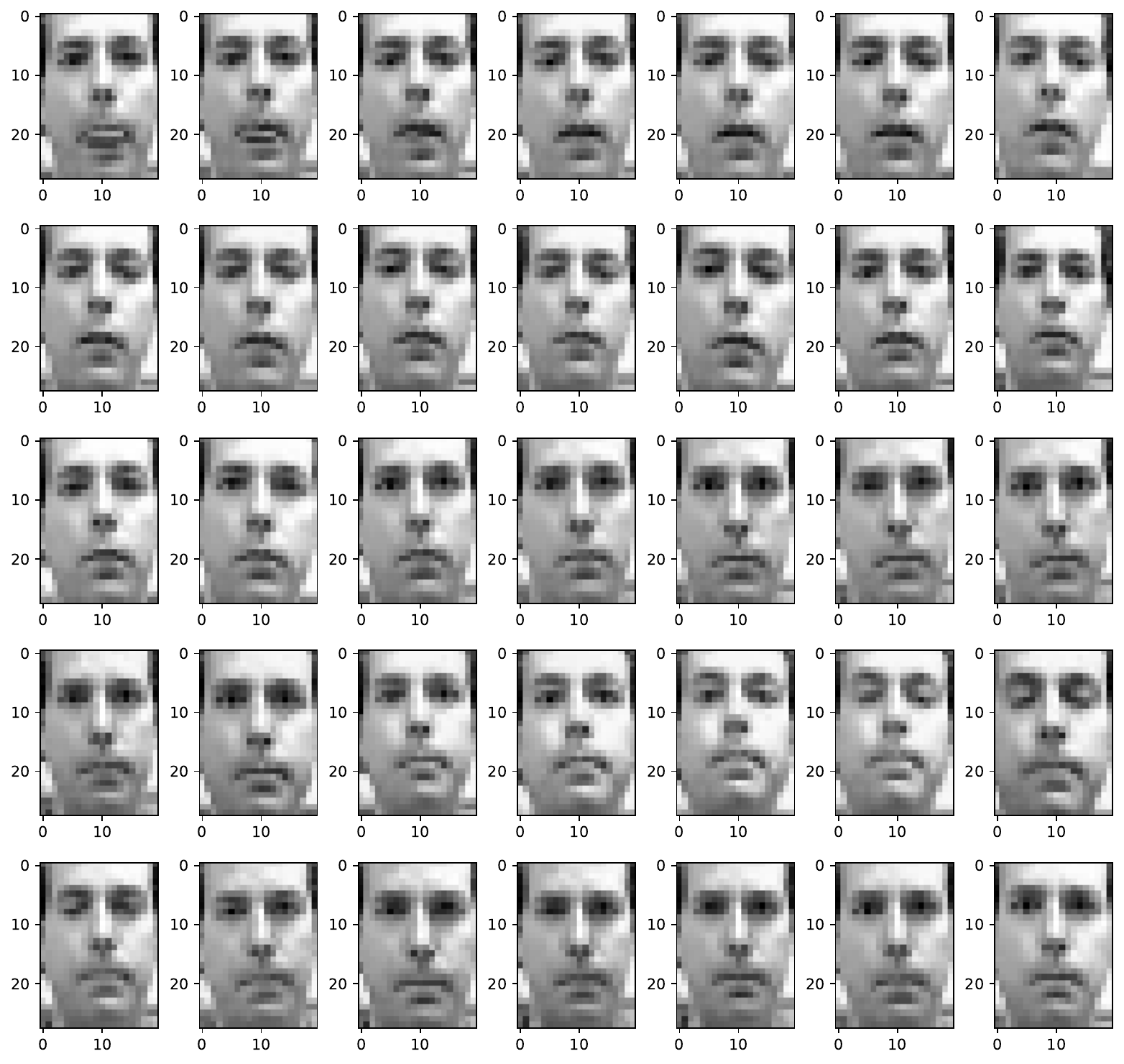} \ \ \
    \includegraphics[width=.3\linewidth]{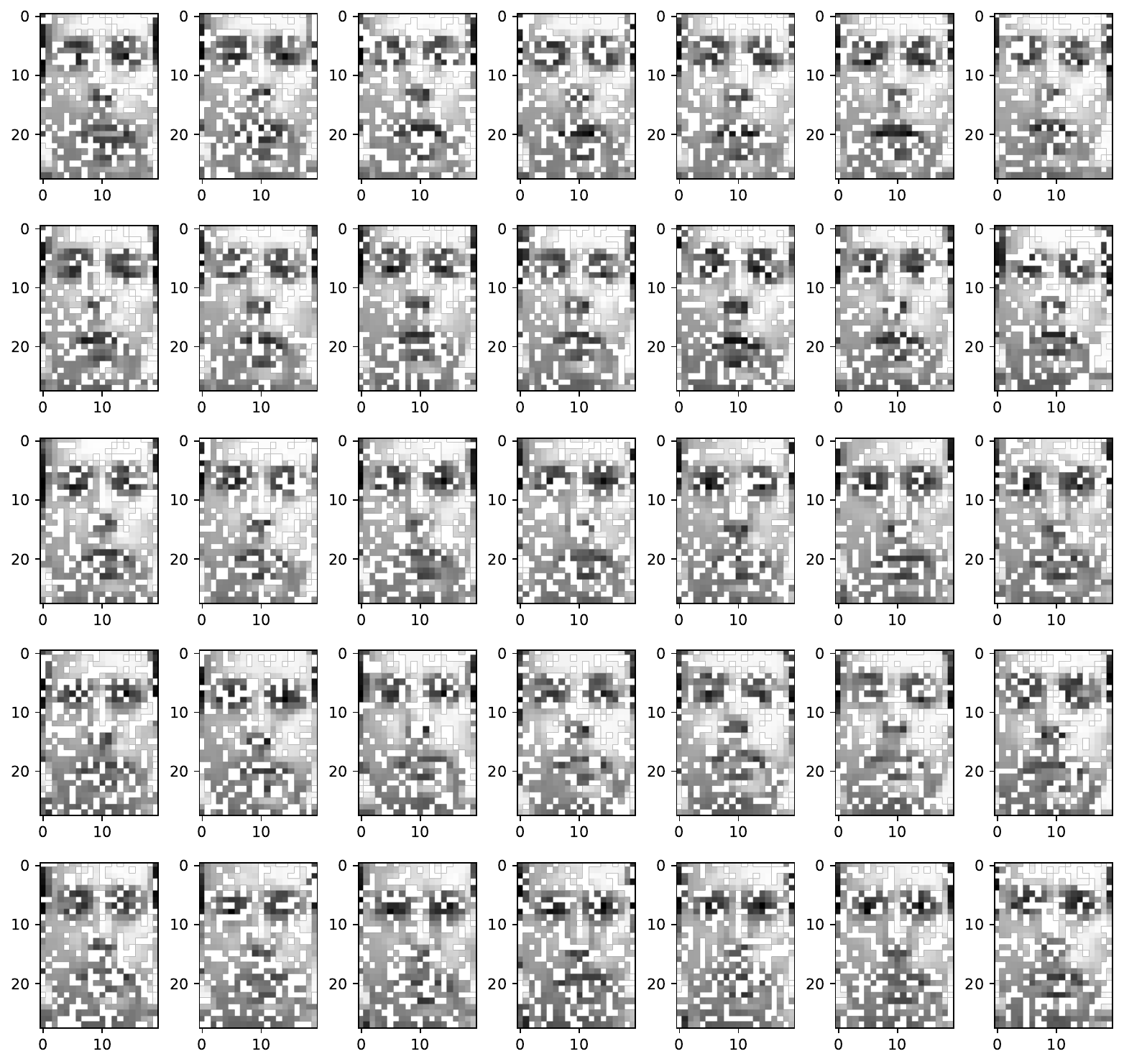} \ \ \
    \includegraphics[width=.3\linewidth]{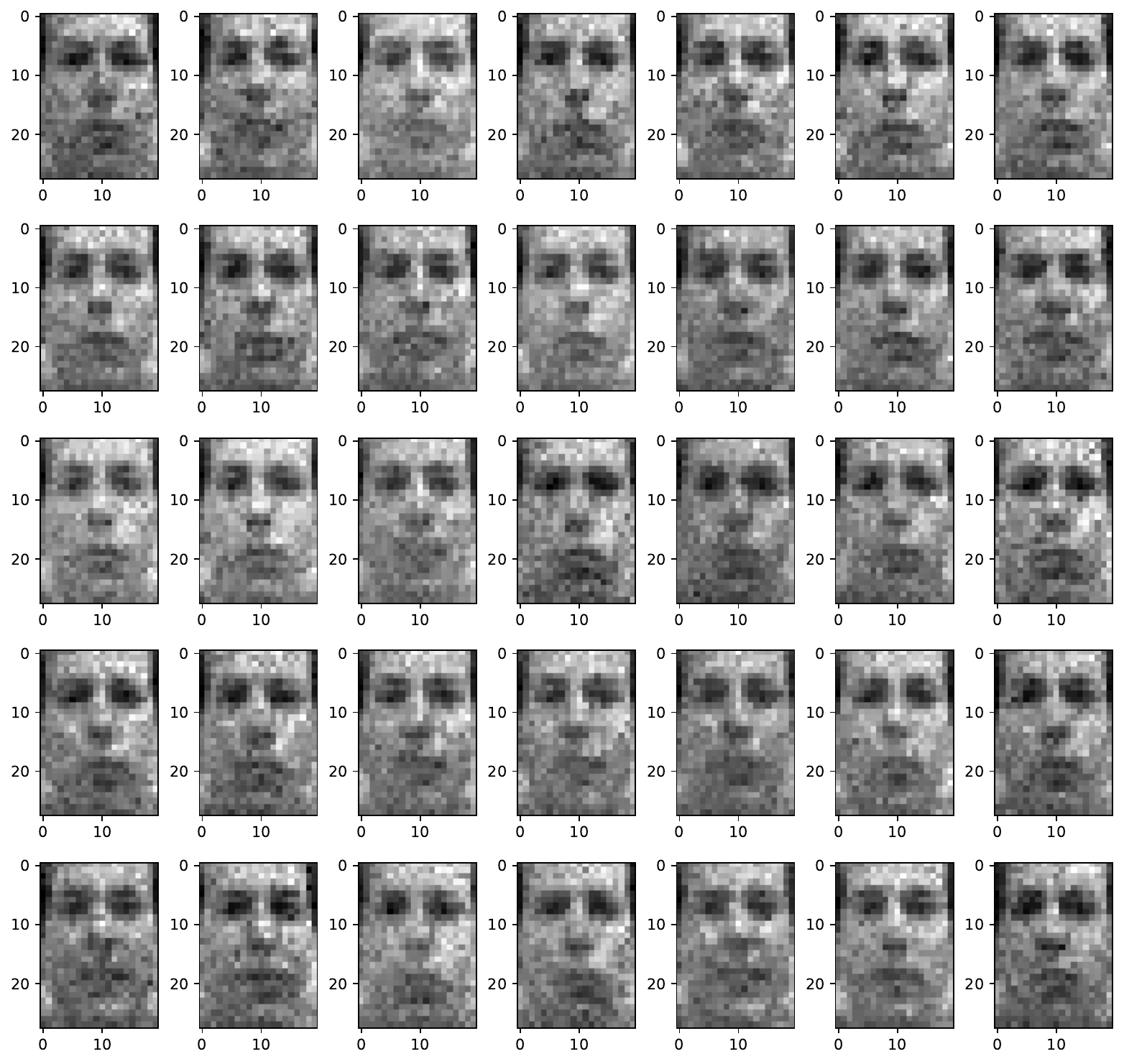}
    } \vspace{.1in}

    \subfloat[Brendan faces reconstruction task with 60\% missing pixels.]{
    \includegraphics[width=.3\linewidth]{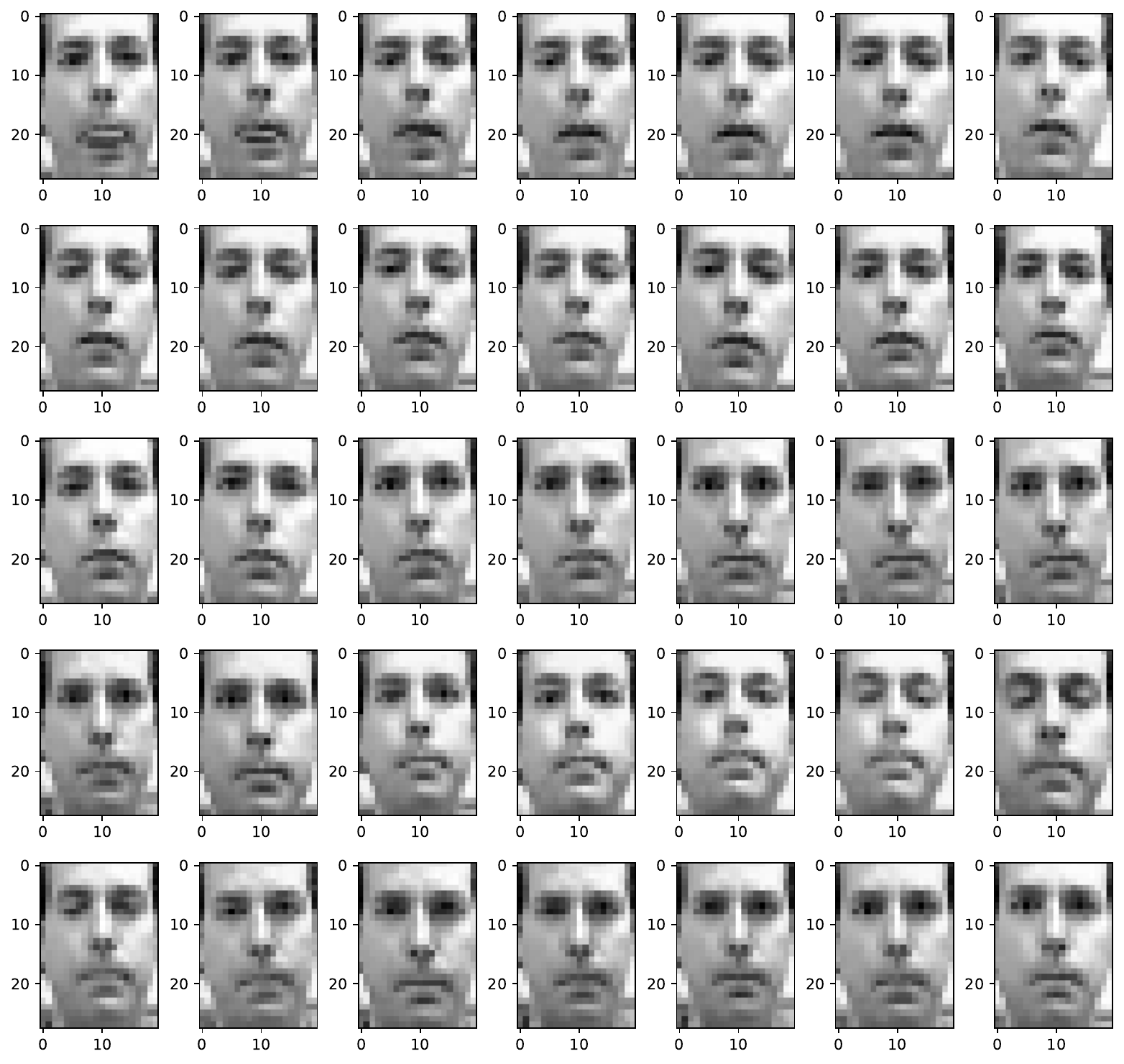} \ \ \
    \includegraphics[width=.3\linewidth]{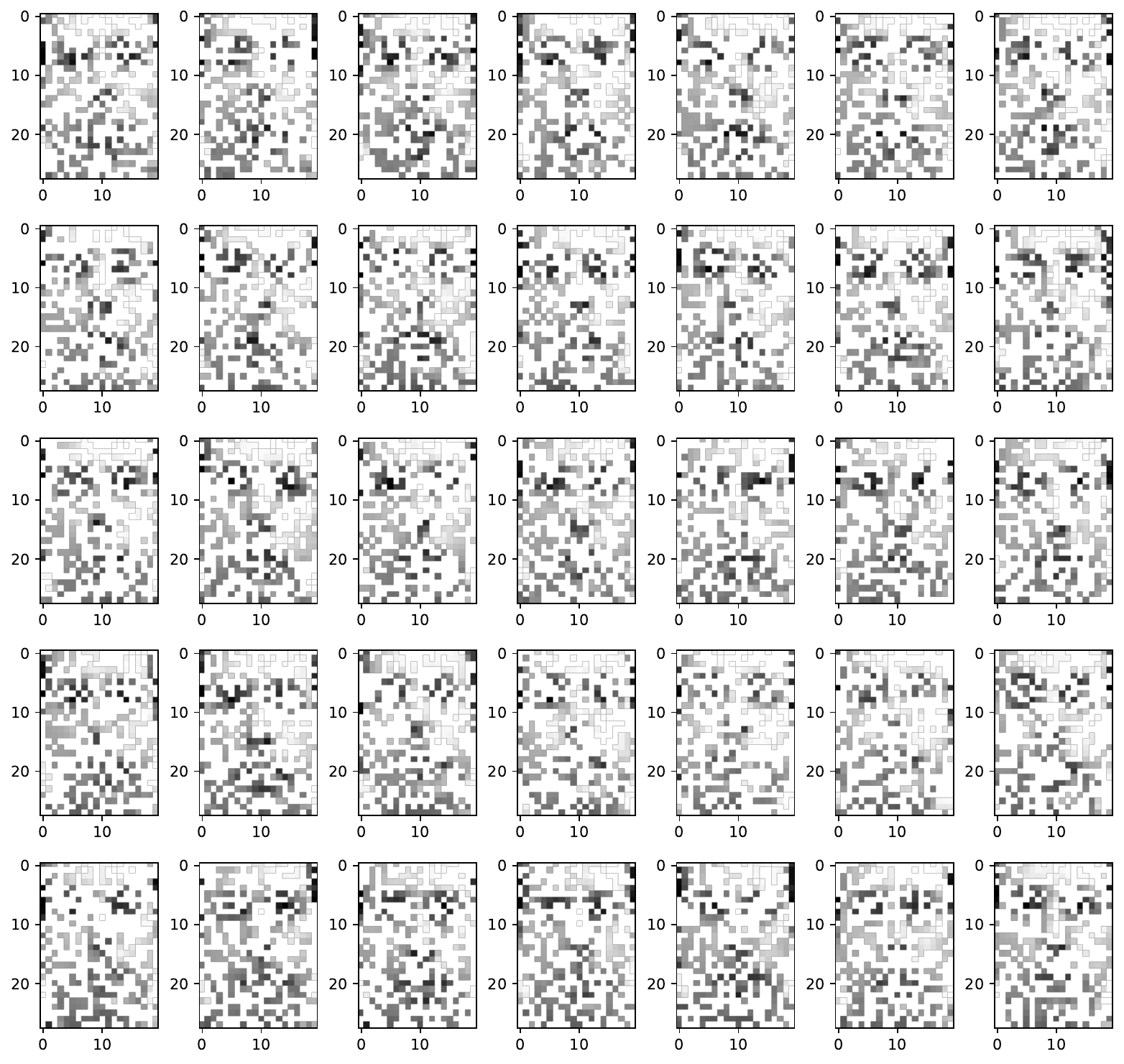} \ \ \
    \includegraphics[width=.3\linewidth]{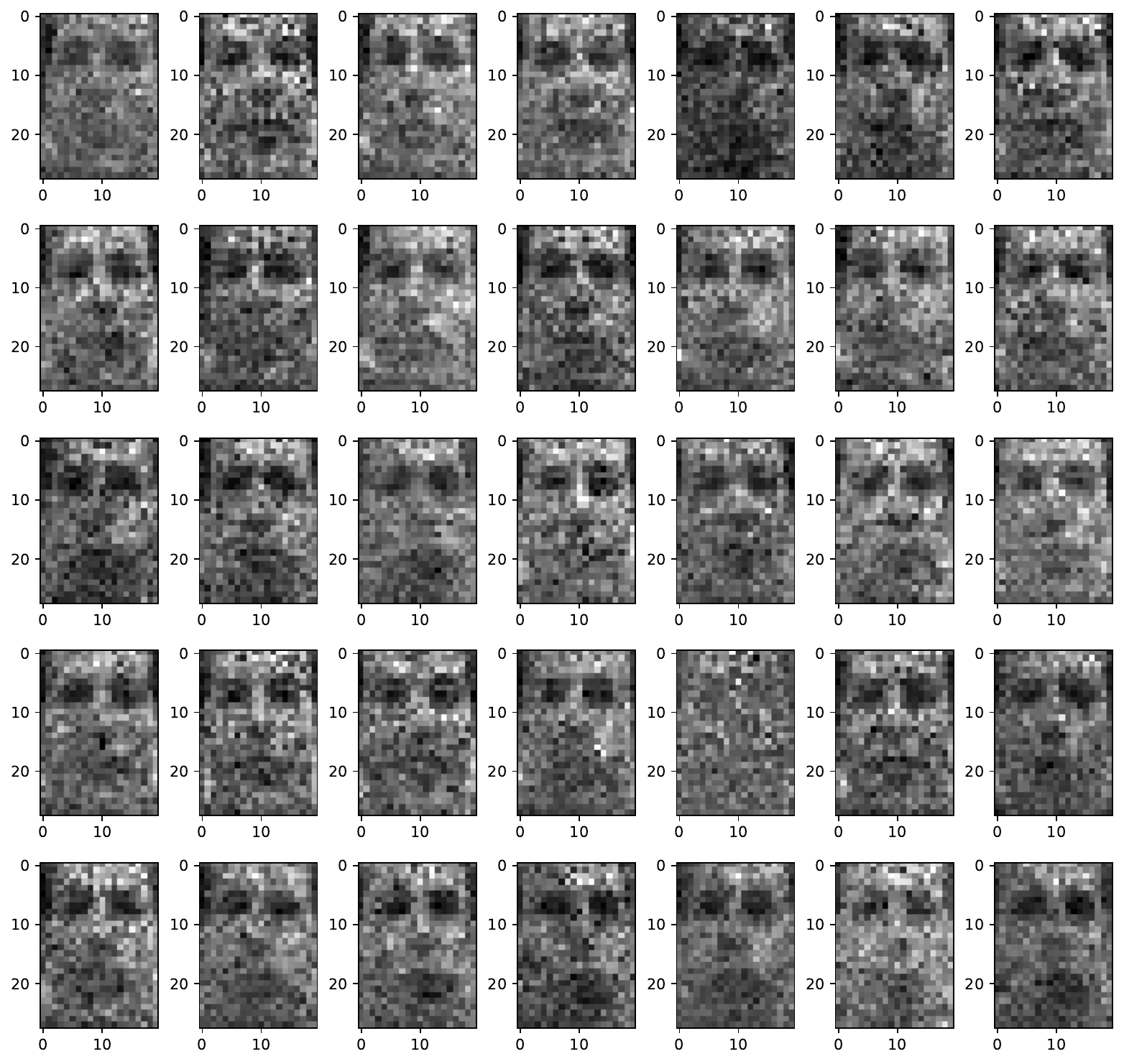}
    }  \vspace{-.1in}
    \caption{Brendan faces reconstruction task with missing pixels. From left to right: Ground truth, training images, reconstructions}
    \label{appx_fig:bface_missing_illustration}
\end{figure}

\clearpage
\subsubsection{KNN Classification Accuracy with Varying \texorpdfstring{$K$}{K}} 
\label{app:multi-K}

\begin{table}[t!]
\centering
\caption{KNN classification accuracy using different numbers of nearest neighbors ($K$ values). We ran this classification using 5-fold cross validation.} \label{tab:multi-K}
\scalebox{.63}{
\begin{tabular}{@{}l || llllllllll | llllllllll@{}}
\toprule
\multicolumn{1}{c||}{\textsc{methods}}                   & \multicolumn{10}{c|}{VAE}                                 & \multicolumn{10}{c}{\textsl{advised}RFLVM }                 \\ 
\multicolumn{1}{c||}{$K$-\textsc{value}}   & \multicolumn{1}{c}{1} & \multicolumn{1}{c}{2} & \multicolumn{1}{c}{3} & \multicolumn{1}{c}{4} & \multicolumn{1}{c}{5} & \multicolumn{1}{c}{6} & \multicolumn{1}{c}{7} & \multicolumn{1}{c}{8} & \multicolumn{1}{c}{9} & \multicolumn{1}{c|}{10} & \multicolumn{1}{c}{1} & \multicolumn{1}{c}{2} & \multicolumn{1}{c}{3} & \multicolumn{1}{c}{4} & \multicolumn{1}{c}{5} & \multicolumn{1}{c}{6} & \multicolumn{1}{c}{7} & \multicolumn{1}{c}{8} & \multicolumn{1}{c}{9} & \multicolumn{1}{c}{10} \\ \midrule \midrule  \rowcolor[HTML]{f2f2f2} 
\multicolumn{1}{c||}{\textsc{Bridges}} & 0.780 & 0.794  & 0.766  & 0.789  & 0.794  & 0.799  &  0.804  & 0.808  & 0.780  & 0.776  & 0.846                          & 0.846   & \textbf{0.902}  &  \textbf{0.902}  &  \textbf{0.907}  & \textbf{0.888}  & \textbf{0.893} & \textbf{0.898} & 0.879   & \textbf{0.903}                        \\   
  \multicolumn{1}{c||}{\textsc{Cifar}} & 0.256   &  0.260 & 0.266  & 0.274  & 0.280  & 0.282 & 0.291  & 0.296  & 0.293 & 0.300   &  \textbf{0.300} 
                                & \textbf{0.310}   & \textbf{0.309}  & \textbf{0.340}  & \textbf{0.335}  &  \textbf{0.342} & \textbf{0.350}  & \textbf{0.357}  & \textbf{0.365} & \textbf{0.358}                        \\ 
\rowcolor[HTML]{f2f2f2}  \multicolumn{1}{c||}{\textsc{Mnist}}      
& 0.631 &  0.614  & 0.657  & 0.646  & 0.677  & 0.670  & 0.674  & 0.671  & 0.673   & 0.669   & \textbf{0.801} 
&  \textbf{0.780}  & \textbf{0.819}  & \textbf{0.824}  & \textbf{0.823}  & \textbf{0.813}  & \textbf{0.812}  & \textbf{0.800}  & \textbf{0.802}  & \textbf{0.800}  \\ 
\multicolumn{1}{c||}{\textsc{montreal}}  & 0.649
& 0.655  & 0.683  & 0.662  & 0.699  & 0.705  & 0.718  & 0.718  & 0.696  & 0.712 & \textbf{0.799}                        
& 0.759  & 0.796  & 0.802  & \textbf{0.815}   &  0.787  &  0.777 & 0.755  & 0.768 & 0.759 
\\
\rowcolor[HTML]{f2f2f2} \multicolumn{1}{c||}{\textsc{yale}}  & 0.667   
&  0.667  &  0.672 & 0.667  & 0.636   & 0.630  & 0.600  & 0.576  & 0.558  & 0.552 & \textbf{0.757}   
& \textbf{0.703}  & \textbf{0.721}  & \textbf{0.745}  & \textbf{0.727}  & \textbf{0.691}  & \textbf{0.685}  & \textbf{0.673}  & \textbf{0.655}  & \textbf{0.642} 
\\ 
\multicolumn{1}{c||}{\textsc{newsgroups}} & 0.381
& 0.389  &  0.384 & 0.397  & 0.402  & 0.409  & 0.406  & 0.410  & 0.399 & 0.404    & 0.401                    
& \textbf{0.403}  & 0.399  & 0.412  & \textbf{0.419}  & 0.408  & 0.414  & 0.414  & \textbf{0.426} & 0.424                       \\ \midrule 
\multicolumn{1}{c||}{\textsc{methods}}    & \multicolumn{10}{c|}{BGPLVM}                              & \multicolumn{10}{c}{RFLVM}                           \\  
\multicolumn{1}{c||}{$K$-\textsc{value}}  & \multicolumn{1}{c}{1} & \multicolumn{1}{c}{2} & \multicolumn{1}{c}{3} & \multicolumn{1}{c}{4} & \multicolumn{1}{c}{5} & \multicolumn{1}{c}{6} & \multicolumn{1}{c}{7} & \multicolumn{1}{c}{8} & \multicolumn{1}{c}{9} & \multicolumn{1}{c|}{10} & \multicolumn{1}{c}{1} & \multicolumn{1}{c}{2} & \multicolumn{1}{c}{3} & \multicolumn{1}{c}{4} & \multicolumn{1}{c}{5} & \multicolumn{1}{c}{6} & \multicolumn{1}{c}{7} & \multicolumn{1}{c}{8} & \multicolumn{1}{c}{9} & \multicolumn{1}{c}{10}                 \\ \midrule\midrule
\rowcolor[HTML]{f2f2f2}  \multicolumn{1}{c||}{\textsc{bridges}} & 0.836   & 0.808  & 0.813  & 0.794  & 0.818  &  0.808 & 0.837  & 0.832  & 0.832  & 0.813 & \textbf{0.860}   &  \textbf{0.859}  & 0.869  & 0.892  & 0.869  & 0.869  & 0.883 & 0.888  & \textbf{0.883}  & 0.887                        \\ 
\multicolumn{1}{c||}{\textsc{cifar}}  & 0.262   & 0.278   & 0.279  & 0.293  & 0.291  & 0.295  & 0.282  & 0.290  & 0.288  & 0.294     & 0.270                  &  0.288  &  0.288  & 0.288  & 0.306  & 0.310  & 0.320  & 0.326  & 0.331  & 0.333                        \\ 
\rowcolor[HTML]{f2f2f2}  \multicolumn{1}{c||}{\textsc{mnist}}   & 0.573     & 0.585  & 0.611  & 0.622  & 0.627  &  0.636 & 0.640  & 0.645  & 0.652   & 0.648           & 0.592  &  0.567  & 0.591  & 0.603  & 0.634  & 0.624  & 0.638  & 0.634  & 0.633  & 0.633                        
\\ 
\multicolumn{1}{c||}{\textsc{montreal}}   & 0.752
&  0.771  & 0.759  &  0.771  & 0.787  & 0.778  & 0.800  & 0.800  & 0.793  & 0.787                       & 0.778
&  \textbf{0.818}  & \textbf{0.819}  & \textbf{0.844}  & 0.809  & \textbf{0.831}  & \textbf{0.806}  & \textbf{0.815}  & \textbf{0.806}  & \textbf{0.790}                        \\ 
\rowcolor[HTML]{f2f2f2}  \multicolumn{1}{c||}{\textsc{yale}} & 0.558       & 0.515  & 0.545  & 0.558  & 0.527  & 0.503  & 0.503  & 0.467  &  0.485 & 0.461                       & 0.576 &  0.515  & 0.612  & 0.564  & 0.564  & 0.576  & 0.576  & 0.558  & 0.582  &  0.576                        \\
\multicolumn{1}{c||}{\textsc{newsgroups}}       & 0.388               &  0.374  & 0.406  & 0.397   & 0.392 & 0.395   & 0.404  & 0.397   & 0.396  & 0.403               & \textbf{0.404}             
&  0.394   & \textbf{0.411}   & \textbf{0.425}  & 0.412  & \textbf{0.413}  & \textbf{0.417}  & \textbf{0.424}  & 0.416 & \textbf{0.426}                        \\ 
\bottomrule
\end{tabular}}
\end{table}

\begin{table}[t!]
\centering
\caption{KNN classification accuracy using different numbers of nearest neighbors ($K$ values) on larger datasets. We ran this classification using 5-fold cross validation.} \label{tab:large_datasets}
\scalebox{.65}{
\begin{tabular}{@{}l || llllllllll | llllllllll@{}}
\toprule
\multicolumn{1}{c||}{\textsc{methods}}                   & \multicolumn{10}{c|}{VAE}                                 & \multicolumn{10}{c}{\textsl{advised}RFLVM }                 \\ 
\multicolumn{1}{c||}{$K$-\textsc{value}}   & \multicolumn{1}{c}{1} & \multicolumn{1}{c}{2} & \multicolumn{1}{c}{3} & \multicolumn{1}{c}{4} & \multicolumn{1}{c}{5} & \multicolumn{1}{c}{6} & \multicolumn{1}{c}{7} & \multicolumn{1}{c}{8} & \multicolumn{1}{c}{9} & \multicolumn{1}{c|}{10} & \multicolumn{1}{c}{1} & \multicolumn{1}{c}{2} & \multicolumn{1}{c}{3} & \multicolumn{1}{c}{4} & \multicolumn{1}{c}{5} & \multicolumn{1}{c}{6} & \multicolumn{1}{c}{7} & \multicolumn{1}{c}{8} & \multicolumn{1}{c}{9} & \multicolumn{1}{c}{10} 
\\ \midrule\midrule 
 \rowcolor[HTML]{f2f2f2} 
 \multicolumn{1}{c||}{\textsc{f-Cifar}}   & 0.157 &	0.151 &	0.157 &	0.166	& 0.174 &	0.178 &	0.180 &	0.187 &	0.188 &	0.190 &     \textbf{0.172} & 	\textbf{0.161}	& \textbf{0.177} &	\textbf{0.181} &	\textbf{0.194}	& \textbf{0.199 }&	\textbf{0.203} &	\textbf{0.209}	& \textbf{0.213} &	\textbf{0.214}     
  \\ 
 \multicolumn{1}{c||}{\textsc{fd-Cifar}}     &  0.263	& 0.266 &	0.279 &	0.285 &	0.293	& 0.297	& 0.302	& 0.304	& 0.309 & 	0.312 &    \textbf{0.321}	& \textbf{0.323} &	\textbf{0.344} & 	\textbf{0.359} &	\textbf{0.368} &	\textbf{0.370}	& \textbf{0.377} &\textbf{0.384} &	\textbf{0.390} &	\textbf{0.391}      
  \\ 
 \rowcolor[HTML]{f2f2f2}  \multicolumn{1}{c||}{\textsc{f-Mnist}}   & 0.728  & 0.728  & 0.756  & 0.766  & 0.774  & 0.775  & 0.778  & 0.782   & 0.782  &  0.783  &  \textbf{0.794} & \textbf{0.796} & \textbf{0.831} & \textbf{0.838} & \textbf{0.845} & \textbf{0.847} & \textbf{0.850} & \textbf{0.852} & \textbf{0.851} & \textbf{0.852}             
\\ \midrule
\multicolumn{1}{c||}{\textsc{methods}}    & \multicolumn{10}{c|}{BGPLVM}                              & \multicolumn{10}{c}{Isomap}                               
\\  
\multicolumn{1}{c||}{$K$-\textsc{value}}   & \multicolumn{1}{c}{1} & \multicolumn{1}{c}{2} & \multicolumn{1}{c}{3} & \multicolumn{1}{c}{4} & \multicolumn{1}{c}{5} & \multicolumn{1}{c}{6} & \multicolumn{1}{c}{7} & \multicolumn{1}{c}{8} & \multicolumn{1}{c}{9} & \multicolumn{1}{c|}{10} & \multicolumn{1}{c}{1} & \multicolumn{1}{c}{2} & \multicolumn{1}{c}{3} & \multicolumn{1}{c}{4} & \multicolumn{1}{c}{5} & \multicolumn{1}{c}{6} & \multicolumn{1}{c}{7} & \multicolumn{1}{c}{8} & \multicolumn{1}{c}{9} & \multicolumn{1}{c}{10}                     
\\ \midrule\midrule
\rowcolor[HTML]{f2f2f2} \multicolumn{1}{c||}{\textsc{f-cifar}}     
 & 0.138 & 0.132 & 0.140 &	0.145 &	0.154 &	0.156 &	0.159 &	0.161 &	0.162	& 0.163 &  0.144 &	0.142 &	0.147 &	0.157 	& 0.159 &	0.163 & 	0.165 &	0.168	& 0.170 &	0.173  
 \\ 
 \multicolumn{1}{c||}{\textsc{fd-Cifar}}     
&  0.250	& 0.260	& 0.258 & 	0.264 &	0.279 &	0.277 &	0.281	& 0.285	& 0.287 &	0.287 &  0.264 &	0.270 &	0.279 &	0.287 &	0.291 &	0.292 &	0.296	& 0.301 &	0.305	& 0.305 
  \\ 
 \rowcolor[HTML]{f2f2f2}  \multicolumn{1}{c||}{\textsc{f-Mnist}}     
& 0.414 & 0.420	& 0.433 &	0.449 &	0.455 &	0.464	& 0.466 &	0.470	& 0.473 &	0.474 &  0.456 &	0.468 &	0.493	& 0.504 &	0.514 &	0.524 &	0.529	& 0.534	& 0.535 &	0.540
\\ \midrule 
\multicolumn{1}{c||}{\textsc{methods}}    & \multicolumn{10}{c|}{NBVAE}                              & \multicolumn{10}{c}{CVQ-VAE}                                
\\  
\multicolumn{1}{c||}{$K$-\textsc{value}}   & \multicolumn{1}{c}{1} & \multicolumn{1}{c}{2} & \multicolumn{1}{c}{3} & \multicolumn{1}{c}{4} & \multicolumn{1}{c}{5} & \multicolumn{1}{c}{6} & \multicolumn{1}{c}{7} & \multicolumn{1}{c}{8} & \multicolumn{1}{c}{9} & \multicolumn{1}{c|}{10} & \multicolumn{1}{c}{1} & \multicolumn{1}{c}{2} & \multicolumn{1}{c}{3} & \multicolumn{1}{c}{4} & \multicolumn{1}{c}{5} & \multicolumn{1}{c}{6} & \multicolumn{1}{c}{7} & \multicolumn{1}{c}{8} & \multicolumn{1}{c}{9} & \multicolumn{1}{c}{10}                     
\\ \midrule\midrule
\rowcolor[HTML]{f2f2f2} \multicolumn{1}{c||}{\textsc{f-cifar}}                        & 0.134 & 0.137 & 0.140 & 0.147 & 0.152 & 0.157 & 0.156  & 0.155 & 0.161 & 0.162 &   0.101   &   0.098   &   0.099   &   0.101   &     0.102    &   0.102   &   0.101   &   0.100   &   0.098   &   0.096   
\\  
\multicolumn{1}{c||}{\textsc{fd-Cifar}}    & 0.252 & 0.248 & 0.255 & 0.264 & 0.273 & 0.277 & 0.282  & 0.287 & 0.291 & 0.292 &   0.203   &   0.201   &   0.200   &   0.199   &     0.201    &   0.200   &   0.201   &   0.199   &   0.197   &   0.200                
  \\ 
 \rowcolor[HTML]{f2f2f2}  \multicolumn{1}{c||}{\textsc{f-Mnist}}         & 0.502 & 0.502 & 0.533 & 0.548 & 0.557 & 0.566 & 0.571  & 0.577 & 0.579 & 0.582 &   0.104   &   0.107   &   0.107   &   0.104   &     0.105    &   0.106   &   0.102   &   0.103   &   0.103   &   0.103 
 \\
\bottomrule
\end{tabular}
}
\end{table}

We have presented the KNN results with ten different choices of $K$ in Tab.~\ref{tab:multi-K}, wherein the setting of $K=1$ aligns with the configuration employed in \citep{gundersen2021latent}.  The simulation results consistently demonstrate the superiority of our method over the benchmarks regardless of the $K$ values across most datasets. In those exception cases, \adrflvm still achieves very comparable performance with RFLVM on some relatively simple datasets, e.g., \textsc{Bridges, Montreal}, and \textsc{Newsgroup} datasets.

\subsubsection{Larger Datasets Extension}
\label{app:larger_datasets}

To ensure equitable evaluation of deep learning methods, such as various VAE variants, we conducted comprehensive comparisons on larger datasets, including the full \textsc{mnist} and \textsc{cifar}  datasets. The results are summarized in Table \ref{tab:large_datasets},  where \textsc{f-cifar} and \textsc{f-mnist} represent the full \textsc{cifar} and \textsc{mnist} datasets, respectively, and \textsc{fd-cifar} denotes the full \textsc{cifar} dataset with each image downsampled to $20 \times 20$ pixels. Our empirical results demonstrate significant performance improvement for both VAE and our \adrflvm when applied to larger datasets. Notably, \adrflvm consistently outperforms the other benchmarks across datasets of varying sizes, highlighting its superiority over state-of-the-art variants irrespective of the dataset size.





\end{document}